\newcolumntype{C}[1]{>{\centering\let\newline\\\arraybackslash\hspace{0pt}}m{#1}}
\algrenewcommand\algorithmicindent{0.75em}
\newcommand{\algmargin}{\the\ALG@thistlm}
\newlength{\whilewidth}
\algnewcommand{\parState}[1]{\State%
	\parbox[t]{\dimexpr\linewidth-\algmargin}{\strut #1\strut}}
\newtheorem{myeg}{Example}
\newenvironment{claimproof}[1]{\par\textbf{Proof:}\space#1}{\hfill $\blacksquare$\par}
\newcommand{\citeay}[1]{\citeauthor{#1} (\citeyear{#1})}
\newcommand{\Car}[1]{\Pi_{#1}}
\newcommand{\con}[2]{\nobreak \text{consumers}^{#2}(#1)\allowbreak}
\newcommand{\ex}[1]{\nobreak B^{#1}\allowbreak}
\newcommand{\erp}{MRP}
\newcommand{\epopt}{\text{ex-}\allowbreak\text{post-}\allowbreak\text{efficiency}}
\newcommand{\epopta}{\text{ex-}\allowbreak\text{post-}\allowbreak\text{efficient}}
\newcommand{\eps}{MPS}
\newcommand{\gcycle}{Imp}
\newcommand{\bav}{Top}
\newcommand{\fto}{Ext}
\newcommand{\md}{\mathcal{D}}
\newcommand{\mb}{\mathcal{B}}
\newcommand{\mr}{\mathcal{R}}
\newcommand{\mP}{\mathcal{P}}
\newcommand{\mtap}{\text{MTRA}} 
\newcommand{\mtaps}{\text{MTRAs}} 
\newcommand{\Pa}[1]{Pa(#1)}
\newcommand{\prog}[1]{\nobreak\text{progress}^{#1}\allowbreak}
\newcommand{\ram}{fractional}
\newcommand{\s}{\text{supply}}
\newcommand{\sd}[1]{~\nobreak \succ^{sd}_{#1}\allowbreak~}
\newcommand{\sdef}{\text{sd-}\allowbreak\text{envy-}\allowbreak\text{freeness}}
\newcommand{\sdefa}{\text{sd-}\allowbreak\text{envy-}\allowbreak\text{free}}
\newcommand{\sdopt}{\text{sd-}\allowbreak\text{efficiency}}
\newcommand{\sdopta}{\text{sd-}\allowbreak\text{efficient}}
\newcommand{\sdnopta}{\text{sd-}\allowbreak\text{inefficient}}
\newcommand{\sdsp}{\text{sd-}\allowbreak\text{strategyproofness}}
\newcommand{\sdspa}{\text{sd-}\allowbreak\text{strategyproof}}
\newcommand{\sdwef}{\text{weak-}\allowbreak\text{sd-}\allowbreak\text{envy-}\allowbreak\text{freeness}}
\newcommand{\sdwefa}{\text{weak-}\allowbreak\text{sd-}\allowbreak\text{envy-}\allowbreak\text{free}}
\newcommand{\orfr}{\text{ordinal }\allowbreak\text{fairness}}
\newcommand{\orfra}{\text{ordinally }\allowbreak\text{fair}}
\newcommand{\sdwsp}{\text{weak-}\allowbreak\text{sd-}\allowbreak\text{strategyproofness}}
\newcommand{\sdwspa}{\text{weak-}\allowbreak\text{sd-}\allowbreak\text{strategyproof}}
\newcommand{\upivtr}{\text{upper }\allowbreak\text{invariant }\allowbreak\text{transformation}}
\newcommand{\upiv}{\text{upper }\allowbreak\text{invariance}}
\newcommand{\upiva}{\text{upper }\allowbreak\text{invariant}}
\newcommand{\supply}{\text{supply}}
\newcommand{\topb}[2]{\nobreak top^{#2}(#1)\allowbreak}
\newcommand{\ttd}{\text{\textdagger}}
\newcommand{\ttdbl}{\text{\textdaggerdbl}}
\newcommand{\ucs}{U}			
\newcommand{\vx}{{\bf{x}}}
\newcommand{\vy}{{\bf{y}}}
\newcommand{\vz}{{\bf{z}}}
\newcommand{\egd}{MGD}
\newcommand{\etoe}{\text{equal }\allowbreak\text{treatment }\allowbreak\text{of }\allowbreak\text{equals}}
\newcommand{\decoma}{\text{decomposable}}
\newcommand{\decom}{\text{decomposability}}
\title{Multi-Type Resource Allocation with Partial Preferences}
\author{
	Haibin Wang,\textsuperscript{\rm 1}
	Sujoy Sikdar,\textsuperscript{\rm 2}
	Xiaoxi Guo,\textsuperscript{\rm 1}
	Lirong Xia,\textsuperscript{\rm 3}
	Yongzhi Cao,\textsuperscript{\rm 1\rm *}
	Hanpin Wang\textsuperscript{\rm 4,\rm 1}\\
	\textsuperscript{\rm 1}Key Laboratory of High Confidence Software Technologies (MOE), \\Department of Computer Science and Technology, Peking University, China\\
	\textsuperscript{\rm 2}Computer Science \& Engineering, Washington University in St. Louis\\
	\textsuperscript{\rm 3}Department of Computer Science, Rensselaer Polytechnic Institute\\
	\textsuperscript{\rm 4}School of Computer Science and Cyber Engineering, Guangzhou University, China\\
	beach@pku.edu.cn,
	sujoy@wustl.edu,
    guoxiaoxi@pku.edu.cn,
    xialirong@gmail.com,\\
    caoyz@pku.edu.cn,
    whpxhy@pku.edu.cn,
    \textsuperscript{\rm *}Corresponding Author
}
\begin{document}
\maketitle
\begin{abstract}
We propose  multi-type probabilistic serial (\eps{}) and multi-type random priority (\erp{}) as extensions of the well-known PS and RP mechanisms to the multi-type resource allocation problems (MTRAs) with partial preferences. In our setting, there are multiple types of divisible items, and a group of agents who have partial order preferences over bundles consisting of one item of each type.
We show that for the unrestricted domain of partial order preferences, no mechanism satisfies both \sdopt{} and \sdef{}.
Notwithstanding this impossibility result, our main message is positive:
When agents' preferences are represented by acyclic CP-nets,
\eps{} satisfies \sdopt{}, \sdef{}, \orfr{}, and \upiv{},
while \erp{} satisfies \epopt{}, \sdsp{}, and \upiv{}, recovering the properties of PS and RP.
Besides, we propose a hybrid mechanism, multi-type general dictatorship (\egd{}), combining the ideas of \eps{} and \erp{}, which satisfies \sdopt{}, \etoe{} and \decom{} under the unrestricted domain of partial order preferences.
\end{abstract}

\section{Introduction}

Consider the example of rationing~\cite{Elster92:Local} two types of {\em divisible} resources: food (F) and beverage (B) among two families who have heterogeneous preferences over combinations of food and beverage they wish to consume. For example, a family may prefer water with rice, and milk with wheat. {\em How should we distribute available resources to the families fairly and efficiently?}

In this paper, we consider the problems of divisible {\em multi-type resource allocation problems (\mtaps{})}~\cite{Mackin2016:Allocating} with partial preferences. Here, there are $p\ge 1$ types of $n$ {\em divisible} items per type, with one unit of supply of each item, and a group of $n$ agents with {\em partial preferences} over receiving {\em bundles} consisting of one unit of each type. Our goal is to design mechanisms to fairly and efficiently allocate one unit of items of each type to every agent given their partial preferences over bundles. Such mechanisms are called {\em fractional mechanisms}, because an agent may receive fractions of items.

When there is one type ($p=1$), fractional mechanisms broadly fall in two classes. The random priority (RP) mechanism~\cite{Abdulkadiroglu98:Random} exemplifies the first class: agents consume their favorite remaining item one-by-one according to an ordering drawn from the uniform distribution. Each agent is allocated a fraction of each item equal to the probability that they consume the item. It is easy to see that RP satisfies \epopt{} and equal treatment of equals. Additionally, RP satisfies notions of envy-freeness and strategyproofness through the idea of {\em stochastic dominance (sd)}. Given strict preferences, a fractional allocation $p$ dominates another $q$, if at every item $o$, the total share of $o$ and items preferred to $o$ under $p$ is at least the total share under $q$. RP satisfies \sdwef{} and \sdsp{}~\cite{Bogomolnaia01:New}.

The probabilistic serial (PS) mechanism belongs to the class of ``simultaneous eating'' mechanisms~\cite{Bogomolnaia01:New}.
PS proceeds in multiple rounds. In each round, all agents simultaneously ``eat'' their favorite remaining item at a constant, uniform rate, until one of the items being consumed is exhausted. This terminates when all items are fully consumed, and the output allocates each agent with a fraction of each item that they would consume by this procedure. PS satisfies \sdopt{}, \sdef{}, and \sdwsp{}~\cite{Bogomolnaia01:New}. Besides, PS is the only mechanism that simultaneously satisfies \sdopt, \sdef, and bounded invariance~\cite{Bogomolnaia12:Probabilistic,Bogomolnaia15:Redefining}.

\begin{table*}[ht]
\centering
\caption{Properties of \erp{}, \eps{} and \egd{} under different domain restrictions on partial preferences. A ``Y" indicates that the row mechanism satisfies the column property, and an ``N" indicates that it does not. Results annotated with $\ttd$ are from~\protect\cite{Bogomolnaia01:New}, $\ttdbl$ are from~\protect\cite{Hashimoto14:Two}. Other results are proved in this paper.
}\label{tabl1}
\begin{tabular}{|c|l|*{10}{c|}}
\hline
 \multicolumn{2}{|c|}{\text{\bf Mechanism and Preference Domain}} & \text{\bf SE} & \text{\bf EPE} & \text{\bf OF} & \text{\bf SEF} & \text{\bf WSEF} & \text{\bf ETE} & \text{\bf UI} & \text{\bf SS} & \text{\bf WSS} & \text{\bf DC}\\ \hline
&\text{General partial preferences}
&   N$^\ttd$ & Y & N$^\ttdbl$ & N$^\ttd$ & Y & Y & N & N & Y & Y \\ \cline{2-12}
\text{\erp{}} & \text{CP-nets}
& N$^\ttd$ & Y & N$^\ttdbl$ & N$^\ttd$ & Y & Y & Y & Y & Y & Y \\ \cline{2-12}
& \text{Independent CP-nets}
& N$^\ttd$ & Y & N$^\ttdbl$ & N$^\ttd$ & Y & Y & Y & Y & Y & Y \\ \hline
&\text{General partial preferences}
& Y & N & N & N & Y & Y & N & N$^\ttd$ & N & N \\ \cline{2-12}
\text{\eps{}} & \text{CP-nets}
& Y & N & Y & Y & Y & Y & Y & N$^\ttd$ & N & N \\ \cline{2-12}
&\text{Independent CP-nets}
& Y & N & Y & Y & Y & Y & Y & N$^\ttd$ & Y & N \\ \hline
&\text{General partial preferences}
& Y & Y & N$^\ttdbl$ & N & N & Y & N & N & N & Y \\ \cline{2-12}
\text{\egd{}} & \text{CP-nets}
& Y & Y & N$^\ttdbl$ & N & N & Y & N & N & N & Y \\ \cline{2-12}
&\text{Independent CP-nets}
& Y & Y & N$^\ttdbl$ & N & N & Y & N & N & N & Y \\ \hline
\end{tabular}
\end{table*}

Our work is the first to consider the design of fair and efficient mechanisms for \mtaps{} with partial preferences, and the first to extend fractional mechanisms to \mtaps{} with partial preferences, to the best of our knowledge.~\citeay{Katta06:Solution} mention that PS can be extended to partial orders but we are not aware of a (formal or informal) work that explicitly defines such an extension and studies its properties.~\citeay{Monte2015:Centralized} and~\citeay{Mackin2016:Allocating} consider the problem of \mtaps{} under linear preferences, but do not fully address the issue of fairness.~\citeay{Ghodsi11:Dominant} consider the problem of allocating multiple types of resources, when the resources of each type are indistinguishable, and agents have different demands for each type of resources. However, the problem of finding fair and efficient assignments for \mtaps{} with partial preferences remains open.

Our mechanisms output {\em fractional assignments}, where each agent receives a fractional share of bundles consisting of an item of each type, which together amount to one unit per type. The fractional assignments output by our mechanisms also specify for each agent how to form bundles for consumption from the assigned fractional shares of items. Our setting may be interpreted as a special case of {\em cake cutting}~\cite{Brams96:Fair,Brams06:Better,Procaccia13:Cake}, where the cake is divided into parts of unit size of $p$ types, and $n$ parts per type, and agents have complex combinatorial preferences over being assigned combinations of parts of the cake which amount to one unit of each type.

\noindent{\bf Our Contributions.}
Our work is the first to provide fair and efficient mechanisms for \mtaps{}, and the first to extend PS and RP both to \mtaps{} and to partial preferences, to the best of our knowledge. We propose  multi-type probabilistic serial (\eps{}) and multi-type random priority (\erp{}) as the extensions of PS and RP to \mtaps{}, respectively. Our main message is positive: Under the well-known and natural domain restriction of CP-net preferences~\cite{Boutilier04:CP}, \erp{} and \eps{} satisfy all of the fairness and efficiency properties of their counterparts for single types and complete preferences.

Unlike single-type resources allocations, in \mtaps{}, not all fractional assignments are \decoma{}, where assignments can be represented as a probability distribution over assignments where each agent receives a bundle consisting of whole items. Unfortunately, the output of \eps{} may be indecomposable. In response to this, we propose a new mechanism, multi-type general dictatorship (\egd{}), which is \decoma{} and matches the efficiency of \eps{}, and satisfies equal treatment of equals.

Our technical results are summarized in Table~\ref{tabl1}. We extend {\em stochastic dominance} to compare two fractional allocations under partial preferences. Here, a fractional allocation $p$ is said to stochastically dominate another allocation $q$ w.r.t. an agent's partial preference, if at any bundle, the fractional share of weakly dominating bundles in $p$ is larger than or equal to the fractional shares of the bundles in $q$ according to her preference. Formal definitions of stochastic dominance and properties in Table~\ref{tabl1} can be found in Preliminaries.

For the unrestricted domain of general partial preferences, unfortunately, no mechanism satisfies both \sdopt{} (SE) and \sdef{} (SEF) as we prove in Proposition~\ref{prop:noboth}. Despite this impossibility result, \erp{}, \eps{} and \egd{} satisfy several desirable properties: We show in
\begin{itemize}[leftmargin=*,wide,labelindent=0pt,topsep=\parskip,itemsep=-1pt]
    \item[-] Theorem~\ref{thm:erp} that \erp{} satisfies \epopt{} (EPE), \sdwef{} (WSEF), \etoe{} (ETE), \sdwsp{} (WSS), and decomposibility (DC);
    \item[-] Theorem~\ref{thm:eps} that \eps{} satisfies \sdopt{} (SE), \etoe{} (ETE), \sdwef{} (WSEF);
    \item[-] Theorem~\ref{thm:egd} that \egd{} satisfies \sdopt{} (SE), \epopt{} (EPE), \etoe{} (ETE), and decomposibility (DC).
\end{itemize}

Remarkably, we recover the fairness properties of \eps{}, and the truthfulness and invariance properties for \erp{} and \eps{} under the well-known and natural domain restriction of acyclic CP-net preferences~\cite{Boutilier04:CP}. We show in:
\begin{itemize}[leftmargin=*,wide,labelindent=0pt,topsep=\parskip,itemsep=-1pt]
    \item[-] Theorem~\ref{thm:cperp}, that \erp{} is \sdspa{} (SS);

    \item[-] Theorem~\ref{thm:cpepsef}, that \eps{} is \sdefa{} (SEF) and \orfra{} (OF);

    \item[-] Proposition~\ref{prop:cpepsuiv} that \eps{} is \upiva{} (UI); and

    \item[-] Proposition~\ref{prop:cpepswsp} that \eps{} is \sdwspa{} (WSS) under the special case where agents' CP-nets are independent between each type.
\end{itemize}

\noindent{\bf Discussions.}
\egd{} may be viewed as a hybrid between \erp{} and \eps{}: on the one hand as a modification of \erp{} where the priorities depend on the preference profile, instead of being drawn from the uniform distribution; and on the other hand, as a modification of \eps{} where only a subset of the agents are allowed to eat simultaneously in each round. The design of~\egd{} is motivated by the natural desire for decomposability (not satisfied by \eps{}), while maintaining the stronger efficiency notion of \sdopt{} (not satisfied by \erp{}), and the basic fairness property of equal treatment of equals. Decomposable assignments are desired when sharing of items imposes overhead costs, as noted by~\cite{Sandomirskiy19:Fair}. However, \begin{enumerate*}[label=(\roman*)]\item for \mtaps{}, fractional assignments are not guaranteed to be \decoma{} as we show in assignment (\ref{m2}) of Example~\ref{eg:sd}, and indeed, \eps{} does not guarantee \decoma{} assignments; \item fractional mechanisms are inevitable when certain fairness guarantees are desired. Even when there is one type ($p=1$), no mechanism which assigns each item fully to a single agent, can satisfy the basic fairness property of equal treatment of equals, whereby, everything else being equal, agents with the same preferences should receive the same share of the resources (e.g.~two agents having identical strict preferences).\end{enumerate*} On the flip side, \egd{} does not satisfy \sdwef{}, while \erp{} and \eps{} both do.

\section{Related Work}
\mtaps{} belong to a long line of research on mechanism design for multi-agent resource allocation (see ~\cite{Chevaleyre06:Issues} for a survey), where the literature focuses on the settings with a single type of items.~\citeay{Mackin2016:Allocating} characterize serial dictatorships for \mtaps{} by strategyproofness, neutrality, and non-bossiness. The exchange economy of multi-type housing markets~\cite{Moulin95:Cooperative} is considered in~\cite{Sikdar2017:Mechanism,Sikdar18:Top} under lexicographic preferences, while~\citeay{Fujita2015:A-Complexity} consider the exchange economy where agents may consume multiple units of a single type of items under lexicographic preferences.

Our work is the first to extend RP and PS under partial preferences, to the best of our knowledge despite the vast literature on fractional assignments.
~\citeay{Hosseini19:Multiple} consider RP under lexicographic preferences. The remarkable properties of PS has encouraged extensions to several settings.~\citeay{Hashimoto14:Two} provide two characterizations of PS: (1) by \sdopt{}, \sdef{}, and \upiv{}, and (2) by \orfr{} and non-wastefulness. In~\cite{Heo14:Probabilistic,Hatfield09:Strategy-proof}, there is a single type of items, and agents have multi-unit demands. In~\cite{Saban14:Note}, the supply of items may be different, while agents have unit demand and are assumed to have lexicographic preferences. Other works extend PS to settings where indifference relationships are  allowed~\cite{Katta06:Solution,Heo15:Characterization}.~\citeay{Aziz15:Fair} consider fair assignments when indifference allowed in preferences (but not incomparabilities).~\citeay{Yilmaz2009:Random}, ~\citeay{Athanassoglou11:House} extend PS to the housing markets problem~\cite{Shapley74:Cores}.~\citeay{Bouveret10:Fair} study the complexity of computing fair and efficient allocations under partial preferences represented by SCI-nets for allocation problems with a single type of indivisible items.

\subsection{CP-net Preferences}
Compact preference representations are a common approach to deal with the preference formation and elicitation bottleneck faced in \mtaps{}, where the number of bundles grows exponentially with the number of types. CP-nets~\cite{Boutilier04:Preference} are perhaps the most well-studied and natural compact preference representation language allowing agents to express conditional (in)dependence of their preferences over combinations of different types (see Example~\ref{eg:partial}). CP-nets are an important restriction on the domain of partial preferences, and induce a partial ordering on the set of all bundles.~\citeay{Sikdar2017:Mechanism} design mechanisms for multi-type housing markets under lexicographic extensions of CP-nets. Several works in the combinatorial voting literature assume CP-net preferences~\cite{Rossi04:mCP,Lang07:Vote}, and that agents' CP-nets have a common dependence structure (see~\cite{Lang16:Voting} for a recent survey).

\section{Preliminaries}
A {\em multi-type resource allocation problem (\mtap{})}~\cite{Mackin2016:Allocating}, is given by a tuple $(N,M,R)$. Here, \begin{enumerate*}[label=(\arabic*)]\item $N=\{1,\dots,n\}$ is a set of agents. \item $M=D_1\cup\dots\cup D_p$ is a set of items of $p$ types, where for each $i\le p$, $D_i$ is a set of $n$ items of type $i$, and there is one unit of {\em \supply{}} of each item in $M$. We use $\md=D_1\times\dots\times D_p$ to denote the set of {\em bundles}. \item $R=(\succ_j)_{j\le n}$ is a {\em preference profile}, where for each $j\le n$, $\succ_j$ represents the preference of agent $j$, and $R_{-j}$ represents the preferences of agents in $N\setminus\{j\}$. We use $\mr$ to denote the set of all possible preference profiles. \end{enumerate*}

\noindent{\bf Bundles.} For any type $i\le p$, we use $k_i$ or $k_t$ to refer to the $k$-th item of type $i$ where $t$ represents the name of type $i$. Each bundle $\vx\in\md$ is a $p$-tuple, and we use $o \in \vx$ to indicate that bundle $\vx$ contains item $o$. We define $T=\{D_1,\dots,D_p\}$, and for any $S\subseteq T$, we define $\Car{S}=\bigtimes_{D\in S}D$, and $-S=T\setminus S$. For any $S\subseteq T$, $\hat S\subseteq T\setminus S$, and any $\vx\in\Car{S}, \vy\in\Car{\hat S}$, $(\vx,\vy)$ denotes the bundles consisting of all items in $\vx$ and $\vy$. For any $S\subseteq T$, $D\in T\setminus S$, and any $\vx\in S, o\in D$, $(o,\vx)$ denotes the bundles consisting of $o$ and the items in $\vx$.

\noindent{\bf Partial Preferences and Profiles.} A partial preference $\succ$ is a partial order over $\md$, which is an irreflexive, anti-symmetric, and transitive binary relation. Given a partial preference $\succ$ over $\md$, we define the corresponding {\em preference graph}, denoted by $G_\succ$, to be the directed graph whose nodes are the bundles in $\md$, and for every $\vx,\vy\in\md$, there is an directed edge $(\vx,\vy)$ if and only if $\vx\succ\vy$ and there exists no $\vz\in\md$ such that $\vx\succ\vz$ and $\vz\succ\vy$. Given a partial order $\succ$ over $\md$, we define the {\em upper contour set} of $\succ$ at a bundle $\vx\in\md$ as $\ucs(\succ,\vx)=\{\hat\vx:\hat\vx \succ \vx \text{ or } \hat\vx=\vx \}$.

\noindent{\bf Acyclic CP-nets.} A CP-net~\cite{Boutilier04:CP} $\succ$ over the set of variables $\md$ has two parts:
\begin{enumerate*}[label=(\roman*)]
	\item a directed graph $G=(T,E)$ called the \emph{dependency} graph, and
	\item for each $i\le p$, there is a {\em conditional preference table} $CPT(D_i)$ that contains a linear order $\succ^{\vx}$ over $D_i$ for each $\vx \in \Car{\Pa{D_i}}$, where $Pa(D_i)$ is the set of types corresponding to the parents of $D_i$ in $G$.
\end{enumerate*}
When $G$ is (a)cyclic we say that $\succ$ is a (a)cyclic CP-net. The partial order induced by an acyclic CP-net $\succ$ over $\md$ is the transitive closure of $\{(o,\vx,\vz) \succ (\hat o,\vx,\vz): i \le p; o,\hat o \in D_i; o \succ^{\vx} \hat o; \vx \in \Car{\Pa{D_i}}; \vz \in \Car{-(\Pa{D_i} \cup \{D_i\})}\}$. A CP-profile is a profile of agents' preferences, each of which is represented by an acyclic CP-net. We say the acyclic CP-net with a trivial dependency graph such that there is no edge in the dependency graph as {\em independent CP-net}. An {\em independent CP-profile} is a profile of agents' preferences, each of which is represented by an independent acyclic CP-net.

\begin{figure}[h]
    \centering
    \includegraphics[width=0.45\textwidth]{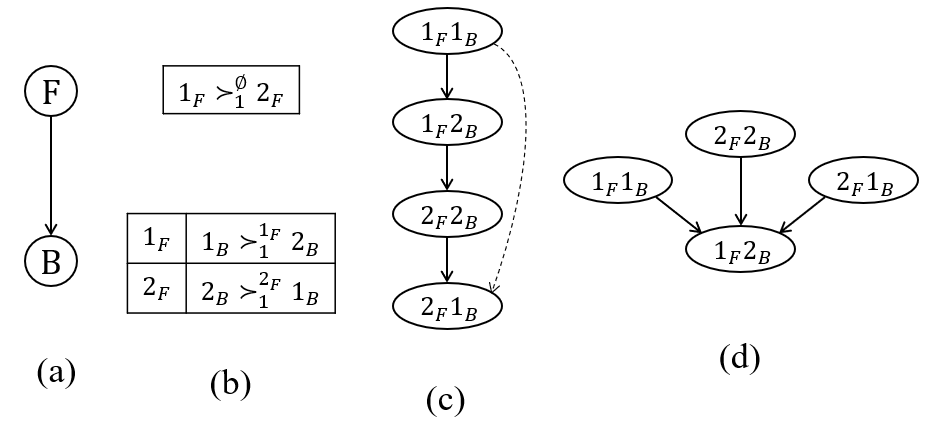}
    \caption{Agent $1$'s preferences are represented by an acyclic CP-net with dependence graph (a), and CP-tables (b), whose preference graph is in (c). Agent $2$'s preferences are represented by the preference graph in (d).}
    \label{fg:popg1}
\end{figure}

\begin{myeg}\label{eg:partial}
	Consider \mtap{} $(N,M,R)$ with $p = 2$ types, food (F) and beverage (B), where $N=\{1,2\}, M=\{1_F,2_F,1_B,2_B\}$, where $1_B$ is item $1$ of type B and so on. Let agent $1$'s preference $\succ_1$ be represented by the acyclic CP-net in Figure~\ref{fg:popg1}, where the dependency graph (Figure~\ref{fg:popg1} (a)) shows that her preference on type B depends on her assignment in type F. The corresponding conditional preference tables (Figure~\ref{fg:popg1} (b)) show that agent $1$ prefers $1_B$ with $1_F$, and $2_B$ with $2_F$. This induces the preference graph in Figure~\ref{fg:popg1} (c) which happens to be a linear order.
	Let agent $2$'s preference $\succ_2$ be represented by the preference graph in Figure~\ref{fg:popg1} (d) which represents a partial order, where $1_F2_B$ is the least preferred bundle, and providing no information on the relative ordering of other bundles.
\end{myeg}

\noindent{\bf Assignments.} A {\em discrete assignment} $A:N\to\md$ is a one to one mapping from agents to bundles such that no item is assigned to more than one agent. A {\em \ram{} allocation} shows the fractional shares an agent acquires over $\md$, represented by a vector $p = [p_\vx]_{\vx\in\md}$, $p\in [0,1]^{1\times|\md|}$ such that $\sum_{\vx\in\md}p_\vx = 1$. We use $\Pi$ to denote the set of all possible \ram{} allocations on an agent. A {\em \ram{} assignment} $P$ is a combination of all agents' \ram{} allocations, and can be represented by a matrix $P=[p_{j,\vx}]_{j\le n, \vx\in\md}$, $P\in[0,1]^{|N|\times|\md|}$, such that
\begin{enumerate*}[label=(\roman*)]
	\item for every $j\le n$, $\sum_{\vx\in\md}p_{j,\vx}=1$,
	\item for every $o\in M$, $S_o=\{\vx:\vx\in\md \text{ and }o\in\vx\}$, $\sum_{j\le n,\vx \in S_o}p_{j,\vx}=1$.
\end{enumerate*}
The $j$-th row of $P$ represents agent $j$'s \ram{} allocation under $P$, denoted $P(j)$. We use $\mP$ to denote the set of all possible \ram{} assignments.

\noindent{\bf Mechanisms.} A {\em mechanism} $f:\mr\to\mP$ is a mapping from profiles to \ram{} assignments. For any profile $R\in\mr$, we use $f(R)$ to refer to the \ram{} assignment output by $f$, and for any agent $j\le n$ and any bundle $\vx\in\md$, we use $f(R)_{j,\vx}$ to refer to the value of the element of the matrix $f(R)$ indexed by $j$ and $\vx$.

\noindent{\bf Stochastic Dominance. }
In this paper we will use the natural extension of {\em stochastic dominance}~\cite{Bogomolnaia01:New} to compare fractional allocations.
\begin{restatable}{dfn}{dfnsd}{\bf(stochastic dominance for fractional allocations)}\label{dfn:sd}
	Given a partial preference $\succ$ over $\md$, the {\em stochastic dominance} relation associated with $\succ$, denoted $\sd{\null}$ is a weak ordering over $\Pi$ such that for any pair of fractional allocations $p,q\in\Pi$, $p$ {\em stochastically dominates} $q$, denoted $p\sd{\null} q$, if and only if for every $\vx\in\md$, $\sum_{\hat\vx\in\ucs(\succ,\vx)}p_{\hat\vx}\ge\sum_{\hat\vx\in\ucs(\succ,\vx)}q_{\hat\vx}$.
\end{restatable}

We use $P\sd{j} Q$ to denote $P(j)\sd{j} Q(j)$. We write $P \sd{\null} Q$ if for every $j\le n$, we have $P\sd{j} Q$.

\begin{myeg}\label{eg:sd}
Consider the situation in Example~\ref{eg:partial} with the three \ram{} assignments (\ref{m3}), (\ref{m1}), and (\ref{m2}) shown below. The four upper contour sets for agent $1$ are $\{1_F1_B\}$, $\{1_F1_B,1_F2_B\}$, $\{1_F1_B,1_F2_B,2_F2_B\}$, and $\{1_F1_B,1_F2_B,2_F1_B,2_F2_B\}$. The allocations of the four upper contour sets for agent $1$ in assignment (\ref{m1}) are 0.5, 0.5, 1, 1, respectively and in assignment (\ref{m2}) are 0, 0.5, 0.5, 1. The former is greater than or equal to the latter respectively, and the same can be concluded for agent $2$ by considering the upper contour sets $\{1_F1_B\}$, $\{2_F2_B\}$, $\{2_F1_B\}$, and $\{1_F1_B,1_F2_B,2_F1_B,2_F2_B\}$. Hence assignment (\ref{m1}) stochastically dominates (\ref{m2}). We can check that (\ref{m1}) does not dominate (\ref{m3}), and that the reverse is true by considering the sets of $\{1_F1_B\}$ and $\{2_F1_B\}$ for agent $2$.
 	\begin{equation}\label{m3}
	\begin{tabular}{lcccc}\toprule
	&  $1_F1_B$   & $1_F2_B$  & $2_F1_B$    & $2_F2_B$  \\
	\midrule
\text{Agent	1} &  0.5  & 0.5   & 0     & 0    \\
\midrule
\text{Agent	2}	&  0    & 0 & 0.5   & 0.5\\
\bottomrule
	\end{tabular}
	\end{equation}
	\begin{equation}\label{m1}
	\begin{tabular}{lcccc}
	\toprule
	&  $1_F1_B$   & $1_F2_B$  & $2_F1_B$    & $2_F2_B$  \\
	\midrule
\text{Agent	1} &  0.5  & 0   & 0     & 0.5    \\
	\midrule
\text{Agent	2} &  0.5  & 0   & 0     & 0.5 \\
	\bottomrule
	\end{tabular}
	\end{equation}
	\begin{equation}\label{m2}
	\begin{tabular}{lcccc}
	\toprule
	&  $1_F1_B$   & $1_F2_B$  & $2_F1_B$    & $2_F2_B$  \\
	\midrule
\text{Agent	1} &  0  & 0.5   & 0.5     & 0    \\
	\midrule
\text{Agent	2} &  0.5    & 0   & 0     & 0.5\\
	\bottomrule
	\end{tabular}
	\end{equation}
\end{myeg}
\noindent{\bf Desirable Properties.} A \ram{} assignment $P$ satisfies:
\begin{enumerate*}[label=(\roman*)]
	\item {\bf \sdopt{}}, if there is no \ram{} assignment $Q\neq P$ such that $Q \sd{\null} P$,
	\item {\bf \epopt{}}, if $P$ can be represented as a probability distribution over \emph{\sdopta{}} discrete assignments,
	\item {\bf \sdef{}}, if for every pair of agents $j,\hat j\le n$, $P(j)\sd{j}P(\hat j)$,
	\item {\bf \sdwef{}}, if for every pair of agents $j,\hat j\le n$, $P(\hat j)\sd{j}P(j) \implies P(j)=P(\hat j)$,
	\item {\bf \etoe{}}, if for every pair of agents $j,\hat j\le n$ such that agents $j$ and $\hat j$ have the same preference, $P(j)=P(\hat j)$,
	\item {\bf \orfr{}}, if for every bundle $\vx \in \md$ and every pair of agents $j,\hat j\le n$ with $P_{j,\vx}>0$, $\sum_{\hat\vx\in \ucs(\succ_j, \vx)}P_{j,\hat\vx} \leq \sum_{\hat\vx\in \ucs(\succ_{\hat j}, \vx)}P_{\hat j,\hat\vx}$, and
	\item {\bf \decom{}}, if $P$ can be represented as a probability distribution over discrete assignments.
\end{enumerate*}

A mechanism $f$ satisfies $X\in$ \{\sdopt{}, \epopt{}, \sdef{}, \sdwef{}, \etoe{}, \orfr{}, \decom{}\}, if for every $R\in\mr$, $f(R)$ satisfies $X$. A mechanism $f$ satisfies:
\begin{enumerate*}[label=(\roman*)]
\item {\bf \sdsp{}} if for every profile $R\in\mr$, every agent $j\le n$, every $R'\in\mr$ such that $R'=(\succ'_j,\succ_{-j})$, it holds that $f(R)\sd{j}f(R')$, and
\item {\bf weak-sd-strategyproofness} if for every profile $R\in\mr$, every agent $j\le n$, every $R'\in\mr$ such that $R'=(\succ'_j,\succ_{-j})$, it holds that $f(R')\sd{j}f(R)\implies f(R')(j)=f(R)(j)$.
\end{enumerate*}

Given any partial preferences $\succ$, we denote $\succ\mid_\mb$ by the restriction of $\succ$ to $\mb \subseteq \md$, i.e., $\succ\mid_\mb$ is a preference relation over $\mb$ such that for all $\vx,\vy\in\mb$, $\vx \succ\mid_\mb \vy \Leftrightarrow \vx \succ \vy$. Then for any $j\le n$, $\succ'_j$ is an {\bf \upivtr{}} of $\succ_j$ at $\vx\in\md$ under a \ram{} assignment $P$ if for some $\mathcal{Z}\subseteq \{\vy \in \md \mid P_{j,\vy} = 0\}$, $\ucs(\succ'_j, \vx) = \ucs(\succ_j, \vx)\setminus \mathcal{Z}$ and $\succ'_j\mid_{\ucs(\succ'_j,\vx)} = \succ_j\mid_{\ucs(\succ'_j,\vx)}$. A mechanism $f$ satisfies {\bf \upiv{}} if it holds that $f(R)_{\hat j, \vx} = f(R')_{\hat j, \vx}$ for every $\hat j \le n$, $j\le n$, $R\in\mr$, $R'\in\mr$, and $\vx\in \md$,  such that $R'=(\succ'_j,\succ_{-j})$ and $\succ'_j$ is an \upivtr{} of $\succ_j$ at $\vx$ under $f(R)$.

\section{Mechanisms for \mtaps{} with Partial Preferences}
In this section, we propose \erp{} (Algorithm~\ref{alg:erp} as extension of RP), \eps{} (Algorithm~\ref{alg:eps}, as extension of PS), and \egd{} (Algorithm~\ref{alg:egd}), which can be seen as not only an eating algorithm but a special random priority algorithm.

The three mechanisms operate on a modified preference profile of strict preferences, where for every agent with partial preference $\succ$, an arbitrary deterministic topological sorting is applied to obtain a strict ordering $\succ'$ over $\md$, such that for any pair of bundles $\vx, \vy \in \md$, $\vx \succ \vy \implies \vx \succ' \vy$.
 Given a strict order $\succ'$ obtained in this way, and remaining $M'$, we use $\fto(\succ',M')$ to denote the first available bundle in $\succ'$, which we refer to as the agents' favorite bundle. It is easy to see that no available bundle is preferred over $\fto(\succ',M')$ according to $\succ$.

 Our results apply to arbitrary deterministic topological sortings (induced by a fixed ordering over items), even though different topological sortings may lead to different outputs.

\begin{algorithm}[h]
    \caption{\label{alg:erp}\erp{}}
    \begin{algorithmic}
    \item[\textbf{Input:}] An \mtap{} $(N,M,R)$
    \item [\textbf{Output:}] Assignment $P$
    \end{algorithmic}
    \begin{algorithmic}[1]
    \State For each $j\leq n$, compute a linear ordering $\succ'_j$ corresponding to a deterministic topological sort of $G_{\succ_j}$.
    \State $P \gets 0^{|N|\times|\md|}$ and $M' \gets M$.
    \State Pick a random priority order $\rhd$ over agents.
    \State Successively pick a highest priority agent $j^*$ according to $\rhd$. $\vx^* \gets \fto(\succ'_{j^*},M')$ and set $P_{j^*,\vx^*} \gets 1$. Remove $j^*$, and remove all items contained by $\vx^*$ in $M'$.
    \State \Return{$P$}
    \end{algorithmic}
\end{algorithm}

Given an instance of \mtap{} with agents' partial preferences, \erp{} fixes an arbitrary deterministic topological sorting $\succ'$ of agents' preferences, and sorts agents uniformly at random. Then agents get one unit of their favorite available bundle from the remaining $M'$ in turns as in RP.

\begin{algorithm}[h]
    \caption{\label{alg:eps}\eps{}}
    \begin{algorithmic}
    \item[\textbf{Input:}] An \mtap{} $(N,M,R)$
    \item [\textbf{Output:}] Assignment $P$
    \end{algorithmic}
    \begin{algorithmic}[1]
    \State  For each $j\leq n$, compute a linear ordering $\succ'_j$ corresponding to a deterministic topological sort of $G_{\succ_j}$.
    \State $P\gets 0^{|N|\times|\md|}$ and $M' \gets M$. For every $o\in M$, $\s(o)\gets 1$, $\ex{}\gets\emptyset$, $\prog{}\gets 0$.
	\While{$M'\neq\emptyset$}
	\State $\topb{j}{} \gets \fto(\succ'_j,M')$ for every agent $j\le n$.
	\parState{ {\bf Consume.}
		\begin{enumerate}[label=5.\arabic*:,leftmargin=*,topsep=\parskip]
			\item For each $o\in M'$, $\con{o}{}\gets|\{j\in N:o\text{ is in }\topb{j}{}\}|$.
			\item $\prog{}\gets \min_{o\in M'}\frac{\s(o)}{\con{o}{}}$.
			\item For each $j\le n$, $P_{j,\topb{j}{}}\gets P_{j,\topb{j}{}}+\prog{}$
			\item For each $o\in M'$, $\s^{}(o)\gets\s^{}(o)-{\prog{}\times \con{o}{}}$.
		\end{enumerate}
	}
	\State $\ex{}\gets\arg\min_{o\in M'}\frac{\s^{}(o)}{\con{o}{}}$, $M'^{}\gets M'^{}\setminus B^{}$
	\EndWhile
	\State \Return $P$
    \end{algorithmic}
\end{algorithm}

Given an instance of \mtap{} with agents' partial preferences, \eps{} involves applying the PS mechanism to a modified profile $\succ'$ over $\md$ using an arbitrary deterministic topological sorting in multiple rounds as follows. In each round, each agent consumes their favorite {\em available} bundle by consuming each item in the bundle at an uniform rate of one unit of an item per type per unit of time, until one of the bundles being consumed becomes unavailable because the \supply{} for one of the items in it is exhausted.

\begin{algorithm}[h]
    \caption{\label{alg:egd}\egd{}}
    \begin{algorithmic}
    \item[\textbf{Input:}] An \mtap{} $(N,M,R)$
    \item [\textbf{Output:}] Assignment $P$
    \end{algorithmic}
    \begin{algorithmic}[1]
    \State For each $j\leq n$, compute a linear ordering $\succ'_j$ corresponding to a deterministic topological sort of $G_{\succ_j}$.
    \State $P \gets 0^{|N|\times|\md|}$ and $M' \gets M$.
    \For{ $j = 1$ to $n$}
    	\State $\topb{j}{} \gets \fto(\succ'_j,M')$.
    	\State For each $j'\in Group(j,\succ')$, let $$P_{j',\topb{j}{}}\gets \frac{1}{|Group(j,\succ')|}$$
    	\State $M'^{}\gets M'^{}\setminus \{o\in M:o\text{ is in }\topb{j}{}\}$.
    \EndFor
    \State \Return{$P$}
    \end{algorithmic}
\end{algorithm}

Given an instance of \mtap{} with agents' partial preferences, \egd{} proceeds by operating on an arbitrary deterministic topological sorting $\succ'$. Let $Group(j, \succ')$ denote the set of all agents who have the same order with agent $j$ in $\succ'$. \egd{} proceeds in $n$ rounds as follows. In each round $j \leq n$, agent $j$ comes and invites other agents with the same topological sort $\succ'_j$ to consume her favorite available bundle $\fto(\succ'_j,M')$ with $\frac{1}{|Group(j,\succ')|}$ unit of time and one unit of eating rate. We present \egd{} as its eating algorithm version here and show that $\egd{}(R)$ is the expected result of a special random priority algorithm when we prove its \decom{}.

\begin{myeg}\label{eg:mwork}
Consider the situation in Example~\ref{eg:partial}, the topological sort of agent $1$'s preference is $1_F1_B \succ' 1_F2_B \succ' 2_F2_B \succ' 2_F1_B$ but both $2_F1_B \succ' 1_F1_B \succ' 2_F2_B \succ' 1_F2_B$ and $1_F1_B \succ' 2_F2_B \succ' 2_F1_B \succ' 1_F2_B$ can be the topological sort of agent $2$'s preference. If the topological sort of agent $2$ is the former, in \eps{}, agent $1$ consumes $1_F1_B$ and agent $2$ consumes $2_F1_B$ at the beginning. When they both consume $0.5$ fraction, $1_B$ is exhausted. $1_F1_B$ and $2_F1_B$ become unavailable. They turn to identify the next bundles in line $4$. For example, the first two bundles $2_F1_B$ and $1_F1_B$ are unavailable for agent $2$, so she turns to the third bundle $2_F2_B$. Agent $1$ turns to $1_F2_B$. Then \eps{} goes to next round of consumption until all left items are exhausted at the same time. The result is shown in assignment (\ref{m3}) in Example~\ref{eg:sd}. However, if the topological sort of agent $2$ is the latter, agent $1$ and agent $2$ both get $0.5$ of $1_F1_B$ and $0.5$ of $2_F2_B$ as assignment (\ref{m1}) shows. It is easy to check that \erp{} has the same conclusion. Suppose agent $1$ has the same partial preference with agent $2$ in Example~\ref{eg:partial} in \egd{}. If their topological sorts are both the former, agent $1$ invites agent $2$ to consume $2_F1_B$ in round 1 and agent $2$ invites agent $1$ to consume $1_F2_B$ in round 2. But if their topological sorts are both the latter, agent $1$ invites agent $2$ to consume $1_F1_B$ in round 1 and agent $2$ invites agent $1$ to consume $2_F2_B$ in round 2.
\end{myeg}

There is an unique best available bundle w.r.t. any acyclic CP-net preference and remaining supply of items, which can be computed in polynomial time by induction on the types according to the dependency graph as we show in Proposition~\ref{prop:acyclicCP}. This is an extension of the well-known result of~\citeay{Boutilier04:CP} that there is a unique best bundle w.r.t. any acyclic CP-net.

\begin{restatable}{prop}{propacyclicCP}\label{prop:acyclicCP}
	Let $T'=\{D'_1\subseteq D_1,\dots,D'_p\subseteq D_p\}$, $\md'=\Car{T'}$, and let $\succ$ be any acyclic CP-net over $\md$. Then, there exists unique $\vx\in\md'$ such that for every $\vy\neq\vx\in\md'$, $\vx\succ\vy$.
\end{restatable}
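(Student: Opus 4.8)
The plan is to construct the claimed maximum explicitly by a forward sweep over the types in the order dictated by the dependency graph, and then check separately that it dominates everything else in $\md'$ and that it is the unique element with this property. Since the dependency graph $G$ of the acyclic CP-net $\succ$ is a DAG, first fix a topological order $D_{i_1},\dots,D_{i_p}$ of its vertices, so that every parent of $D_{i_\ell}$ lies among $D_{i_1},\dots,D_{i_{\ell-1}}$. Assuming each $D'_i\neq\emptyset$ (otherwise $\md'=\emptyset$ and there is nothing to show), I would define the candidate $\vx^*\in\md'$ inductively: having already fixed the coordinates of $\vx^*$ on $D_{i_1},\dots,D_{i_{\ell-1}}$, the parent tuple $\mathbf{u}_\ell\in\Car{\Pa{D_{i_\ell}}}$ read off from $\vx^*$ is already determined, so let $\vx^*_{i_\ell}$ be the $\succ^{\mathbf{u}_\ell}$-maximum element of $D'_{i_\ell}$; this is well defined because $\succ^{\mathbf{u}_\ell}$ is a linear order on $D_{i_\ell}$ and $D'_{i_\ell}$ is a nonempty finite subset. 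By construction $\vx^*\in\md'$, and the construction is polynomial time, which also yields the efficient algorithm referred to just before the statement.

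Uniqueness is then immediate: the partial order $\succ$ induced by the CP-net is irreflexive and transitive, hence asymmetric, so if two distinct bundles each dominated all other elements of $\md'$ we would obtain a two-cycle, a contradiction. So the substance of the proof is to show $\vx^*\succ\vy$ for every $\vy\in\md'$ with $\vy\neq\vx^*$. I would do this by induction on $d(\vy)$, the number of types on which $\vy$ and $\vx^*$ disagree. The case $d(\vy)=0$ is vacuous, so take $d(\vy)\ge 1$ and let $D_{i_\ell}$ be the disagreeing type of smallest index in the fixed topological order. Then $\vy$ and $\vx^*$ agree on $D_{i_1},\dots,D_{i_{\ell-1}}$, hence on all parents of $D_{i_\ell}$, so the parent tuple of $D_{i_\ell}$ in $\vy$ is exactly $\mathbf{u}_\ell$; since $\vy_{i_\ell}\in D'_{i_\ell}$ and $\vy_{i_\ell}\neq\vx^*_{i_\ell}$, the choice of $\vx^*_{i_\ell}$ as the $\succ^{\mathbf{u}_\ell}$-top of $D'_{i_\ell}$ gives $\vx^*_{i_\ell}\succ^{\mathbf{u}_\ell}\vy_{i_\ell}$. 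Let $\vy'$ be $\vy$ with its $D_{i_\ell}$-coordinate replaced by $\vx^*_{i_\ell}$, everything else unchanged. Writing $\vy'=(\vx^*_{i_\ell},\mathbf{u}_\ell,\vz)$ and $\vy=(\vy_{i_\ell},\mathbf{u}_\ell,\vz)$ with $\vz\in\Car{-(\Pa{D_{i_\ell}}\cup\{D_{i_\ell}\})}$ the common value on the remaining types, this pair is literally one of the generators of the partial order induced by the acyclic CP-net, so $\vy'\succ\vy$. Moreover $\vy'\in\md'$ and $d(\vy')=d(\vy)-1$, so by the inductive hypothesis $\vy'=\vx^*$ or $\vx^*\succ\vy'$; either way transitivity gives $\vx^*\succ\vy$, closing the induction and hence the proof with $\vx:=\vx^*$.

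I expect no serious obstacle: the only care needed is bookkeeping. One must note that the first disagreeing type in topological order has all parents already matched (which is precisely the defining property of a topological order), that each flip stays inside $\md'$ (because the new coordinate $\vx^*_{i_\ell}$ was chosen from $D'_{i_\ell}$), and that a single flip of this shape is exactly one of the pairs listed in the definition of the partial order induced by an acyclic CP-net. This is the standard ``sweep'' argument showing an acyclic CP-net has a unique optimum, with the sole modification that at each type we select the best \emph{available} item rather than the globally best one, and with the domination claim recovered by chaining single-type improving flips.
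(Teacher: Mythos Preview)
Your proposal is correct and follows essentially the same approach as the paper: both construct the optimum by a forward sweep along a topological order of the dependency graph, picking the best available item of each type given the already-fixed parent values, and both prove domination of any other $\vy\in\md'$ by chaining single-type improving flips along that topological order (the paper writes out the chain explicitly, you phrase it as induction on the number of disagreements). Your uniqueness argument via asymmetry of $\succ$ is a slight variation on the paper's observation that the construction is deterministic, but both are immediate.
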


All missing proofs can be found in Appendix.

We use $\bav(\succ,M')$ to denote the best available bundle in remaining $M'$ given an acyclic CP-net $\succ$. Under the domain restriction of acyclic CP-net preferences, for any topological sorting algorithms, $\fto(\succ'_j,M')$ is exact $\bav(\succ_j,M')$ for any $M'$ and $j\leq n$ by Proposition~\ref{prop:acyclicCP}. Therefore, we can remove line $1$ and instead $\fto(\succ'_j,M')$ with $\bav(\succ_j,M')$ in the three mechanisms to save the time and space.

\begin{restatable}{prop}{propcost}\label{prop:cost}
	\erp{}, \eps{} and \egd{} run in $O(n^{p+1})$ time.
\end{restatable}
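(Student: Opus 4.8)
The plan is to start from the observation that each of the three mechanisms returns the full fractional assignment matrix $P\in[0,1]^{|N|\times|\md|}$, and since every bundle is a $p$-tuple of items, one per type, $|\md|=n^{p}$, so this matrix has $n^{p+1}$ entries. Hence initialising $P$ to $0$ in Line~2 of each algorithm — and returning it — already takes $\Theta(n^{p+1})$, and it suffices to show that the remaining work is $O(n^{p+1})$ (with the number of types $p$ treated as a constant). Throughout I would keep the remaining supply $M'$ as a length-$pn$ bit vector, so deletions and availability tests are $O(1)$, and I would treat the topological sort of Line~1 as producing, for each agent $j$, a list of the $n^{p}$ bundles in $\succ'_j$-order; under the acyclic CP-net domain Line~1 is not needed, since by Proposition~\ref{prop:acyclicCP} the favourite available bundle $\fto(\succ'_j,M')=\bav(\succ_j,M')$ is obtained directly by a single pass over the $p$ types, in time polynomial in $n$ and $p$ and hence dominated by $n^{p+1}$.

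For \erp{} (Algorithm~\ref{alg:erp}) and \egd{} (Algorithm~\ref{alg:egd}) the count is immediate: each consists of exactly $n$ rounds, and in each round only one agent's favourite available bundle is computed, which by scanning that agent's sorted list (or via $\bav$) costs $O(n^{p})$; writing the new matrix entries costs $O(1)$ in \erp{} and $O(n)$ in \egd{} (one entry per member of $Group(j,\succ')$), and deleting the $p$ consumed items costs $O(1)$. So the loops cost $O(n\cdot n^{p})=O(n^{p+1})$. For \egd{} I would additionally precompute the partition of the agents into the classes $Group(\cdot,\succ')$ once, by hashing the $n$ sorted lists (each of length $n^{p}$), in $O(n^{p+1})$; drawing the random priority order in \erp{} costs $O(n\log n)$ and is absorbed. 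Combined with the $\Theta(n^{p+1})$ output cost this yields $O(n^{p+1})$ for both mechanisms.

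The only nontrivial case is \eps{} (Algorithm~\ref{alg:eps}). First I would bound the number of while-loop iterations: each iteration deletes at least one item from $M'$ in Line~6, because as long as $M'\neq\emptyset$ some type still has positive total remaining supply, so every agent still has a favourite available bundle and hence at least one remaining item $o$ has $\con{o}{}\ge 1$, making the minimum in Line~5.2 finite and attained; since $|M|=pn$, there are at most $pn=O(n)$ iterations. A naive implementation that recomputes all $n$ agents' favourites from scratch each iteration would cost $O(n\cdot n^{p})$ per iteration, i.e.\ $O(n^{p+2})$ overall, which is too much, so instead I would maintain one forward-only pointer per agent into its sorted bundle list: an agent's pointer is advanced only when its current favourite becomes unavailable, and only past bundles that are themselves no longer available, so over the whole run each of the $n$ pointers sweeps its length-$n^{p}$ list at most once — $O(n^{p+1})$ total pointer work — while the counters $\con{o}{}$ are maintained incrementally as pointers move. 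The rest of each iteration (computing $\prog{}$, updating $\s$, and adding $\prog{}$ to the at most $n$ entries $P_{j,\topb{j}{}}$) is $O(|M'|+n)=O(n)$, hence $O(n^{2})$ over all iterations. Summing: $O(n^{p+1})$.

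The main obstacle I expect is exactly this amortised analysis of \eps{} — setting up the pointer/counter bookkeeping so that the aggregate favourite-finding cost is $O(n^{p+1})$ rather than $O(n^{p+2})$ (or worse, if favourites were recomputed from scratch), together with the termination argument that bounds the number of while-loop iterations by $pn$. For \erp{} and \egd{} the argument is routine once the $\Theta(n^{p+1})$ output-size observation is in place.
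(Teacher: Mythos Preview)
Your proposal is correct and follows essentially the same approach as the paper: topological sorting in $O(n^{p+1})$, the per-agent forward-pointer/single-scan trick to make all calls to $\fto$ in \eps{} cost $O(n^{p+1})$ in aggregate, the $pn$ bound on \eps{} rounds via item removal, and a one-time $O(n^{p+1})$ precomputation of the $Group$ partition for \egd{}. The only cosmetic differences are that you justify the bound via the $\Theta(n^{p+1})$ output size whereas the paper points to the $O(n^{p+1})$ input size, and you hash the sorted lists to form the $Group$ classes while the paper uses a radix-style bucket refinement over the $n^{p}$ positions.
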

We note that the size of the preference representation is $O(n^{p+1})$, and forms a part of the input.

\section{Properties under General Partial Preferences}

\begin{restatable}{thm}{thmerp}\label{thm:erp}
    Under general partial preferences, $\erp{}$ satisfies \epopt{}, \sdwef{}, \etoe{}, \sdwsp{}, and decomposability.
\end{restatable}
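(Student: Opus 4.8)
The key observation is that $\erp{}(R)$ is the uniform mixture $\frac{1}{n!}\sum_{\rhd}A_{\rhd}$ over the $n!$ discrete assignments $A_{\rhd}$, where $A_{\rhd}$ is the outcome of running serial dictatorship under priority order $\rhd$ on the topologically sorted profile: at her turn, agent $j$ receives one unit of $\fto(\succ'_j,M')$, the first still-available bundle in her topological order. Since every $A_{\rhd}$ is a discrete assignment, decomposability is immediate. For \etoe{}, if $\succ_j=\succ_{\hat j}$ then $\succ'_j=\succ'_{\hat j}$ (the topological sort is a fixed deterministic function of the preference), so the run under $\rhd$ and the run under the order $\rhd'$ obtained by swapping $j$ and $\hat j$ coincide up to that relabelling, whence $A_{\rhd'}(\hat j)=A_{\rhd}(j)$; averaging over $\rhd$ gives $\erp{}(R)(j)=\erp{}(R)(\hat j)$.

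For \epopt{} it suffices to prove that each $A_{\rhd}$ is \sdopta{}. Write $\rhd=(j_1,\dots,j_n)$, suppose $Q\sd{\null}A_{\rhd}$ for some fractional assignment $Q$, and show by induction on $i$ that $Q(j_i)$ is the point mass on $A_{\rhd}(j_i)$. Given this for all $i'<i$, the supply constraints force $Q(j_i)$ to be supported on the set $\md'_i$ of available bundles (those whose items are all still unassigned after rounds $1,\dots,i-1$ of the $A_{\rhd}$ run); by construction $A_{\rhd}(j_i)=\fto(\succ'_{j_i},M'_i)$ is $\succ_{j_i}$-maximal in $\md'_i$, so $\ucs(\succ_{j_i},A_{\rhd}(j_i))\cap\md'_i=\{A_{\rhd}(j_i)\}$, and evaluating $Q\sd{j_i}A_{\rhd}$ at the bundle $A_{\rhd}(j_i)$ forces $Q(j_i)_{A_{\rhd}(j_i)}\ge 1$. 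Hence $Q=A_{\rhd}$, so $A_{\rhd}$ is \sdopta{} and $\erp{}(R)$ is \epopta{}.

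The two remaining properties are the heart of the theorem, and they hold only in the \emph{weak} form for the same structural reason: each agent consumes the best available bundle of her topological sort $\succ'$ rather than of her true partial order $\succ$, which destroys the stronger sd-envy-freeness / sd-strategyproofness comparisons. For \sdwef{} I would fix $j,\hat j$, pair each priority order $\rhd$ with the order $\rhd'$ swapping $j,\hat j$, and let $\vy^{\star}$ be $j$'s $\succ'_j$-best bundle over all of $M$, so $\vy^{\star}$ is $\succ_j$-maximal and $\ucs(\succ_j,\vy^{\star})=\{\vy^{\star}\}$. The key fact is that whoever of $j,\hat j$ finds $\vy^{\star}$ available at her turn grabs it; hence $A_{\rhd}(\hat j)=\vy^{\star}$ forces $\hat j$ to precede $j$ in $\rhd$, and then in $\rhd'$ agent $j$ occupies $\hat j$'s earlier slot, faces the same predecessors and the same remaining items, and also takes $\vy^{\star}$. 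This injection yields $\erp{}(R)_{j,\vy^{\star}}\ge\erp{}(R)_{\hat j,\vy^{\star}}$; if the inequality is strict, $\erp{}(R)(\hat j)\sd{j}\erp{}(R)(j)$ already fails at $\vy^{\star}$ and \sdwef{} holds vacuously for the pair, while if it is an equality the injection is a bijection, pinning $\hat j$'s behaviour at $\vy^{\star}$ to $j$'s, and I would then close by induction on $n$, conditioning on the first agent of $\rhd$ (a first agent outside $\{j,\hat j\}$ reduces to the induction hypothesis on the residual $(n-1)$-agent instance). \sdwsp{} runs in parallel: for a fixed misreport $\succ'_j$, in every realization $\rhd$ agent $j$ faces exactly the same remaining items as under truthtelling (the other agents are unaffected) and receives a bundle that, being available, is not $\succ_j$-preferred to $\fto(\succ'_j,M')$; comparing the two induced allocations of $j$ at the contour set $\{\vy^{\star}\}$ and again inducting on $n$ shows a misreport cannot strictly sd-dominate the truth.

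I expect the third paragraph to contain essentially all the difficulty. The single-round facts---an agent ahead never keeps a $\succ_j$-inferior bundle, a deviator never loses a $\succ_j$-superior available bundle---only rule out strict dominance ``realization by realization, up to incomparability''; what must be excluded is that many $\succ_j$-incomparable swaps across different realizations aggregate into a strict stochastic-dominance gain. The top-contour-set inequality pins down the highest bundle and the conditioning-on-the-first-agent induction propagates it, but the equality case---showing the two random allocations actually coincide rather than merely agreeing on $\vy^{\star}$---is where the real bookkeeping lies; I expect it to reduce, after successively peeling off top bundles, to the \etoe{}-style relabelling argument used above.
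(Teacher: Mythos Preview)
Your treatment of decomposability, \etoe{}, and \epopt{} matches the paper's proof essentially verbatim. The swapping bijection $\rhd\leftrightarrow\rhd'$ and the top-bundle inequality $P_{j,\vy^\star}\ge P_{\hat j,\vy^\star}$ are also exactly the starting moves the paper makes for \sdwef{}, and the observation that under any fixed $\rhd$ agent $j$ faces the same availability set under $R$ and $R'$ is the correct opening for \sdwsp{}.

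The gap is your induction variable. Inducting on $n$ by conditioning on the first agent of $\rhd$ does not work cleanly: the global hypothesis $P(\hat j)\sd{j}P(j)$ (resp.\ $P'\sd{j}P$) does \emph{not} descend to each conditional sub-instance, so you cannot invoke the induction hypothesis on the residual $(n-1)$-agent problem; and you give no argument for the cases where the first agent is $j$ or $\hat j$. The paper instead inducts on the \emph{bundles} along agent $j$'s topological sort $\vx_1\succ'_j\vx_2\succ'_j\cdots\succ'_j\vx_{n^p}$. For \sdwef{} it carries the invariant $F(\vx_k)$: for every paired $(\rhd,\rhd')$, either $j$ gets $\vx_k$ under one order and $\hat j$ under the other, or neither gets $\vx_k$ under either. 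The two ``case'' observations you used at $\vy^\star$ give $P_{j,\vx_k}\ge P_{\hat j,\vx_k}$; the sd-dominance assumption plus the already-established equalities $P_{j,\vx_i}=P_{\hat j,\vx_i}$ for $i<k$ (and $\ucs(\succ_j,\vx_k)\subseteq\{\vx_1,\dots,\vx_k\}$) force equality, which in turn upgrades the injection to the full invariant $F(\vx_k)$ needed at the next step. For \sdwsp{} the paper runs a finite ascending chain along $\succ'_j$: if $\vx_0$ is the $\succ'_j$-first bundle with $P'_{j,\vx_0}\ne P_{j,\vx_0}$, one finds a priority order where the misreport yields $\vx_0$ but truth yields some $\vx_1\succ'_j\vx_0$, then (using $P'_{j,\vx_1}=P_{j,\vx_1}$) another order producing $\vx_2\succ'_j\vx_1$, and so on up to the top of $\succ'_j$, where the chain terminates in a contradiction.

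Your own closing remark---``after successively peeling off top bundles''---is precisely this bundle-by-bundle induction, so you have the right intuition; but the formal proposal should replace the induction on $n$ with the induction along $\succ'_j$, tracking $F(\vx_k)$ for \sdwef{} and the first-disagreement/chain argument for \sdwsp{}.
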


The proof of \sdwef{} involves showing that for any two agents $j$ and $\hat j$, agents $j$ and $\hat j$ receive equal shares of every bundle in expectation, due to a bijective mapping from the set of orders where $j$ picks before $\hat j$ and vice-versa.

\begin{restatable}{rmk}{rmkthmerp}\label{rmk:thm:erp}
	Under general partial preferences, \erp{} is not \upiva{} and \sdspa{}.
\end{restatable}

Given an assignment $P$ and a partial preference profile $R$, for any $\vx,\hat\vx \in \mathcal{D}$, $(\vx,\hat\vx)$ is an \emph{improvable tuple}, denoted by $\gcycle(P,R)$, if there exists an agent $j < n$ such that $\vx \succ_j \hat\vx$ and $p_{j,\hat\vx}>0$. We use $\gcycle(P)$ for short when the preferences are clear from the context. \citeay{Bogomolnaia01:New} show that an assignment $P$ is \sdopta{} if and only if the binary relation $\gcycle(P)$ has no cycle in single-type resources allocations, but the sufficient condition fails to hold for \mtaps{}. As Example~\ref{eg:sd} shows, assignment (\ref{m2}) is \sdnopta{}, but the set of the improvable tuples, $\{(1_F1_B,1_F2_B), (1_F1_B,$ $2_F1_B),$ $(1_F2_B,2_F1_B), (2_F2_B,2_F1_B)\}$, has no cycle.

\begin{restatable}{thm}{thmeps}\label{thm:eps}
	Under general partial preferences, $\eps{}$ satisfies  \sdopt{}, \sdwef{}, and \etoe{}.
\end{restatable}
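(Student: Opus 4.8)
The plan is to establish the three properties separately, in increasing order of difficulty: \etoe{}, then \sdwef{}, then \sdopt{}.

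\textbf{\etoe{}.} If agents $j$ and $\hat j$ have the same partial preference $\succ_j = \succ_{\hat j}$, then since line~1 of \eps{} applies the same deterministic topological sort to both, they obtain the identical linear order $\succ'_j = \succ'_{\hat j}$. An easy induction on the rounds of the \texttt{while} loop then shows $P(j) = P(\hat j)$: at the start of any round the two agents have identical partial allocations and face the same remaining set $M'$, so $\topb{j}{} = \topb{\hat j}{}$, and the \texttt{Consume} step adds the same $\prog{}$ to the same coordinate for both. Hence $P(j) = P(\hat j)$ throughout.

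\textbf{\sdwef{}.} This will follow from the stronger standard fact that \eps{} is in fact \sdefa{} \emph{with respect to the strict orders} $\succ'_j$. First I would show: for every pair $j,\hat j$, $P(j)$ stochastically dominates $P(\hat j)$ according to $\succ'_j$ — this is the classical envy-freeness argument for PS, applied to the modified strict profile: whenever agent $\hat j$ consumes a bundle $\vy$ with $\vy \succ'_j \topb{j}{}$ at some time, the supply of some item of $\vy$ must already be exhausted (otherwise $\topb{j}{}$ would not be agent $j$'s favorite available bundle), and $j$ has been consuming bundles $\succ'_j$-weakly-above that level throughout; a time-integration argument gives the sd-inequality at every bundle. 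Since $\succ_j \subseteq \succ'_j$ as orders, $\ucs(\succ_j,\vx) \supseteq \ucs(\succ'_j,\vx)$ is \emph{not} immediate in the right direction, so I would instead argue directly that sd-dominance w.r.t.\ $\succ'_j$ implies sd-dominance w.r.t.\ $\succ_j$: for any $\vx$, $\ucs(\succ_j,\vx)$ is a union of sets of the form $\ucs(\succ'_j, \cdot)$ — more carefully, $\ucs(\succ_j,\vx) = \{\vx\} \cup \bigcup\{\vz : \vz \succ_j \vx\}$ and each such $\vz$ lies $\succ'_j$-above $\vx$, so $\ucs(\succ_j,\vx)$ is a \emph{lower-closed} set is false; rather I use that the sum over $\ucs(\succ'_j,\vx)$ dominates termwise is what PS gives. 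The clean route: \eps{} run on $\succ'$ coincides with PS on $\succ'$, which is \sdefa{} w.r.t.\ each $\succ'_j$; and sd-envy-freeness w.r.t.\ a refinement $\succ'_j$ of $\succ_j$ implies \sdwef{} w.r.t.\ $\succ_j$, because if $P(\hat j) \sd{j} P(j)$ under $\succ_j$ then, combined with $P(j) \succ'^{sd}_j P(\hat j)$, both allocations put equal mass on every $\succ'_j$-upper set, forcing $P(j) = P(\hat j)$. I would spell this last implication out as the key lemma.

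\textbf{\sdopt{}.} Suppose some $Q \neq P = \eps{}(R)$ satisfies $Q \sd{\null} P$ under $R$. Since $\succ_j \subseteq \succ'_j$, any such $Q$ also satisfies $Q \succ'^{sd}_j P$ for every $j$: indeed $Q \sd{j} P$ under $\succ_j$ does \emph{not} directly give the $\succ'_j$ inequality, so this is exactly the step needing care — I would instead argue that $P$ is \sdopta{} \emph{w.r.t.\ $\succ'$} by the classical PS efficiency theorem (no $Q$ sd-dominates the PS outcome under strict preferences), and then transfer: if $Q \sd{\null} P$ under $R$ with $Q \neq P$, pick an agent $j$ with $Q(j) \neq P(j)$; I claim $Q(j) \succ'^{sd}_j P(j)$ fails for some $j$ yet $Q \sd{j} P$ under $\succ_j$ for all $j$ — to close this I need that sd-dominance under the partial order, when it is strict for some agent, can be "completed" to sd-dominance under \emph{some} topological refinement, and in particular under the one \eps{} picks; this uses that the upper contour sets of $\succ'_j$ that matter are generated by bundles on which $P(j)$ is positive, together with the structure of how PS fills upper contour sets greedily. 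This transfer argument is the main obstacle: the naive inclusion of upper contour sets goes the wrong way, so I must exploit the specific greedy/leximin structure of the \eps{} outcome (each agent's allocation is the $\succ'_j$-leximin-optimal feasible allocation given the others) rather than a black-box monotonicity of sd under order refinement. Everything else is routine once that lemma is in place.
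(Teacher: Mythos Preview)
Your \etoe{} argument is fine and matches the paper. Your \sdwef{} route---prove $P(j)$ sd-dominates $P(\hat j)$ with respect to the linear extension $\succ'_j$ via the eating/time argument, then show that this together with $P(\hat j)\sd{j}P(j)$ under the partial order forces $P(j)=P(\hat j)$---is correct, and the key lemma you state is true (prove it by induction along $\succ'_j$: at $\vx_k$, the $\succ'_j$-inequality gives $p_{j,\vx_k}\ge p_{\hat j,\vx_k}$ after cancelling earlier equal terms, while the $\succ_j$-inequality at $\vx_k$, whose upper contour set sits inside $\{\vx_1,\dots,\vx_k\}$, gives the reverse). The paper does exactly this induction directly without naming the intermediate ``sd-envy-free w.r.t.\ $\succ'$'' fact, so the two arguments are essentially the same. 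One caution: your remark that ``\eps{} run on $\succ'$ coincides with PS on $\succ'$'' is false as stated---bundles share items, so the feasibility polytope is not the PS polytope on $\md$---but your actual time-integration argument does not need that coincidence.

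The \sdopt{} plan has a real gap, and it is precisely where the paper's contribution lies. You appeal to ``the classical PS efficiency theorem'' to get sd-efficiency of \eps{} with respect to $\succ'$, but that theorem rests on the equivalence ``no cycle in $\gcycle(P)\Leftrightarrow$ sd-efficient,'' and the paper explicitly observes that in \mtaps{} the sufficiency direction \emph{fails}: absence of a bundle-level improvement cycle does \emph{not} imply sd-efficiency (their assignment~(\ref{m2}) is \sdnopta{} yet has no cycle). The reason is exactly the shared-item structure you brushed aside: an agent can improve by recombining items across bundles without any bundle-level cycle. So you cannot black-box to PS here, and the transfer step you flag as ``the main obstacle'' is not the only obstacle---the premise itself is unproved. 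The paper's fix is to introduce a \emph{generalized cycle}: a set $C\subseteq\gcycle(P,R)$ such that every item appearing in a left component of some tuple in $C$ also appears in a right component of some tuple in $C$. They show (i) no generalized cycle implies sd-efficiency (Claim~\ref{claim:gc}), and (ii) the \eps{} output admits no generalized cycle, by taking the earliest-exhausted item $\hat o$ among left-component items, finding a tuple $(\vx,\hat\vx)\in C$ with $\hat o\in\hat\vx$, and deducing that some item of $\vx$ was exhausted strictly earlier---a contradiction. This item-level argument is the missing idea in your plan; the ``greedy/leximin'' intuition you gesture at does not by itself bridge the gap, because the leximin property of PS is itself proved via the cycle characterisation that breaks here.
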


To prove the \sdopt{} of \eps{}, we relax the cycle from bundles to items and find a sufficient condition for \sdopt{} in \mtaps{} under general partial preferences. For example in assignment (\ref{m2}), agent $1$ can extract $1_B$ from $2_F1_B$ to match $1_F$ and extract $2_B$ from $1_F2_B$ to match $2_F$ to improve her result. We show \eps{} satisfies \sdopt{} by proving \eps{} satisfies the sufficient condition. We find that one agent does not envy others at the first bundle of her topological sort since she consumes the bundle from the beginning to the end when the bundle is exhausted. Based on that, We can prove she does not envy others at any upper contour set under \eps{} by induction on bundles under the topological sorting.

\begin{restatable}{rmk}{rmkthmeps}\label{rmk:thm:eps}
	\eps{} is not \epopta{} since its output may not be \decoma{} when coming to multi-type resources. \eps{} is not \orfra{}, \sdefa{} and \upiva{} under general partial preferences.
\end{restatable}

PS is both \sdopta{} and \decoma{} in single-type resources allocations~\cite{Bogomolnaia01:New}, but not \decoma{} in \mtaps{}. Serial dictatorship~\cite{Mackin2016:Allocating,Hosseini19:Multiple} maintains both \sdopt{} and \decom{} even in \mtaps{} but perform badly in fairness. Thankfully, \egd{} not only satisfies \sdopt{} and \decom{} but the common requirement for fairness, \etoe{}.

\begin{restatable}{thm}{thmegd}\label{thm:egd}
	Under general partial preferences, $\egd{}$ satisfies \sdopt{}, \epopt{}, \etoe{}, and \decom{}.
\end{restatable}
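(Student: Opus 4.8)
The plan is to exploit the simple block structure of $P:=\egd{}(R)$. Fix the deterministic topological sorts $\succ'_1,\dots,\succ'_n$; for each $r\le n$ let $\vx_r:=\topb{r}{}$ be the bundle picked in round $r$ and $M^{(r)}$ the items remaining at the start of round $r$, so $M^{(1)}=M$ and $M^{(r+1)}=M^{(r)}\setminus\{o:o\in\vx_r\}$. Four observations drive everything: (i) ``having the same $\succ'$'' partitions $N$ into the blocks $\text{Group}(j,\succ')$; (ii) the bundles $\vx_1,\dots,\vx_n$ are pairwise item-disjoint and partition $M$ (round $r$ removes exactly the $p$ items of $\vx_r$, and $np$ items are removed overall), and the bundles available at round $r$ are precisely those disjoint from $\vx_1,\dots,\vx_{r-1}$; (iii) for a block $B$, every agent $j\in B$ gets $P_{j,\vx_r}=1/|B|$ for each round $r$ run by a member of $B$, and $0$ on every other bundle, so these blocks occupy pairwise-disjoint sets of columns whose union is $\{\vx_1,\dots,\vx_n\}$; and (iv) $\vx_r=\fto(\succ'_j,M^{(r)})$ for every $j\in\text{Group}(r)$, which by the topological-sort property is $\succ_j$-maximal among the bundles available at round $r$.

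Decomposability and \etoe{} follow quickly. For \decom{}, independently for each block $B$ draw a uniformly random bijection $\rho_B$ from $B$ onto $\{\vx_r: r\text{ run by a member of }B\}$ and give agent $j$ the bundle $\rho_{B_j}(j)$ (here $B_j$ is $j$'s block); since the column sets are disjoint and the $\vx_r$ are item-disjoint this is a valid discrete assignment, and the probability agent $j$ receives $\vx_r$ equals $1/|B_j|$ exactly when round $r$ is run by a member of $B_j$, i.e.\ exactly $P$ --- so $P$ is a distribution over discrete assignments (equivalently, the ``special random priority'' reading: pre-shuffle each block uniformly, then hand $\vx_r$ in full to whichever member of $\text{Group}(r)$ is next in its shuffled order). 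For \etoe{}, if agents $j,\hat j$ share a partial preference then the deterministic sort gives $\succ'_j=\succ'_{\hat j}$, so they lie in one block and in every round are simultaneously in or out of $\text{Group}(r)$; hence they get identical increments and $P(j)=P(\hat j)$.

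For \sdopt{} and \epopt{} I would isolate one lemma: if a fractional assignment $P$ admits an ordering $\vx_1,\dots,\vx_n$ of item-disjoint bundles partitioning $M$ together with a partition of $N$ into blocks such that (a) each $j$ gets $1/|B_j|$ on exactly the $\vx_r$ ``owned'' by $B_j$ and $0$ on every other bundle, and (b) $\vx_r$ is $\succ_j$-maximal among the bundles disjoint from $\vx_1,\dots,\vx_{r-1}$ for every $j$ in the block owning $\vx_r$, then $P$ is \sdopta{}. By (iv), $\egd{}(R)$ meets this hypothesis with the blocks being the $\text{Group}$s, giving \sdopt{}; and each discrete assignment in the decomposition above meets it with singleton blocks --- ordering the agents by $a(r):=\rho_{\text{Group}(r)}^{-1}(\vx_r)$, a short induction shows this assignment equals the serial dictatorship in which agent $a(r)\in\text{Group}(r)$ picks $\fto(\succ'_{a(r)},M^{(r)})=\vx_r$ --- so each atom is \sdopta{} and hence $P$, being a distribution over them, is \epopta{}.

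To prove the lemma, suppose $Q\ne P$ with $Q\sd{\null}P$ and show $Q_{j,\vx_r}=P_{j,\vx_r}$ for all $j$ by induction on $r$. In the inductive step, fix $r$ and let $B$ be the block owning $\vx_r$. For $j\in B$, apply $Q(j)\sd{j}P(j)$ at $\vx_r$: any bundle strictly $\succ_j$-above $\vx_r$ is, by (b), not disjoint from $\vx_1,\dots,\vx_{r-1}$, hence contains an item of some $\vx_s$ with $s<r$; the inductive hypothesis gives $\sum_i Q_{i,\vx_s}=1$, so by the unit supply of each item of $\vx_s$ the only bundle containing that item on which $Q$ places positive mass is $\vx_s$ itself --- so $Q(j)$ and $P(j)$ coincide on the whole set $\{\vx':\vx'\succ_j\vx_r\}$ (both are supported there only on bundles among $\vx_1,\dots,\vx_{r-1}$, where they agree by the hypothesis). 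The stochastic-dominance inequality at $\vx_r$ therefore collapses to $Q_{j,\vx_r}\ge P_{j,\vx_r}=1/|B|$; summing over $j\in B$ and using the unit supply of an item of $\vx_r$ forces $\sum_{j\in B}Q_{j,\vx_r}=1$, whence $Q_{j,\vx_r}=1/|B|$ for $j\in B$ and $Q_{j,\vx_r}=0$ for $j\notin B$ (otherwise an item of $\vx_r$ is over-allocated). The base case $r=1$ is the same argument with the ``strictly above'' set empty, and since every bundle uses only items of $\bigcup_r\vx_r=M$ the supply argument also pins $Q(j)$ to be supported on $\{\vx_1,\dots,\vx_n\}$; hence $Q=P$, contradicting $Q\ne P$. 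I expect this induction to be the crux: the delicate move is converting ``$Q$ agrees with $P$ on $\vx_s$ for $s<r$'' into ``$Q$ vanishes on every other bundle touching $\vx_s$'s items'' via the per-item supply constraints --- this is what makes the dominance inequality at $\vx_r$ usable, and it is also precisely what distinguishes \egd{} from \erp{}, for which a convex combination of \sdopta{} deterministic assignments need not be \sdopta{}.
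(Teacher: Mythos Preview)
Your proof is correct and tracks the paper's argument closely: the same round-by-round induction for \sdopt{} (using the supply constraint on items already consumed to kill $Q$-mass on any non-$\vx_s$ bundle touching them, then collapsing the sd-inequality at $\vx_r$), the same block-permutation idea for \decom{}, and the same observation for \etoe{}. Two minor differences are worth noting. For \decom{}, the paper writes down an explicit family of $k$ priority orders, with $k$ the least common multiple of the block sizes, by cyclically shifting each block; you instead sample an independent uniform bijection per block. These give the same marginals, and your phrasing is a bit cleaner. For \epopt{}, the paper simply invokes the general implication ``\sdopt{} plus \decom{} $\Rightarrow$ \epopt{}'' (any atom in a decomposition of an \sdopta{} mixture must itself be \sdopta{}, since replacing a dominated atom by a dominator would sd-dominate the mixture), whereas you verify directly that each atom of your decomposition is a serial-dictatorship output and re-apply your lemma with singleton blocks. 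Your route is more self-contained and makes the connection to serial dictatorship explicit; the paper's is shorter once the general implication is granted.
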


When agents have different topological sorts with each other, \egd{} comes to serial dictatorship. Since the dictator only invite agents who have the same topological sort with her to share her bundle, the \sdopt{} of serial dictatorship maintains in \egd{}. Since agents with the same preference have the same topological sorts, \egd{} satisfies \etoe{}. \egd{} is \decoma{} since its output can be seen as an expected result of a special random priority algorithm. Decomposability and \sdopt{} induce \epopt{}~\cite{Bogomolnaia01:New}.
\begin{restatable}{rmk}{rmkthmegd}\label{rmk:thm:egd}
	\egd{} is not \orfra{}, \sdwefa{}, \upiva{} and \sdwspa{} even in single-type resources allocations under linear preferences.
\end{restatable}

PS is both \sdopta{} and \sdefa{} under linear preferences~\cite{Bogomolnaia01:New} but this is no longer true under general partial preferences, as the impossibility result in Proposition~\ref{prop:noboth} shows.

\begin{restatable}{prop}{propnoboth}\label{prop:noboth}
	No mechanism can satisfy both \sdopt{} and \sdef{} under general partial preferences.
\end{restatable}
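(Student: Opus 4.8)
The goal is to exhibit a single \mtap{} instance $R$ whose set of \sdefa{} assignments and set of \sdopta{} assignments are disjoint; since a mechanism returns one assignment per profile, no mechanism can then be both \sdopta{} and \sdefa{} on $R$, hence not on the whole domain. Before searching for $R$ I would record one constraint: the witness cannot be a CP-profile, because Theorem~\ref{thm:eps} gives that \eps{}$(R)$ is \sdopta{} for every $R$, and Theorem~\ref{thm:cpepsef} gives that \eps{}$(R)$ is \sdefa{} whenever $R$ is a CP-profile, so \eps{}$(R)$ would already witness both. Thus the witness must use a genuinely partial preference (not representable by an acyclic CP-net) for at least one agent, and I would look for $R$ with $p=2$ types and a small number of agents whose preferences are partial orders with carefully placed maximal elements.

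Next I would pin down the set $\mathcal S$ of \sdefa{} assignments of $R$ explicitly. \sdef{} says that for every ordered pair of agents $(j,\hat j)$ and every bundle $\vx$, $\sum_{\hat\vx\in \ucs(\succ_j,\vx)}P_{j,\hat\vx}\ge \sum_{\hat\vx\in \ucs(\succ_j,\vx)}P_{\hat j,\hat\vx}$. Applying this inequality at a bundle together with the reversed inequality coming from the other agent's own contour set at the same bundle forces equalities on the contour sets that are singletons for one of the two agents; combining these equalities with the $n$ row-sum equations and the $|M|$ item-supply equations collapses $\mathcal S$ to an explicit low-dimensional family of assignments.

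The crux is then to show that every $P\in\mathcal S$ is \sdnopta{}, by exhibiting for each such $P$ an assignment $Q\neq P$ with $Q\sd{\null}P$. The move I would use is the ``item-recombination'' phenomenon already visible in the discussion after Theorem~\ref{thm:erp}: keep all rows of $P$ except the row of some agent $j$, and replace $P(j)$ by another allocation $q$ over $\md$ that has the same per-type item marginals as $P(j)$ — so $Q$ is still a valid \ram{} assignment and every other agent is untouched (each other row $k$ gives $Q(k)=P(k)\sd{k}P(k)$ trivially) — but with $q\sd{j}P(j)$. This is precisely how assignment~(\ref{m2}) is dominated by~(\ref{m1}): agent $1$ holds one half each of $1_F,2_F,1_B,2_B$ in the ``wrong'' bundles and recombines them into $\tfrac12 1_F1_B+\tfrac12 2_F2_B$, an allocation it strictly prefers. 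The whole point of the chosen profile is that \sdef{} forces, in every $P\in\mathcal S$, at least one agent into an allocation that is not $\sd{j}$-maximal among the allocations with its own item marginals, so such a dominating recombination $q$ always exists. Combining the two facts — $\mathcal S$ equals the listed family, and every member of that family is \sdnopta{} — gives $\mathcal S\cap\{\sdopta{}\text{ assignments}\}=\emptyset$, which is the claim.

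I expect the third step to be the main obstacle: designing $R$ so that \sdef{} really does force a suboptimal recombination on \emph{some} agent in \emph{every} \sdefa{} assignment. The difficulty is that for many profiles one can instead hand each agent a portion of the allocation it has already optimally recombined — for two agents this is often possible, for instance the ``diagonal'' split $\tfrac12 1_F1_B+\tfrac12 2_F2_B$ to each agent, or a complementary pair of such splits, producing an assignment that is simultaneously \sdefa{} and \sdopta{} — so ruling out all such ``aligned'' assignments is what forces the preferences to be genuinely non-CP-net and, plausibly, forces either at least three agents or at least three types, so that the tension between efficiency and envy-freeness cannot be locally resolved. Once a profile that passes this test is fixed, verifying the characterization of $\mathcal S$ and the domination of each $P\in\mathcal S$ is a finite, if somewhat tedious, linear computation.
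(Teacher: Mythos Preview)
Your high-level plan --- exhibit a profile $R$ on which the set of \sdefa{} assignments and the set of \sdopta{} assignments are disjoint --- is exactly right, and your observation that at least one agent must have a non--CP-net preference is correct and well argued. But what you have written is a strategy, not a proof: you never produce the instance $R$, and the part you flag as ``the main obstacle'' (designing $R$ so that every \sdefa{} assignment admits a dominating recombination) is left open. Until the instance is on the table and the two verifications are carried out, there is a genuine gap.

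More importantly, you are looking in a harder place than necessary. You commit to $p=2$ and to the multi-type ``item-recombination'' mechanism as the source of inefficiency, but the paper's witness is single-type: $p=1$, $n=3$, $M=\{1_F,2_F,3_F\}$, with $\succ_1$ the linear order $1_F\succ_1 2_F\succ_1 3_F$, $\succ_2$ the reverse $3_F\succ_2 2_F\succ_2 1_F$, and $\succ_3=\emptyset$. The empty relation is the non--CP-net ingredient you were seeking, and it makes the first step trivial: for agent~$3$ every upper contour set is a singleton, so \sdef{} forces $P_{3,\vx}\ge P_{\hat j,\vx}$ for every $\vx$ and every $\hat j$, which together with the supply and row constraints pins down the unique \sdefa{} assignment as the uniform $\tfrac13$ matrix. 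The second step is then one explicit domination: shift mass between agents~$1$ and~$2$ (give agent~$1$ $\tfrac23$ of $1_F$ and agent~$2$ $\tfrac23$ of $3_F$, leaving agent~$3$'s row unchanged) to get $Q\neq P$ with $Q\sd{}P$. No cross-type recombination is needed, and your low-dimensional family $\mathcal S$ collapses to a single point.

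So: keep your outline, but replace the speculative $p=2$ search by the empty-preference trick. That immediately discharges the step you identified as the crux.
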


\section{Properties under Acyclic CP-net Preferences}

\begin{restatable}{thm}{thmcperp}\label{thm:cperp}
	Given any CP-profile $R$, $\erp{}(R)$ is \sdspa{}.
\end{restatable}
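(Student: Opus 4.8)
Proof proposal.

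The plan is to reduce \sdsp{} to a statement about each fixed priority order, and then use the fact that on the acyclic CP-net domain every agent always receives their \emph{unique} best available bundle. Fix a CP-profile $R$, an agent $j$, and an arbitrary misreport $R'=(\succ'_j,\succ_{-j})$ (the misreport $\succ'_j$ need not be a CP-net; only the true preference $\succ_j$ is assumed to be one). Recall that $\erp{}(R)$ is the fractional assignment obtained by averaging, over the uniformly drawn priority order $\rhd$, the discrete assignment produced by the body of Algorithm~\ref{alg:erp}; hence for every bundle $\vx$, $\erp{}(R)_{j,\vx}$ is the fraction of the $n!$ orders $\rhd$ under which $j$ ends up with $\vx$. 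So it suffices to compare, order by order, the bundle $j$ obtains under $R$ with the one they obtain under $R'$.

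Fix a priority order $\rhd$ and run Algorithm~\ref{alg:erp}; consider the step at which $j$ is the highest-priority remaining agent, and let $M'_\rhd$ be the set of items still available at that point. First I would observe that $M'_\rhd$ depends only on $\rhd$ and $R_{-j}$, not on $j$'s reported preference: the agents processed before $j$ are exactly those preceding $j$ in $\rhd$, and each removes a bundle determined solely by their own fixed preference and the currently available items, so a routine induction along $\rhd$ shows $M'_\rhd$ is the same whether $j$ reports $\succ_j$ or $\succ'_j$. Now apply Proposition~\ref{prop:acyclicCP} to the acyclic CP-net $\succ_j$ with $D'_i = D_i\cap M'_\rhd$: there is a \emph{unique} bundle $b^\star := \bav(\succ_j,M'_\rhd)$ that is available in $M'_\rhd$ and, moreover, $b^\star \succ_j \vy$ for \emph{every} other available bundle $\vy$. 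Under truthful reporting $j$ receives $\fto(\succ'_j,M'_\rhd) = b^\star$ (the equality holds for any deterministic topological sort, by the discussion following Proposition~\ref{prop:acyclicCP}); under the misreport $j$ receives $\fto$ of $M'_\rhd$ with respect to a topological sort of $\succ'_j$, which is \emph{some} bundle $b'$ all of whose items lie in $M'_\rhd$, i.e. an available bundle. Therefore $b^\star \succeq_j b'$.

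Next I would push this through the upper contour sets. Fix any bundle $\vx\in\md$. If $b' \in \ucs(\succ_j,\vx)$, i.e. $b'\succeq_j\vx$, then by transitivity $b^\star \succeq_j \vx$, so $b^\star \in \ucs(\succ_j,\vx)$. Hence, for every order $\rhd$ and every $\vx$, the indicator that $j$'s truthful bundle lies in $\ucs(\succ_j,\vx)$ dominates the corresponding indicator for $j$'s misreport bundle. Averaging over the $n!$ equally likely orders gives, for every $\vx\in\md$,
\[
\sum_{\hat\vx \in \ucs(\succ_j,\vx)} \erp{}(R)_{j,\hat\vx} \;\ge\; \sum_{\hat\vx \in \ucs(\succ_j,\vx)} \erp{}(R')_{j,\hat\vx},
\]
which is exactly $\erp{}(R) \sd{j} \erp{}(R')$. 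As $j$ and $R'$ were arbitrary, $\erp{}(R)$ is \sdspa{}.

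The only real obstacle — and the reason the CP-net restriction cannot be dropped — is Proposition~\ref{prop:acyclicCP}: it is what upgrades ``$b^\star$ is maximal among available bundles'' to ``$b^\star$ is the unique maximum, hence strictly preferred to every other available bundle,'' which is precisely what makes the order-by-order domination argument go through. For general partial preferences a topological sort merely selects one of possibly several incomparable maximal available bundles, and a misreport promoting a different one raises the fractional mass on an upper contour set that excludes the truthful bundle, breaking the inequality above (cf. Remark~\ref{rmk:thm:erp}). Everything else in the argument — the independence of $M'_\rhd$ from $j$'s report and the transitivity/averaging step — is routine.
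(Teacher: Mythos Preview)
Your proposal is correct and follows essentially the same approach as the paper's proof: fix a priority order, observe that the set of available items at $j$'s turn is independent of $j$'s report, invoke Proposition~\ref{prop:acyclicCP} to get a unique best available bundle under the true CP-net preference, conclude that the truthful bundle weakly dominates any misreport bundle, and then average over orders. Your version is in fact more explicit than the paper's (which is a two-sentence sketch) in spelling out the upper-contour-set indicator argument and why per-order dominance survives averaging; the paper simply asserts ``since this is true for any fixed priority order, $P'$ is stochastic dominated by $P$.'' One cosmetic remark: you overload $\succ'_j$ to mean both the misreport and, in the phrase ``$\fto(\succ'_j,M'_\rhd)=b^\star$,'' the topological sort of the \emph{true} preference (following Algorithm~\ref{alg:erp}'s notation); the intent is clear from context, but disambiguating would improve readability.
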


\begin{proof}
    Let $P$ and $P'$ be the expected assignments of \erp{} given the CP-profile $R$ and $R'\in \mathcal{R}$ such that $R' = (\succ_j', \succ_{-j})$ for some $j\leq n$. For an arbitrary fixed priority order, in any agent $j$'s turn, the set of available bundles is the same under $R$ and $R'$. By Proposition~\ref{prop:acyclicCP}, the best available bundle is also the same and unique. If $j$ lies, she may get a smaller share of her best available bundle. Therefore, the result of a lie is stochastic dominated by the result of truthfulness. Since this is true for any fixed priority order, $P'$ is stochastic dominated by $P$.
\end{proof}

\begin{restatable}{prop}{propcperpuiv}\label{prop:cperpuiv}
	Given any CP-profile $R$, $\erp{}(R)$ is \upiva{} for any other CP-profile $R'$.
\end{restatable}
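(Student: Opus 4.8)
The plan is to reuse the priority-order averaging argument from the proof of Theorem~\ref{thm:cperp}: \erp{} outputs the uniform average, over the $n!$ priority orders $\rhd$, of a deterministic run in which, under a CP-profile, every agent consumes the \emph{unique} best available bundle $\bav(\cdot,M')$ guaranteed by Proposition~\ref{prop:acyclicCP} --- so the particular topological sort chosen in line~1 is irrelevant. Write $R'=(\succ'_j,\succ_{-j})$, where $\succ'_j$ is an \upivtr{} of $\succ_j$ at $\vx$ under $P=\erp{}(R)$, so that $\ucs(\succ'_j,\vx)=\ucs(\succ_j,\vx)\setminus\mathcal{Z}$ for some $\mathcal{Z}\subseteq\{\vy:P_{j,\vy}=0\}$ and $\succ'_j$ agrees with $\succ_j$ on $\ucs(\succ'_j,\vx)$. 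Since, for each agent $\hat j$, both $\erp{}(R)_{\hat j,\vx}$ and $\erp{}(R')_{\hat j,\vx}$ are the average over $\rhd$ of the indicator that $\hat j$ receives $\vx$ in the corresponding run, it suffices to prove: for every fixed $\rhd$, the run on $R$ and the run on $R'$ assign bundle $\vx$ to the same agent (or to no agent). Averaging then gives the claimed equality for all $\hat j$ at once.

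For the order-by-order claim, fix $\rhd$. The two runs are identical up to the step at which agent $j$ is reached, because only $j$'s preference differs and every higher-priority agent faces the same evolving supply; let $M'_j$ be that common remaining supply. If $\vx$ is unavailable in $M'_j$, or was already taken by a higher-priority agent, the destiny of $\vx$ is decided before $j$'s turn and is the same in both runs. Otherwise $\vx$ is available and unclaimed when $j$ is reached, and I claim $j$ consumes the same bundle in both runs. Let $b=\bav(\succ_j,M'_j)$ be $j$'s choice under $R$; since $j$ receives $b$ whenever the order is $\rhd$ (a positive-probability event), $P_{j,b}>0$, hence $b\notin\mathcal{Z}$, and since $b$ is the unique best available bundle and $\vx$ is available, $b\in\ucs(\succ_j,\vx)$, so $b\in\ucs(\succ_j,\vx)\setminus\mathcal{Z}=\ucs(\succ'_j,\vx)$. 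Suppose $b':=\bav(\succ'_j,M'_j)\ne b$; then $b'\succ'_j b$ by Proposition~\ref{prop:acyclicCP} (applicable to $\succ'_j$ as $R'$ is a CP-profile), hence $b'\succ'_j\vx$ and so $b'\in\ucs(\succ'_j,\vx)$; since $\succ'_j$ and $\succ_j$ coincide on this set, $b'\succ_j b$, contradicting that $b$ is the best available bundle under $\succ_j$. Thus $j$ consumes $b$ under $R'$ too, after which the two runs evolve identically, so $\vx$ ends up with the same agent. (I would also note that $M'_j$ still contains an item of each type, so the set of available bundles is nonempty and Proposition~\ref{prop:acyclicCP} applies.)

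I expect the crux to be precisely this last step --- showing that removing the zero-share bundles $\mathcal{Z}$ from, or reordering the part of $\succ_j$ below $\vx$, cannot promote some available bundle above $j$'s original pick $b$. The argument hinges on the pair of observations isolated above: any bundle that $\succ'_j$-beats $b$ must also $\succ'_j$-beat $\vx$ and therefore belongs to the common upper contour set $\ucs(\succ'_j,\vx)$, where $\succ'_j$ and $\succ_j$ are equal (so no such newcomer exists), while $b$ itself, being actually consumed in this run, has positive share and hence was not deleted. The remaining ingredients --- identical run prefixes before $j$, identical suffixes once $j$'s choice matches, and the averaging over orders --- are routine. It is also worth remarking why this breaks for general partial preferences: there line~1 fixes an arbitrary topological sort that an \upivtr{} can perturb, whereas under CP-nets Proposition~\ref{prop:acyclicCP} pins the chosen bundle down unambiguously.
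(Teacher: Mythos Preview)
Your proposal is correct and follows essentially the same approach as the paper's proof: fix a priority order, observe that the runs coincide up to the deviating agent's turn, split into the two cases according to whether $\vx$ is still available at that turn, and in the available case use Proposition~\ref{prop:acyclicCP} together with the \upivtr{} conditions ($b\notin\mathcal{Z}$ since $P_{j,b}>0$, and $\succ'_j\mid_{\ucs(\succ'_j,\vx)}=\succ_j\mid_{\ucs(\succ'_j,\vx)}$) to show the deviating agent picks the same bundle. Your write-up is in fact a bit more explicit than the paper's about why the suffix of the run also coincides and about applicability of Proposition~\ref{prop:acyclicCP}, but the argument is the same.
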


Thanks to Proposition~\ref{prop:acyclicCP}, given any upper invariant transformation at some $\vy\in\md$, if $\vy$ is available in the misreport agent's turn, she gets the same bundle despite her lie. Otherwise if $\vy$ is unavailable, the lie does not affect the assignment of $\vy$.

\begin{restatable}{thm}{thmcpepsef}\label{thm:cpepsef}
	Given any CP-profile $R$, $\eps{}(R)$ is \sdefa{} and \orfra{}.
\end{restatable}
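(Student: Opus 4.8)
The plan is to analyze \eps{} in its continuous ``eating'' form, identifying the round boundaries in Algorithm~\ref{alg:eps} with points of a clock running on $[0,1]$: at every time $t$, each agent $j$ eats, at a rate of one bundle-fraction per unit time, the best available bundle with respect to $\succ_j$, which by Proposition~\ref{prop:acyclicCP} and the remark following it is \emph{unique} and equals $\bav(\succ_j,M')=\fto(\succ'_j,M')$ for the current remaining supply $M'$. Since the available bundles always form a product set $\Car{T'}$, Proposition~\ref{prop:acyclicCP} indeed applies at every moment; and since each agent accumulates mass at rate $1$ with $\sum_{\vx\in\md}P_{j,\vx}=1$, while each type's total supply of $n$ is consumed at total rate $n$, the process runs for exactly one unit of time and every agent has an available bundle throughout $[0,1)$. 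For each item $o$ write $\tau_o\le 1$ for the time $o$ is exhausted, and for a bundle $\vy$ write $\tau(\vy)=\min_{o\in\vy}\tau_o$ for the time $\vy$ first becomes unavailable; this is a global quantity, identical for all agents.

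The core is a claim about a fixed agent $j$ and bundle $\vx$. Put $S=\ucs(\succ_j,\vx)$ and $\sigma=\max_{\vy\in S}\tau(\vy)$. First I would show that whenever some bundle of $S$ is still available, agent $j$'s best available bundle lies in $S$: if $\vy\in S$ is available and agent $j$'s unique best available bundle is $\vy^*\ne\vy$, then by Proposition~\ref{prop:acyclicCP} $\vy^*\succ_j\vy$; since $\vy\in S$ means $\vy\succ_j\vx$ or $\vy=\vx$, in either case $\vy^*\succ_j\vx$, hence $\vy^*\in S$. Because availability is monotone decreasing in time, the set of times at which some bundle of $S$ is available is exactly $[0,\sigma)$, which is nonempty since $\vx\in S$ is available at time $0$. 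Consequently agent $j$ eats a bundle of $S$ at every time in $[0,\sigma)$ and no bundle of $S$ afterwards, so $\sum_{\hat\vx\in S}P_{j,\hat\vx}=\sigma$. For any other agent $\hat j$, she can eat a bundle of $S$ only while that bundle is available, hence only during $[0,\sigma)$, so $\sum_{\hat\vx\in S}P_{\hat j,\hat\vx}\le\sigma$. Taking $S=\ucs(\succ_j,\vx)$ for arbitrary $\vx$, this gives $P(j)\sd{j}P(\hat j)$ for all $j,\hat j$, i.e. \sdefa{}.

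For \orfra{} I would additionally observe that if $P_{j,\vx}>0$ then $\sigma=\tau(\vx)$: agent $j$ eats $\vx$ during some interval, so let $t_0$ be the first time $\vx$ is her best available bundle; then all bundles in $S\setminus\{\vx\}=\{\hat\vx:\hat\vx\succ_j\vx\}$ are already unavailable and remain so, no better bundle can reappear, so $\vx$ stays agent $j$'s unique best available bundle and she eats it throughout $[t_0,\tau(\vx))$ while $S\setminus\{\vx\}$ is unavailable; at time $\tau(\vx)$ every bundle of $S$ is unavailable, so $\sigma\le\tau(\vx)$, while $\vx\in S$ gives $\tau(\vx)\le\sigma$. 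Combined with the previous paragraph, $\sum_{\hat\vx\in\ucs(\succ_j,\vx)}P_{j,\hat\vx}=\tau(\vx)$; and applying the same reasoning to agent $\hat j$ at the bundle $\vx$ yields $\sum_{\hat\vx\in\ucs(\succ_{\hat j},\vx)}P_{\hat j,\hat\vx}=\max_{\vy\in\ucs(\succ_{\hat j},\vx)}\tau(\vy)\ge\tau(\vx)$, which is exactly the ordinal-fairness inequality.

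I expect the main obstacle, and the place where the CP-net restriction is essential, to be the uniqueness of the best available bundle: for a general partial order there may be several maximal available bundles, in which case the topological sort $\succ'_j$ can lead agent $j$ to eat a bundle outside $S$ while a bundle of $S$ is still available, so the key identity $\sum_{\hat\vx\in S}P_{j,\hat\vx}=\sigma$ fails (matching Remark~\ref{rmk:thm:eps}). A secondary point needing care is simply to verify that the continuous-eating description faithfully tracks Algorithm~\ref{alg:eps} and that the remaining items of each type are nonempty throughout $[0,1)$, so that the available bundles form a nonempty product set and Proposition~\ref{prop:acyclicCP} can be invoked at every time.
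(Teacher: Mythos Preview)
Your proposal is correct and follows essentially the same approach as the paper: both arguments rest on Proposition~\ref{prop:acyclicCP} to show that, under a CP-profile, each agent eats a bundle from $\ucs(\succ_j,\vx)$ precisely as long as some bundle in that set remains available, so that $\sum_{\hat\vx\in\ucs(\succ_j,\vx)}P_{j,\hat\vx}$ equals that exhaustion time, while any other agent can accumulate at most that much from the same set. Your formulation via the global quantity $\sigma=\max_{\vy\in\ucs(\succ_j,\vx)}\tau(\vy)$ is slightly cleaner than the paper's---the paper instead tracks the finite sequence $\vx_1\succ_j\cdots\succ_j\vx_{n_1}$ of bundles actually consumed by agent~$j$ and argues via the index of the last consumed bundle in the upper contour set (handling the boundary case $i=n_1$ separately), and for ordinal fairness inspects which bundle agent~$\hat j$ is eating at the moment $\vx$ is exhausted---but the underlying ideas are identical.
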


\begin{proof}
     W.l.o.g we only prove the case between agent $1$ and agent $2$. Let $P = \eps{}(R)$. Let $\mathcal{D}_1 = \{\vx\in \mathcal{D}: P_{1,\vx} > 0\}$ and $ n_1 = | \mathcal{D}_1 |$. By Proposition~\ref{prop:acyclicCP}, we have an order over $\mathcal{D}_1$ such that $\vx_1 \succ_1 \vx_2 \succ_1 \cdots \succ_1 \vx_{n_1}$, $\vx_i \in \mathcal{D}_1$ for any $i \leq  n_1$. For agent $2$, we can define $\md_2$ and $n_2$ and have an order over $\md_2$, $\hat\vx_1 \succ_2 \hat\vx_2 \succ_2 \cdots \succ_2 \hat\vx_{n_2}$,  similarly.

    \emph{(1) \sdef{}.} We need to prove $P(1) \sd{1} P(2)$. For any $\vy \in \mathcal{D}$, let $\vx_i$ be the least favorable bundle of agent $1$ in $\ucs(\succ_1,\vy) \cap \mathcal{D}_1$. If $i = n_1$, $\sum_{\vx \in \ucs(\succ_1,\vy)} P_{1, \vx} = \sum_{k=1}^{n_1} P_{1, \vx_k} = 1 \geq \sum_{\vx \in \ucs(\succ_1,\vy)} P_{2, \vx}$. If $i < n_1$, $\vx_{i+1} \notin \ucs(\succ_1,\vy)$. When agent $1$ starts to consume $\vx_{i+1}$, $\hat\vy$ is unavailable for any $\hat\vy \in \ucs(\succ_1,\vy)$. Otherwise by Proposition~\ref{prop:acyclicCP}, we have $\vx_{i+1} \succ_1 \hat\vy$ or $\vx_{i+1} = \hat\vy$ both indicating $\vx_{i+1} \in  \ucs(\succ_1,\vy)$, a contradiction. Suppose $t(\vx_i)$ be the time when $\vx_i$ is exhausted and agent $1$ starts to consume $\vx_{i+1}$. Then $\sum_{\vx \in \ucs(\succ_1,\vy)}P_{1,\vx} = \sum_{k=1}^i P_{1, \vx_k} = t(\vx_i) \geq \sum_{\vx \in \ucs(\succ_1,\vy)}P_{2,\vx}$ for any $\vy \in \mathcal{D}$. Hence $P(1) \succ_1 P(2)$.

    \emph{(2) \orfr{}.} For any $\vy \in \md_1$, we need to prove $\sum_{\vx\in \ucs(\succ_1,\vy)}P_{1,\vx} \leq \sum_{\vx\in \ucs(\succ_2,\vy)}P_{2,\vx}$. For any $\vx_i \in {\md}_1$, suppose that when $\vx_i$ is exhausted, agent $2$ is consuming $\hat \vx_{\hat i} \in {\md}_2$ or $\hat \vx_{\hat i}$ is exhausted at the same time. If $\hat \vx_{\hat i} = \vx_i$, we have $\sum_{\vx\in\ucs(\succ_1, \vx_i)}P_{1,\vx} = \sum_{\vx\in\ucs(\succ_2, \vx_i)}P_{2,\vx}$. If $\hat \vx_{\hat i} \neq \vx_i$, since $\vx_i$ is available when agent $2$ starts to consume $\hat \vx_{\hat i}$, we have $\hat \vx_{k} \succ_2 \vx_i$ for any $k\leq \hat i$. Hence, $\sum_{\vx\in\ucs(\succ_1, \vx_i)}P_{1,\vx} = \sum_{k=1}^i P_{1,\vx_k} \leq \sum_{k=1}^{\hat i} P_{2,\hat \vx_k} \leq \sum_{\vx\in\ucs(\succ_2, \vx_i)}P_{2,\vx}$. Therefore, for any $\vy \in \md$ with $P_{1,\vy} > 0$, we have $\sum_{\vx\in\ucs(\succ_1, \vy)}P_{1,\vx} \leq \sum_{\vx\in\ucs(\succ_2, \vy)}P_{2,\vx}$.
\end{proof}
\citeay{Hashimoto14:Two} shows that \orfr{} characterizes PS when we come to single-type resources. However, this is not the case in \mtaps{}: for example, if two agents' preferences are both as (c) in Figure~\ref{fg:popg1} shows, the result that two agents both get 0.5 $1_F2_B$ and 0.5 $2_F1_B$ is \orfra{} but not the output of \eps{}.

\begin{restatable}{prop}{propcpepsuiv}\label{prop:cpepsuiv}
	Given any CP-profile $R$, $\eps{}(R)$ is \upiva{} for any other CP-profile $R'$.
\end{restatable}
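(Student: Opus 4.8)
\textbf{Proof proposal for Proposition~\ref{prop:cpepsuiv}.}
The plan is to mimic the classical argument that PS is bounded-invariant (indeed \upiva{}) for single-type allocations, using Proposition~\ref{prop:acyclicCP} to make ``favorite available bundle'' well-defined and robust to the transformation. Fix a CP-profile $R$, an agent $j$, a bundle $\vx$, and let $\succ'_j$ be an \upivtr{} of $\succ_j$ at $\vx$ under $P=\eps{}(R)$, with $R'=(\succ'_j,\succ_{-j})$ and $P'=\eps{}(R')$. By definition there is $\mathcal{Z}\subseteq\{\vy:P_{j,\vy}=0\}$ with $\ucs(\succ'_j,\vx)=\ucs(\succ_j,\vx)\setminus\mathcal{Z}$ and the two orders agreeing on $\ucs(\succ'_j,\vx)$. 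I would run the eating algorithm on $R$ and on $R'$ in lockstep and prove by induction on the rounds of \eps{} that, up to the round in which $\vx$ (or equivalently every bundle in $\ucs(\succ_j,\vx)$) is exhausted, the following hold: (i) the supply vector is identical in both runs; (ii) every agent $\hat j\ne j$ has the same favorite available bundle; and (iii) agent $j$ has the same favorite available bundle in both runs \emph{as long as} that bundle lies in $\ucs(\succ_j,\vx)$, and agent $j$'s favorite available bundle under $R$ never lies in $\mathcal{Z}$ before that point. The key observation making (iii) work: by Proposition~\ref{prop:acyclicCP} the favorite available bundle is the unique $\succ_j$-maximal available bundle; before $\vx$ is exhausted, $\vx$ itself is available, so the favorite available bundle is $\succ_j$-at-least-as-good as $\vx$, i.e.\ it lies in $\ucs(\succ_j,\vx)$; since the bundles in $\mathcal{Z}$ all receive zero share, agent $j$ never actually eats any of them, so removing them from the upper contour set does not change which bundle is picked. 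And within $\ucs(\succ'_j,\vx)=\ucs(\succ_j,\vx)\setminus\mathcal{Z}$ the two orders coincide, so $\succ_j$-maximal and $\succ'_j$-maximal available bundles agree. This gives $P'_{\hat j,\vy}=P_{\hat j,\vy}$ for all $\hat j$ and all $\vy$ whose entire allocated mass (for every agent) is assigned before $\vx$ is exhausted; in particular it gives the conclusion $f(R)_{\hat j,\vx}=f(R')_{\hat j,\vx}$ once we check that each agent's share of $\vx$ is fully determined by the time $\vx$ is exhausted.

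More carefully, I would phrase the induction over the breakpoints $0=t_0<t_1<\dots$ at which some item is exhausted, stopping at the first breakpoint $t^\star$ at which all items appearing in some bundle of $\ucs(\succ_j,\vx)$ are exhausted (equivalently, at which $\vx$ becomes unavailable, since $\vx\in\ucs(\succ_j,\vx)$ and it is the $\succ_j$-minimum of that set restricted to whatever is still feasible). On each interval $[t_{r-1},t_r]$ with $t_r\le t^\star$, the inductive hypothesis that the two runs have identical state at $t_{r-1}$ plus the three bullet points above shows the eating rates on each item are identical in both runs on that interval, hence $t_r$ and the post-interval state coincide. For $t\le t^\star$, the amount agent $j$ has eaten of $\vx$ is the same in both runs, and once $\vx$ is exhausted nobody eats $\vx$ anymore, so $P_{j,\vx}=P'_{j,\vx}$; for $\hat j\ne j$ the same holds a fortiori since $\hat j$'s favorite available bundle sequence is identical throughout $[0,t^\star]$ and $\hat j$ also stops eating $\vx$ after $t^\star$. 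That yields $f(R)_{\hat j,\vx}=f(R')_{\hat j,\vx}$ for every $\hat j$, which is exactly \upiv{}.

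The step I expect to be the main obstacle is establishing bullet (iii) rigorously, in particular the claim that \emph{before} $\vx$ is exhausted agent $j$'s favorite available bundle under $\succ_j$ lies in $\ucs(\succ_j,\vx)$. The subtlety is that Proposition~\ref{prop:acyclicCP} applies to the sub-box $\md'=\Car{T'}$ of \emph{available} items, so I must argue that (a) $\vx$ is available, i.e.\ each of its $p$ items still has positive supply, precisely for $t<t^\star$, and (b) the unique $\succ_j$-maximal element of the available sub-box is $\succeq_j \vx$ — this needs that $\succ_j$ restricted to the sub-box is still the partial order of an acyclic CP-net (true, restricting each CPT to surviving items) and that $\vx$ belongs to that sub-box. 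A secondary subtlety is that $\mathcal{Z}$ is defined relative to $P_{j,\cdot}=0$, and I must ensure that ``agent $j$ never eats a bundle of $\mathcal{Z}$ before $t^\star$'' does not secretly require the conclusion; it does not, because the only way agent $j$ eats $\vy$ is by having $\vy$ as a favorite available bundle, which forces $\vy\in\ucs(\succ_j,\vx)$ at that moment, and $\ucs(\succ_j,\vx)\cap\mathcal{Z}=\emptyset$ by hypothesis — so the argument is not circular, it is a clean induction driven entirely by the uniqueness-of-top result. Once these points are nailed down, the equality of the two eating processes up to $t^\star$, and hence of the $\vx$-column of the assignment, follows mechanically. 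I would also remark that, as in the single-type case, one cannot conclude equality of the \emph{entire} assignments — only of the $\vx$-column — which is all \upiv{} demands.
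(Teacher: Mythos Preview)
Your approach is essentially the paper's: run the two executions of \eps{} in lockstep and use Proposition~\ref{prop:acyclicCP} together with the availability of the target bundle $\vx$ to show that, until $\vx$ is exhausted, agent $j$'s unique favorite available bundle is the same under $\succ_j$ and $\succ'_j$; the paper inducts over the sequence of bundles agent $j$ consumes rather than over global breakpoints, but the content is identical.

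One slip to fix: you assert ``$\ucs(\succ_j,\vx)\cap\mathcal{Z}=\emptyset$ by hypothesis,'' but this is false in general --- the whole point of the \upivtr{} is that $\mathcal{Z}$ is a subset of $\ucs(\succ_j,\vx)$ being removed. The correct (and sufficient) argument is the one you already gave two lines earlier: every bundle in $\mathcal{Z}$ has $P_{j,\cdot}=0$, and in \eps{}(R) agent $j$'s favorite available bundle is always consumed for positive duration, so it has positive share and therefore cannot lie in $\mathcal{Z}$. That alone shows the $R$-favorite lies in $\ucs(\succ_j,\vx)\setminus\mathcal{Z}=\ucs(\succ'_j,\vx)$, after which the agreement of $\succ_j$ and $\succ'_j$ on $\ucs(\succ'_j,\vx)$ forces the $R$- and $R'$-favorites to coincide, exactly as in the paper. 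Also, your description of $t^\star$ as ``the first breakpoint at which all items appearing in some bundle of $\ucs(\succ_j,\vx)$ are exhausted'' is not what you want; you mean the time at which $\vx$ itself becomes unavailable (some item of $\vx$ is exhausted), which is what the paper uses.
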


Given any \upivtr{} at some $\vy\in\md$, we show the consumption processes are identical until $\vy$ is exhausted whether the misreport agent is truthful or lies, by induction on the bundles consumed by the misreport agent under $R$. Details are in the Appendix.

\begin{restatable}{rmk}{rmkpropcpepswsp}\label{rmk:prop:cpepswsp}
    \eps{} is not \sdwspa{} under acyclic CP-net profiles even when all agents have a shared dependency graph, and agents are only allowed to misreport their CPTs.
\end{restatable}

\begin{restatable}{prop}{propcpepswsp}\label{prop:cpepswsp}
	Given any independent CP-profile $R$, \eps{}(R) is \sdwspa{} for any profile $R'$.
\end{restatable}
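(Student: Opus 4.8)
The plan is to exploit the fact that when agent $j$'s CP-net is independent, her preference over $\md$ is a product order: a bundle $\vx$ dominates $\vy$ iff it dominates coordinate-wise, and in particular her favorite available bundle is obtained by independently picking her favorite still-available item of each type. First I would set up notation: fix the profile $R$, a potential misreport $\succ'_j$, write $P = \eps{}(R)$ and $P' = \eps{}(\succ'_j, \succ_{-j})$, and assume $P'(j) \sd{j} P(j)$; the goal is to conclude $P'(j) = P(j)$. The key structural observation is that because agent $j$'s order is a product order, $\sd{j}$ refines the type-by-type stochastic dominance: for each type $i$, let $x^P_i$ be the marginal of $P(j)$ on $D_i$ (the vector of total shares agent $j$ receives of each item of type $i$), and similarly $x^{P'}_i$. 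Then $P'(j) \sd{j} P(j)$ implies $x^{P'}_i \succ^{sd}_{j,i} x^P_i$ for every type $i$, where $\succ^{sd}_{j,i}$ is single-type stochastic dominance w.r.t.\ agent $j$'s linear order on $D_i$ (from her independent CPT).

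Next I would argue that for each type $i$, the marginal process that \eps{} induces on type $i$ — restricted to agent $j$ — behaves like the single-type PS ``eating'' of type $i$: at every instant agent $j$ is eating exactly one item of type $i$, namely her favorite available one, at unit rate, and the supply of each item of type $i$ is depleted by the agents currently eating it. The subtlety is that the set of agents eating a given item of type $i$, and the \emph{times} at which items of type $i$ get exhausted, may be influenced by what happens in other types. But the claim I want is weaker: from agent $j$'s perspective, her own consumption on type $i$ under $\succ_j$ is weakly-sd-best among all ways she could eat type $i$ given how the other agents behave — and crucially, the other agents' behavior on type $i$ is \emph{unaffected} by agent $j$'s misreport restricted to type $i$... except that this is exactly the place the argument is delicate, since a misreport in type $i'$ could change when agent $j$ frees up to start eating type $i$ items, hence change the type-$i$ exhaustion schedule. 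I would handle this by invoking the same induction-on-consumed-bundles technique used for Proposition~\ref{prop:cpepsuiv}: track the consumption process of the misreporting agent step by step, and show that if she ever deviates from her truthful favorite (in some type, at some time), then from that point on her marginal on that type is \emph{strictly} sd-dominated by the truthful marginal, contradicting $x^{P'}_i \succ^{sd}_{j,i} x^P_i$ unless the deviation is vacuous (only moving mass among items she gets zero of, which by independence and the definition of $\fto$ cannot happen unless the orders agree on the relevant prefix).

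The cleanest route, which I would try first, is a reduction: show that under an independent CP-profile, agent $j$'s row $P(j)$ depends on $\succ_j$ only through the $p$ linear orders in her CPT, and moreover $x^P_i$ is exactly the outcome of single-type PS-with-eating on type $i$ where the eating \emph{start times} of all agents (including $j$) are determined by the rounds structure but — and here is the point — agent $j$'s start time on type $i$ and the other agents' entire schedules on type $i$ form a fixed ``environment'' that agent $j$'s report on type $i$ cannot improve, exactly as in the single-type weak-sd-strategyproofness proof of PS by~\citeay{Bogomolnaia01:New}. Then $x^{P'}_i \succ^{sd}_{j,i} x^P_i$ forces $x^{P'}_i = x^P_i$ for each $i$ by single-type weak-sd-strategyproofness, and one last step reconstructs the full allocation from the marginals: under an independent CP-net the bundle-level allocation agent $j$ receives under \eps{} is the ``product'' of her marginals (she always eats the product bundle of her favorite available items, so her positive-probability bundles and their weights are determined by the marginals), giving $P'(j) = P(j)$.

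The main obstacle I anticipate is precisely the coupling between types through the round/exhaustion structure: a misreport by agent $j$ in one type can shift exhaustion times, which shifts when agent $j$ (and others) switch items in \emph{other} types, so the ``environment'' agent $j$ faces on type $i$ is not literally independent of her report on type $i'$. Making the single-type reduction rigorous therefore requires showing that any such cross-type perturbation can only hurt agent $j$ (in the sd sense) on the affected types — again an induction-on-consumed-bundles argument, tracking the first time the misreport run diverges from the truthful run and showing divergence is sd-monotone-bad for agent $j$. This is the technical heart and is where I would spend most of the effort; the rest is bookkeeping.
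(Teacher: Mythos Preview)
Your proposal is essentially the paper's approach --- both rest on the same two structural claims: (i) bundle-level sd-dominance implies type-wise marginal sd-dominance under an independent CP-net (the paper's Claim~\ref{cl:sdincpnet}), and (ii) under an independent CP-net, consumption on each type is monotone, switching items exactly at exhaustion times (the paper's Claim~\ref{cl:orderinesp}) --- and both finish with a Bogomolnaia--Moulin-style induction. Two refinements in the paper's execution are worth adopting.

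First, on the cross-type coupling you flag: your phrasing ``when agent $j$ frees up to start eating type $i$ items'' is off --- in \eps{} every agent is eating exactly one item of \emph{each} type at every instant, so there is no ``freeing up''. The actual coupling is that the misreporter's \emph{choice} of type-$i$ item under $\succ'_j$ may depend (via a non-separable $\succ'_j$) on availability in other types. The paper dissolves this by observing that the non-misreporting agents still have independent CP-nets under $R'$, so by Claim~\ref{cl:orderinesp} their type-$i$ consumption depends only on type-$i$ availability. Consequently the paper runs the induction for each type $D_i$ completely separately, over the items $o_1\succ_1\cdots\succ_1 o_l$ that agent $1$ eats from $D_i$ under $R$; the key ``minimal-exhaustion-time'' contradiction (the paper's claim~(\ref{equ:claim1})) lives entirely inside $D_i$. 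So you need not induct on consumed bundles; one type at a time suffices, and this is precisely the clean ``reduction to single-type PS'' you were hoping for.

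Second, your reconstruction step (recovering $P'(j)$ from its type-wise marginals) has a gap as written. That $P(j)$ is determined by its marginals uses Claim~\ref{cl:orderinesp}, which relies on $\succ_j$ being an independent CP-net; under the arbitrary misreport $\succ'_j$ there is no a~priori reason agent $1$'s type-$i$ consumption under $R'$ is monotone, so equal marginals alone do not force $P'(1)=P(1)$. The paper sidesteps this by proving the stronger statement that $S(j,t,D_i)=S'(j,t,D_i)$ for every agent $j$, every time $t$, and every type $D_i$ --- i.e.\ the full consumption processes coincide --- from which $P'(1)=P(1)$ is immediate. Your induction, carried out carefully in the type-by-type form above, will deliver this stronger statement as a byproduct; at that point the reconstruction step can simply be dropped.
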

Independent CP-nets are a special case of acyclic CP-nets where the dependency graph has no edges, and the CPT for each type represents preferences over items of that type without any conditional dependencies on the allocation of items of other types. Strict preferences over a single type of resources are a special case where $p=1$. Moreover, when $p=1$, \eps{} is equivalent to PS, and at any point during the execution of \eps{} agents consume their favorite remaining item until it is exhausted. The proof for ~\Cref{prop:cpepswsp} is along similar lines to the proof of \sdwsp{} of PS due to~\cite{Bogomolnaia01:New}.

\section{Conclusion and Future Work}
We proposed and studied \erp{} and \eps{} as extensions of RP and PS to \mtaps{}. We also proposed \egd{} that is both \sdopta{} and \decoma{}. For future work, we are interested in axiomatic characterization of \erp{},  \eps{} and \egd{}. More generally, designing desirable mechanism for \mtaps{} is still an interesting and challenging open question.

\section{Acknowledgments}
We are grateful to the anonymous reviewers for their helpful comments. LX acknowledges NSF \#1453542 and \#1716333 for support. YC acknowledges NSFC under Grants 61772035, 61751210, and 61932001, and the National Science and Technology Major Project for IND (investigational new drug) under Grant 2018ZX09201-014 for support. HW acknowledges NSFC under Grants 61572003 and 61972005, and the National Key R\&D Program under Grants 2018YFB1003904 and 2018YFC1314200 for support.

\bibliography{references}
\bibliographystyle{aaai}

\appendix
\section{Appendix}

\paragraph{Proof of Proposition~\ref{prop:acyclicCP}.}
\propacyclicCP*
\begin{proof}
    Given any acyclic CP-net $\succ$, suppose the dependency graph in the CP-net is $G$. As $G$ has no circle, there is $D_{i_1}\in T$ such that $D_{i_1}$ has no parent and there is $o_1$ in $D_{i_1}'$ such that the agent prefers $o_1$ than any other item in $D_{i_1}'$. Then there is $D_{i_2}\in T$ such that $Pa(D_{i_2})\subseteq \{D_{i_1}\}$. By $CPT(D_{i_2})$, we know the agent has a linear preference over $D_{i_2}$ given $o_1$. So there is $o_2 \in D_{i_2}'$ such that the agent prefers $o_2$ than any other item in $D_{i_2}'$ when she is holding $o_1$. The same procedure can be easily adapted to obtain $o_3$ in $D_{i_3}'$, $o_4$ in $D_{i_4}'$,..., $o_p$ in $D_{i_p}'$.

    We claim that $o_1o_2...o_p \succ \hat o_1\hat o_2...\hat o_p$ for any $\hat o_1\hat o_2...\hat o_p \in \mathcal{D}'$ and $\hat o_1\hat o_2...\hat o_p \neq o_1o_2...o_p$. If $o_1 \neq \hat o_1$, the agent prefers $o_1$ than $\hat o_1$ and we have $o_1\hat o_2...\hat o_p\succ \hat o_1\hat o_2...\hat o_p$, else we have $o_1\hat o_2...\hat o_p = \hat o_1\hat o_2...\hat o_p$. Since $Pa(D_{i_{k+1}})\subseteq\{D_{i_1},D_{i_2},...,D_{i_k}\}$ for any $k<p$, the agent prefers $o_{k+1}$ than $\hat o_{k+1}$ given $o_1,o_2,...,o_{k}$ or $o_{k+1} = \hat o_{k+1}$. Therefor, $o_1o_2...o_k o_{k+1}\hat o_{k+2}...\hat o_p \succ o_1o_2...o_k\hat o_{k+1}\hat o_{k+2}...\hat o_p$ or $o_1o_2...o_k o_{k+1}\hat o_{k+2}...\hat o_p =o_1o_2...o_k\hat o_{k+1}\hat o_{k+2}...\hat o_p$. Since $\succ$ is transitive and $o_1o_2...o_p \neq \hat o_1\hat o_2...\hat o_p$, we have $o_1o_2...o_p \succ \hat o_1\hat o_2...\hat o_p$. Since the selection of $o_k$ ($k\leq p$) only depends on the selections of its parents, $o_1o_2...o_p$ is unique.

\end{proof}

\paragraph{Proof of Proposition~\ref{prop:cost}.}
\propcost*
\begin{proof}
\emph{(1) \erp{}.} For a preference graph with $n^p$ nodes, we can get its topological sort in $O(n^p)$ via a queue and a vector to mark. Hence, the running time of line 1 in \erp{} is $O(n^{p+1})$. Line 1 run in $O(n)$. In line 4, \erp{} cost $O(n^p)$ to get $Ext$ for each agent. Hence, line 4 cost $O(n^{p+1})$ totally. Other lines run in constant. Hence, although the computation of the expected results of \erp{} is difficult, as a random algorithm, \erp{} run in $O(n^{p+1})$ time.

\emph{(2) \eps{}.} Line 1 cost $O(n^{p+1})$ in \eps{}. Line 2 run in constant. Since we can get all $Ext$s we need for all agents in all rounds by going through each topological sort only once. The total running time in line 4 is $O(n^{p+1})$. Since \eps{} remove at least one item in $M'$ each round, \eps{} run $np$ rounds at most to consume. In each round of consumption, \eps{} cost $O(np)$ in line 5.1, 5.2, 5.4 and $O(n)$ in line 5.3. Hence, \eps{} cost $O(n^2p^2)$ to consume totally. Since $p$ is a small number, the cost of \eps{} is still $O(n^{p+1})$.

\emph{(3) \egd{}.} Line 1 cost $O(n^{p+1})$ in \egd{}. Line 4 cost $O(n^p)$ in each round and $O(n^{p+1})$ totally. There are some previous work for getting $Group(j,\succ')$ in line 5. We should compute the segmentation of $N$ such that agents with the same topological sort is in and only in the same set. To do so, we let all agents be in a same set with $n^p$ rounds following. In each round $k \leq n^p$, for each set, visit the $k$-th bundles in the topological sorts of the agents in the set, divide agents in the set into several new sets such that the visited bundles of agents in the same new set is same. That can be done in linear time by using a map with $n^p$ size in each round. The final sets make up the segmentation. The previous work cost $O(n^{p+1})$ totally. After the previous work, line 5 can be done in $O(n)$. The total cost in line 6 is $O(np)$ since there are only $np$ items totally. Hence, the running time of \egd{} is $O(n^{p+1})$.

\end{proof}

\paragraph{Proof of  Theorem~\ref{thm:erp}.}
\thmerp*
\begin{proof}
    \emph{(1) \epopt{}.} Given any priority order of agents in \erp{}, suppose $P$ is the output. Assume for the sake of contradiction that there exists an assignment $Q \neq P \in \mathcal{P}$ such that $Q \sd{} P$. Let $j$ be the first agent in the set of agents who get different bundles between $P$ and $Q$. Then, any bundle she gets in $Q$ is available in her turn in \erp{}. Let $\vx$ be the bundle agent $j$ gets in $P$. Since $Q \sd{} P$, there is a bundle $\vy \in \md$ such that $\vy \succ_j \vx$, $Q_{j,\vy} > 0$ and $\vy$ is available in agent $j$'s turn. That is contradicted with that agent $j$ gets $\vx$ in her turn. Therefore, the result of \erp{} is \sdopta{} for any fixed priority order. The expected result of \erp{} is the result by randomizing the priority orders, so it is the linear combination of some \sdopta{} discrete assignments.

    \emph{(2) \sdwef{}.} W.l.o.g we only consider agent $1$ and agent $2$. Let assignment $P$ be the expected result of \erp{}. Suppose $P(2) \sd{1} P(1)$. We need to prove that $P(2) = P(1)$.

    Suppose the topological order of agent $1$'s preference in \erp{} is $\vx_1\succ' \vx_2\succ'...\succ'\vx_{n^p}$. Let $O$ be the set of any possible priority order, $O_1$ be the set of orders in which agent $1$ is ranked in the front of agent $2$ and $O_2$ be the set of orders in which agent $2$ is ranked in the front of agent $1$. It is obvious that $\{O_1,O_2\}$ is one division of $O$. Let $f:O_1\mapsto O_2$ such that for any $o_1\in O_1$, $f(o_1)$ is the order that exchange the positions of agent $1$ and agent $2$ in $o_1$. $f$ is a one-to-one mapping and a surjection from $O_1$ to $O_2$ obviously.

    We define the proposition $F(\vx)$ as follow. For any $o_1\in O_1$, $o_2\in O_2$ and $f(o_1)=o_2$, either case (i) or case (ii) happens:
    \begin{enumerate*}[label=(\roman*)]
    \item agent $1$ gets $\vx$ under $o_1$ or $o_2$ while agent $2$ gets $\vx$ under the other order; \item both agent $1$ and agent $2$ do not get $\vx$ under $o_1$ and $o_2$.
    \end{enumerate*}

    Then, we prove that $F(\vx_1)$ is true. We consider two cases.

    \begin{enumerate}
        \item For any $o_2\in O_2$, let $o_1\in O_1$ such that $f(o_1)=o_2$. If agent $2$ gets $\vx_1$ under $o_2$, $\vx_1$ is available for agent $1$ under $o_1$ and agent $1$ gets $\vx_1$ under $o_1$.

        \item For any $o_1\in O_1$, agent $1$ is front of agent $2$ and $\vx_1$ is the first bundle in topological order of agent $1$'s preference, so agent $2$ has no chance to get $\vx_1$ under $o_1$.
    \end{enumerate}
    Based on the two cases above, we can conclude that $p_{2,\vx_1}\leq p_{1,\vx_1}$. But $P(2) \sd{1} P(1)$ leads to $p_{2,\vx_1}\geq p_{1,\vx_1}$. So $p_{2,\vx_1}=p_{1,\vx_1}$ that induce that $F(\vx_1)$ is true in addition to the two points above.

    Next, supposing for any $i<k$ ($k\in\{2,3,...,n^p\}$), $p_{2,\vx_i}=p_{1,\vx_i}$ and $F(\vx_i)$ is true, we prove that $p_{2,\vx_k}=p_{1,\vx_k}$ and $F(\vx_k)$ is true. Consider the two cases again.

    \begin{enumerate}
        \item For any $o_2\in O_2$, there is $o_1\in O_1$ such that $f(o_1)=o_2$. If agent $2$ gets $\vx_k$ under $o_2$, $\vx_k$ is available for agent $1$ under $o_1$. Besides, it means that agent $2$ does not get $\vx_i$ for any $i<k$ under $o_2$. So the case (ii) in $F(\vx_i)$ happens, agent $1$ does not get $\vx_i$ under $o_1$. Agent $1$ choose $\vx_k$ in her turn under $o_1$.

        \item For any $o_1\in O_1$, there is $o_2\in O_2$ such that $f(o_1)=o_2$. If agent $2$ dose not get $\vx_k$ under $o_1$, it dose not influence the result that the expectation of $\vx_k$ agent $1$ gets is bigger than what agent $2$ gets. If agent $2$ gets $\vx_k$ under $o_1$, $\vx_k$ is available for agent $1$ under $o_2$. Agent $2$ does not get $\vx_i$ for any $i<k$ under $o_1$ and the case (ii) in $F(\vx_i)$ happens, that indicates Agent $1$ does not get $\vx_i$ for any $i<k$ under $o_2$. Hence, Agent $1$ gets $\vx_k$ under $o_2$.
    \end{enumerate}

    The two cases above imply that $p_{2,\vx_k}\leq p_{1,\vx_k}$.Since $P(2) \sd{1} P(1)$ and $p_{2,\vx_i}=p_{1,\vx_i}$ for any $i<k$ and $\ucs(\succ_1,\vx_k)\subseteq \{\vx_i: i \leq k\}$, we have $p_{2,\vx_k}\geq p_{1,\vx_k}$. Hence $p_{2,\vx_k}= p_{1,\vx_k}$ that induce that $F(\vx_k)$ is true in addition to the two points above.

    We have $P(2) = P(1)$ by induction.

    \emph{(3) \sdwsp{}.} W.l.o.g Suppose agent $1$ is the misreport agent. Given any partial preference $R$, let $R' = (\succ'_1, \succ_{-1})$ and $\succ'$ be the fixed topological order over $\succ_1$ in \erp{}. Let assignment $P$ and $P'$ be the expected results of \erp{} under $R$ and $R'$. Suppose $P' \succ^{sd}_1 P$. Assume for the sake of contradiction that $P'(1) \neq P(1)$. Let $\vx_0$ be the first bundle in $\succ'$ such that $P'_{1,\vx_0} \neq P_{1,\vx_0}$. We have $P'_{1,\vx_0} > P_{1,\vx_0}$ since $P' \succ^{sd}_1 P$. Under a priority order of agent, if agent $1$ gets $\vy\in\md$ such that $\vx_0 \succ' \vy$ under $R$, $\vx_0$ is unavailable under $R$ and $R'$. Hence, there is a priority order $\rhd_1$ such that agent $1$ gets $\vx_0$ under $R'$ and gets $\vx_1$ where $\vx_1 \succ' \vx_0$ under $R$, otherwise we have $P'_{1,\vx_1} \leq P_{1,\vx}$, a contradiction. If $\vx_1$ isn't the first bundle in $\succ'$, since $P'_{1,\vx_1} = P_{1,\vx}$, there is a priority order $\rhd_2$ such that agent $1$ gets $\vx_1$ under $R'$ and gets $\vx_2$ where $\vx_2 \succ' \vx_1$ under $R$. Continue the process until we find the first bundle $\vx_i$ in $\succ'$ and there is no such priority order for $\vx_i$. Hence, $P_{1,\vx_i} > P'_{1,\vx_i}$, a contradiction with $P' \succ^{sd}_1 P$. Therefore, $P'(1) = P(1)$.

    \emph{(4) \etoe{} and \decom{}.} That is obvious.

\end{proof}

\paragraph{Counterexample for Remark~\ref{rmk:thm:erp}.}
\rmkthmerp*
\begin{proof}
    Consider the case where $N = \{1,2\}$, $M = D_1 = \{1_F,2_F\}$, and $R=\{\succ_1,\succ_2\}$ where $\succ_1 = \emptyset$, $\succ_2 = \{1_F \succ_2 2_F\}$. In \erp{}, the fixed topological orders are $1_F \succ' 2_F$ for $\succ_1$ and $1_F \succ' 2_F$ for $\succ_2$. It is obvious that agents both get 0.5 $1_F$ and 0.5 $2_F$ under this situation. Let $R' = \{\succ'_1,\succ_2\}$ where $\succ'_1 = \{2_F \succ'_1 1_F\}$. Under $R'$, agent $1$ gets $2_F$ and agent $2$ gets $1_F$. Let $P$ and $P'$ be the expected output of \erp{}. We don't have $P \succ^{sd}_1 P'$. Hence, \erp{} is not \sdspa{} under general partial preferences. $\succ'_1$ is an \upivtr{} of $\succ_1$ at $2_F$ under $P$, but $P'_{1,2_F} \neq P_{1,2_F}$. Hence, \erp{} is not \upiva{} under general partial preferences.

\end{proof}

\paragraph{Proof of Theorem~\ref{thm:eps}.}
\thmeps*
\begin{proof}

    \emph{(1) \sdopt{}.} Given any partial preference $R$ and any assignment $P$ for an \mtap, a set $C \subseteq \gcycle(P)$ is a generalized cycle if it holds for every $o \in M$ that: if there exists an improvable tuple ${(\vx_1,\hat\vx_1)} \in C$ such that $o\in \vx_1$, then there exists a tuple ${(\vx_2,\hat\vx_2)} \in C$, such that $o\in \hat\vx_2$.

    \begin{restatable}{claim}{claimgc}\label{claim:gc}
    Given any partial preference profile $R$ and assignment $P$, $P$ is \sdopta{} if $P$ admits no generalized cycle at $R$.
    \end{restatable}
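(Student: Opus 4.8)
The plan is to prove the contrapositive: if $P$ is not \sdopta{}, then $P$ admits a generalized cycle at $R$. So suppose there exists $Q \neq P$ with $Q \sd{} P$. I would first understand what $Q \sd{\null} P$ means structurally: for each agent $j$, viewing $P(j)$ and $Q(j)$ as distributions over $\md$, the allocation $Q(j)$ stochastically dominates $P(j)$ with respect to $\succ_j$. Consider the ``difference'' $Q(j) - P(j)$; the stochastic-dominance inequalities $\sum_{\hat\vx \in \ucs(\succ_j, \vx)}(Q_{j,\hat\vx}-P_{j,\hat\vx}) \ge 0$ for all $\vx$, together with $\sum_\vx (Q_{j,\vx}-P_{j,\vx})=0$, say that the signed measure $Q(j)-P(j)$ can be decomposed into a nonnegative combination of elementary ``shifts'' of mass from a bundle $\hat\vx$ down to a less-preferred bundle $\vx$ with $\hat\vx \succ_j \vx$ (this is the standard ``stochastic dominance = sequence of improving transfers'' lemma for partial orders, restricted here to comparable pairs). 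Since $Q \neq P$, at least one such strictly positive shift exists, and for every such shift on agent $j$ from $\hat\vx$ to $\vx$ we have $P_{j,\vx} > 0$ (mass is being moved out of $\vx$ in going from $Q$ back to $P$), so $(\hat\vx,\vx) \in \gcycle(P,R)$.

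Next I would use the column (supply) constraints to build the generalized cycle. Both $P$ and $Q$ are valid \ram{} assignments, so for every item $o \in M$, $\sum_{j,\ \vx \ni o} P_{j,\vx} = \sum_{j,\ \vx \ni o} Q_{j,\vx} = 1$. Hence for each item $o$, the total signed change $\sum_{j,\ \vx\ni o}(Q_{j,\vx}-P_{j,\vx}) = 0$. Now I would collect the set $C$ of all improvable tuples $(\hat\vx,\vx)$ arising (over all agents) in the above decomposition with a strictly positive coefficient. The claim is that $C$ is a generalized cycle: fix any item $o$ and suppose some $(\hat\vx_1,\vx_1)\in C$ has $o \in \hat\vx_1$. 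I want to produce $(\hat\vx_2,\vx_2) \in C$ with $o \in \vx_2$. If not, then at item $o$, among all the elementary mass-shifts contributing to $\bigcup_j (Q(j)-P(j))$, item $o$ only ever appears on the ``from'' side (the $\hat\vx$'s that gain mass going $P\to Q$) and never on the ``to'' side — more care is needed because a single shift $(\hat\vx,\vx)$ could contain $o$ in both bundles or in neither; I would set up the bookkeeping so that a shift $(\hat\vx,\vx)$ contributes $+c$ to bundles containing... actually the cleanest route: restrict attention to the net flow of mass into and out of the set $S_o = \{\vx : o \in \vx\}$ across all agents. Each elementary shift either keeps both endpoints in $S_o$, keeps both out, moves mass from $S_o^c$ into $S_o$, or moves mass from $S_o$ into $S_o^c$; the supply equality forces the total ``in'' to equal the total ``out.'' If there is a shift of the last type that is "witnessed" by $o$ appearing on the from-side of some $(\hat\vx_1,\vx_1)$, the balance forces a shift of the in-type, giving the required $(\hat\vx_2,\vx_2)$ with $o \in \vx_2$.

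The main obstacle I anticipate is exactly this last bookkeeping step: translating the aggregate supply-conservation equality at each item $o$ into the combinatorial statement about the tuple set $C$, because the elementary-transfer decomposition of a stochastic-dominance difference over a general partial order is not unique, and a single transfer $(\hat\vx,\vx)$ may involve an item $o$ asymmetrically (in $\hat\vx$ but not $\vx$, or vice versa, or in both). I would handle this by choosing a convenient decomposition — e.g., decompose $Q(j)-P(j)$ greedily along edges of the preference graph $G_{\succ_j}$ so that each elementary transfer moves mass along a single cover edge $\hat\vx \to \vx$ — and then define, for each item $o$, the quantity $\Delta_o^{\text{out}} = \sum (\text{transfers with } o\in\hat\vx,\ o\notin\vx)$ and $\Delta_o^{\text{in}} = \sum(\text{transfers with } o\notin\hat\vx,\ o\in\vx)$, coefficients included; transfers with $o$ in both or neither endpoint cancel in the supply sum, so $\Delta_o^{\text{out}} = \Delta_o^{\text{in}}$. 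Thus if some transfer in $C$ has $o \in \hat\vx_1$ and $o \notin \vx_1$ then $\Delta_o^{\text{out}}>0$, hence $\Delta_o^{\text{in}}>0$, hence some transfer $(\hat\vx_2,\vx_2)\in C$ has $o\in\vx_2$; and the remaining case where every transfer touching $o$ has $o\in\hat\vx_1 \Rightarrow o\in\vx_1$ trivially supplies $(\hat\vx_2,\vx_2)=(\hat\vx_1,\vx_1)$ itself. Either way the defining property of a generalized cycle holds for $C$, and since $C\subseteq \gcycle(P,R)$ is nonempty, $P$ admits a generalized cycle, completing the contrapositive.
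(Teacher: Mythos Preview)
Your approach has a genuine gap: the decomposition lemma you invoke --- that $Q(j)-P(j)$ can be written as a nonnegative combination of elementary transfers $e_{\hat\vx}-e_{\vx}$ with $\hat\vx\succ_j\vx$ --- is \emph{false} for the paper's notion of stochastic dominance, which only tests \emph{principal} upper contour sets $\ucs(\succ_j,\vx)$, not all up-sets. Concretely, take the four-element ``diamond'' order $a\succ b$, $a\succ c$, $b\succ d$, $c\succ d$, $b\perp c$, and set $p=(0,\tfrac12,\tfrac12,0)$, $q=(\tfrac12,0,0,\tfrac12)$ in coordinates $(a,b,c,d)$. All four principal-up-set inequalities hold, so $q\sd{}p$; yet $q-p$ assigns $+\tfrac12$ to the minimal element $d$, and no nonnegative combination of upward transfers can do that (nothing lies below $d$). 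The ``standard'' result you have in mind is Strassen's theorem, which needs the stronger hypothesis $q(U)\ge p(U)$ for \emph{every} up-set $U$ --- and indeed $q(\{a,b,c\})=\tfrac12<1=p(\{a,b,c\})$ here. So your $C$ is not well-defined in general, and the flow bookkeeping built on it never gets started. (This diamond can be embedded as one agent's preference in a full \mtap{} instance where $Q\sd{}P$ holds globally, so the issue is not merely about a single allocation in isolation.)

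The paper's argument avoids decompositions altogether and is much shorter. It takes $C$ to be the set of pairs $(\vx,\hat\vx)$ for which some agent $j$ has $\vx\succ_j\hat\vx$, $Q_{j,\vx}>P_{j,\vx}$, and $Q_{j,\hat\vx}<P_{j,\hat\vx}$ --- defined purely from the sign pattern of $Q-P$. Nonemptiness is immediate: any $j$ with $Q(j)\neq P(j)$ has some $\hat\vx$ with $Q_{j,\hat\vx}<P_{j,\hat\vx}$, and the SD inequality at $\hat\vx$ then supplies a dominating $\vx$. For the generalized-cycle property, suppose some item $o$ occurs only in left components of tuples in $C$. Then for every bundle $\vy\ni o$ and every agent $j$ one has $Q_{j,\vy}\ge P_{j,\vy}$, since a strict reverse inequality would (again via SD at $\vy$) place $\vy$ on the right of some tuple in $C$. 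As $o$ does sit on the left somewhere, at least one of these inequalities is strict, and summing over all $j$ and all $\vy\ni o$ contradicts the supply constraint $\sum_{j,\,\vy\ni o}Q_{j,\vy}=\sum_{j,\,\vy\ni o}P_{j,\vy}=1$. No transfer decomposition, and no in-flow/out-flow accounting, is needed.
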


    \begin{claimproof}
        By inversions, we assume that the assignment $P$ admits no generalized cycle at $R$ and there exists another assignment $Q\neq P$ such that $Q \sd{\null} P$. We try to find a generalized cycle in $P$.

        Let $\hat N = \{j\in N: P(k) \neq Q(k)\}$ and $\hat N \neq \emptyset$ because $Q\neq P$. Let C be the set of all tuples ($\vx, \hat\vx$) such that for some $j \in \hat N$, $\vx \succ_j \hat\vx$ and $q_{j,\vx} > p_{j,\vx}$, $q_{j,\hat\vx} < p_{j,\hat\vx}$. Because $p_{j,\hat\vx} > q_{j,\hat\vx} \geq 0$, we have $C \subseteq \gcycle(P)$.

        For any $j \in \hat N$, there exists $\vx \in \mathcal{D}$ such that $q_{j,\vx} \neq p_{j,\vx}$. If we have $q_{j,\vx} > p_{j,\vx}$ for all $q_{j,\vx} \neq p_{j,\vx}$, we would have $1=\sum_{\vx\in\mathcal{D}}q_{j,\vx} > \sum_{\vx\in\mathcal{D}}p_{j,\vx}=1$ which is a contradiction. Hence there exists $\vx_0 \in \mathcal{D}$ such that $q_{j,\vx_0} < p_{j,\vx_0}$. We have $\sum_{\vx \in \ucs(\succ_j, \vx_0)} q_{j,\vx} \geq \sum_{\vx \in \ucs(\succ_j, \vx_0)} p_{j,\vx}$ because $Q \succ P$. Hence, there exists some $\vx_1 \in \mathcal{D}$ such that $\vx_1 \succ_j \vx_0$ and $q_{j,\vx_1} > p_{j,\vx_1}$, otherwise we induce a contradiction that $\sum_{\vx \in \ucs(\succ_k, \vx_0)} q_{j,\vx} < \sum_{\vx \in \ucs(\succ_k, \vx_0)} p_{j,\vx}$. Therefore, $(\vx_1,\vx_0) \in C$ and $C \neq \emptyset$.

        Suppose for the sake of contradiction that $C$ is not a generalized cycle. Then there exists an item $o$ such that for any $\vx \in \mathcal{D}$ containing $o$ and there is $\vx$ in some tuple in $C$, $\vx$ is always the left component of any improvable tuple involving $\vx$. Obviously, we have $q_{j,\vx} = p_{j,\vx}$ for any $j\notin \hat N$. Suppose there exists $j \in \hat N$ such that $q_{j,\vx} < p_{j,\vx}$. Since $\sum_{\hat\vx\in \ucs(\succ_j,\vx)} q_{j,\hat\vx} \geq \sum_{\hat\vx\in \ucs(\succ_j,\vx)} p_{j,\hat\vx}$, there exists some $\hat\vx \in \mathcal{D}$ such that $\hat\vx \succ_j \vx$ and $q_{j,\hat\vx} > p_{j,\hat\vx}$. Hence $(\hat\vx, \vx) \in C$ and $\vx$ is the right component which is a contradiction to $\vx$ is always the left component. Therefore, for any $j\in \hat N$, we have $q_{j,\vx} \geq p_{j, \vx}$. Since $\vx$ is the left component of some tuple in $C$, there exists $j_1\in N$, such that $q_{j_1,\vx} > p_{j_1,\vx}$. Therefore $1 = \sum_{j\in N, o\in\vx}q_{j,\vx} > \sum_{j\in N, o\in\vx}p_{j,\vx} = 1$, a contradiction. Therefore for any tuple $(\vx, \hat\vx) \in C$ and $o\in \vx$, there is a tuple $(\hat\vy, \vy)\in C$ such that $o\in\vy$. Hence $C$ is a generalized cycle.
    \end{claimproof}

    Suppose $P$ is the output of \eps{} given any partial preference profile $R$. We next prove $P$ admits no generalized cycle, and therefore is \sdopta{} by Lemma~\ref{claim:gc}.

    Suppose for the sake of contradiction that $P$ admits a generalized cycle $C$. Let $(\vx, \hat \vx)$ be an arbitrary improvable tuple in $C$. Let \emph{Seq} be the partial order on $M$ which reflects the time when items are exhausted. Formally, for any pair of items $o$ and $\hat o$ exhausted at times $t$ and $\hat t$ during the execution of ESP respectively, if $t \leq \hat t$, then $o$ \emph{Seq} $\hat o$.

    Let $\hat M$ be the set of items that are in some left components involved in C, and let $\hat o \in \hat M$ be one of top items according to \emph{Seq}. By definition of generalized cycles, there is an improvable tuple $(\vx, \hat\vx)\in C$ such that $\hat o \in \hat\vx$. Since $(\vx, \hat\vx) \in \gcycle(P)$, there exists an agent $j\in N$ such that $\vx \succ_j \hat\vx$ and $p_{j,\hat\vx} > 0$. Hence when agent $j$ starts to consume $\hat\vx$, $\vx$ is unavailable and there is an item $o\in \vx$ such that $o$ is unavailable when $j$ starts to consume $\hat\vx$. If we use $t(o)$ and $t(\hat o)$ to stand for the times when $o$ and $\hat o$ exhausted in \eps{} respectively, we have $t(o) \leq t(\hat o) - p_{j,\hat \vx} < t(\hat o)$. Besides, since $(\vx, \hat\vx) \in C$, we have $o\in \hat M$, a contradiction to that $\hat o$ is one of top items according to \emph{Seq} in $\hat M$.

    \emph{(2) \sdwef{}.} Suppose $P$ is the output of \eps{} given any partial preference profile $R$. W.l.o.g we only prove the case for agent $1$ and agent $2$, that is if $P(2) \sd{1} P(1)$, then $P(1) = P(2)$.

    Suppose $P(2) \sd{1} P(1)$. Suppose the topological order of agent $1$'s preference in \eps{} is $\vx_1 \succ' \vx_2 \succ' ...\succ' \vx_{n^p}$. There exists $0=t_0\leq t_1 \leq \cdots \leq t_{n^p} = 1$ such that for any $i \leq n^p$, if $p_{1,\vx_i} > 0$, $t_i> t_{i-1}$ and $t_i$ is the time when $\vx_i$ is exhausted, else $t_i = t_{i-1}$ and $\vx_i$ is unavailable at $t_i$.

    At the beginning, any bundle is available. Hence, $p_{1,\vx_1} > 0$ and agent $1$ starts to consume $\vx_1$ at $t_0 = 0$ till $\vx_1$ is exhausted. That indicates $p_{1,\vx_1} \geq p_{2,\vx_1}$. Since $P(2) \sd{1} P(1)$ and $\vx_1$ is the first bundle in the topological order over $\succ_1$, we have $p_{2,\vx_1} = \sum_{\vx \in \ucs(\succ_1,\vx_1)} p_{2,\vx} \geq \sum_{\vx \in \ucs(\succ_1,\vx_1)} p_{1,\vx} = p_{1,\vx_1}$. Therefore, $p_{1,\vx_1} = p_{2,\vx_1}$ and agent $1$ and agent $2$ are both consuming $\vx_1$ in the whole interval of $[t_0, t_1)$.

    Suppose for any $i < k$, $p_{1,\vx_i} = p_{2,\vx_i}$ and agent $1$ and agent $2$ are both consuming the same bundle at any time in $[t_{i-1}, t_i)$. We want to prove agent $1$ and agent $2$ are both consuming the same bundle at any time in $[t_{k-1}, t_k)$. If $t_k = t_{k-1}$, that is true for $[t_{k-1}, t_k)$. If $t_k > t_{k-1}$, agent $1$ is consuming $\vx_k$ in the whole interval of $[t_{k-1}, t_k)$ but never in $[t_0, t_{k-1})$. Hence, agent $2$ never consumes $\vx_k$ in $[t_0, t_{k-1})$. Since $\vx_k$ is exhausted at $t_k$, we have $p_{1,\vx_k} = t_k - t_{k-1} \geq p_{2,\vx_k}$. Since $P_2 \sd{1} P_1$, we have $\sum_{\vx \in \ucs(\succ_1,\vx_k)} p_{2,\vx} \geq \sum_{\vx \in \ucs(\succ_1,\vx_k)} p_{1,\vx}$. Since $\ucs(\succ_1,\vx_k) \subseteq \{\vx_1, \vx_2,..., \vx_{k-1}, \vx_k\}$ and $p_{1,\vx_i} = p_{2,\vx_i}$ for any $i < k$, we have $p_{2,\vx_k} \geq p_{1,\vx_k}$. Therefore, $p_{1,\vx_k} = p_{2,\vx_k}$ and agent $1$ and agent $2$ are both consuming $\vx_k$ in the whole interval of $[t_{k-1}, t_k)$.

    By induction, we have $p_{1,\vx_k} = p_{2,\vx_k}$ for any $k \leq n^p$. Hence $P(1) = P(2)$.

    \emph{(3) \etoe{}.} That is obvious.

\end{proof}

\paragraph{Counterexamples for Remark~\ref{rmk:thm:eps}.}
\rmkthmeps*
\begin{proof}
    \emph{(1) \eps{} is not \epopta{} coming to multi-type resources.} We consider a special case for partial preferences, CP-net preferences with the shared dependency graph as shown in Figure~\ref{fg:popg3}. We can induce that the preferences are $1_F2_B\succ_1 1_F1_B\succ_1 2_F1_B\succ_1 2_F2_B$ and $1_F1_B\succ_2 1_F2_B\succ_2 2_F2_B\succ_2 2_F1_B$. $\eps{}(R)$ is shown in assignment~\ref{m4} which cannot be realized obviously.

 	\begin{equation}\label{m4}
	\begin{tabular}{|l|c|c|c|c|}\hline
	&  $1_F1_B$   & $1_F2_B$  & $2_F1_B$    & $2_F2_B$  \\
	\hline
	\text{Agent 1} &  0  & 0.5   & 0.5     & 0    \\
	\hline
	\text{Agent 2} &  0.5    & 0   & 0     & 0.5    \\
	\hline
	\end{tabular}
	\end{equation}
	
    \begin{figure}[h]
        \centering
        \includegraphics[width=0.35\textwidth]{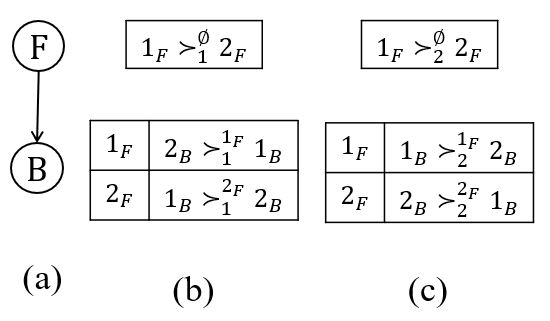}
        \caption{CP-net preferences with the shared dependency graph: (a) the dependency graph for agents; (b) CPTs for agent $1$; (c) CPTs for agent $2$.}
        \label{fg:popg3}
    \end{figure}

    \emph{(2) \eps{} is not \orfra{} and \sdefa{} under general partial preferences.} Consider the case where $N = \{1,2\}$, $M = D_1 = \{1_F,2_F\}$, and $R=\{\succ_1,\succ_2\}$ where $\succ_1 = \emptyset$, $\succ_2 = \{1_F \succ_2 2_F\}$. In \eps{}, the fixed topological orders are $2_F \succ' 1_F$ for $\succ_1$ and $1_F \succ' 2_F$ for $\succ_2$. It is obvious agent $1$ gets $2_F$ and agent $2$ gets $1_F$. Let $P = \eps{}(R)$. We have $P_{2,1_F} > 0$ and $\sum_{\vx \in \ucs(\succ_2, 1_F)}P_{2,\vx} > \sum_{\vx \in \ucs(\succ_1, 1_F)}P_{1,\vx}$. Hence, \eps{} is not \orfra{}. We don't have $P(1) \succ^{sd}_1 P(2)$. Hence, \eps{} is not \sdefa{}.

    \emph{(3) \eps{} is not \upiva{} under general partial preferences.} The counterexample of \upiv{} for \erp{} can be also an counterexample for \eps{}.

\end{proof}

\paragraph{Proof of Theorem~\ref{thm:egd}.}
\thmegd*

\begin{proof}

    Given any partial preference $R$, there are $n$ eating rounds in \egd{}. For each round $j \leq n$, let $\topb{j}{}$ be the first available bundle in agent $j$'s topological sort and $Group(j,\succ')$ be the set of agents who have the same topological sort $\succ'_j$. Then for each $j'\in Group(j,\succ')$, $P_{j',\topb{j}{}}\gets \frac{1}{|Group(j,\succ')|}$.

    \emph{(1) \sdopt{}.} Given any partial preference profile $R$, let $P = \egd{}(R)$. Assume for the sake of contradiction that there exists an assignment $Q \neq P \in \mathcal{P}$ such that $Q \sd{} P$.

    In round $j = 1$, for each $j'\in Group(j,\succ')$, since $\sum_{\vx\in U(\succ_{j'},\topb{1}{})}P_{j',\vx} \leq \sum_{\vx\in U(\succ_{j'},\topb{1}{})}Q_{j',\vx}$ and $U(\succ_{j'},\topb{1}{}) = \{\topb{1}{}\}$, we have $P_{j',\topb{1}{}} \leq Q_{j',\topb{1}{}}$. Since $\sum_{j'\in Group(1,\succ')} P_{j',\topb{1}{}} = 1 \geq \sum_{j'\in Group(1,\succ')} Q_{j',\topb{1}{}}$, we have $P_{\hat{j},\topb{1}{}} = Q_{\hat{j},\topb{1}{}}$ for each $\hat{j}\leq n$. Suppose for $j < k$, $P_{\hat{j},\topb{j}{}} = Q_{\hat{j},\topb{j}{}}$ for each $\hat{j}\leq n$. When $j = k$, for any $j'\in Group(k,\succ')$ and any $\vx \neq \topb{k}{} \in U(\succ_{j'},\topb{k}{})$, $\vx$ is unavailable. For any $\vx \neq \topb{k}{} \in U(\succ_{j'},\topb{k}{})$,  $Q_{j',\vx} \leq P_{j',\vx}$ since $Q_{j',\vx} > P_{j',\vx}$ implies $\vx$ is available in round $k$. Hence, $Q_{j',\vx} = P_{j',\vx}$, otherwise, $Q_{j',\vx} < P_{j',\vx}$ would cause a contradiction with $Q \sd{} P$. By $\sum_{\vx\in U(\succ_{j'},\topb{k}{})}P_{j',\vx} \leq \sum_{\vx\in U(\succ_{j'},\topb{k}{})}Q_{j',\vx}$, we have $P_{j',\topb{k}{}} \leq Q_{j',\topb{k}{}}$ for each $j'\in Group(k,\succ')$. Since $\sum_{j'\in Group(k,\succ')} P_{j',\topb{k}{}} = 1 \geq \sum_{j'\in Group(k,\succ')} Q_{j',\topb{k}{}}$, we have $P_{\hat{j},\topb{k}{}} = Q_{\hat{j},\topb{k}{}}$ for each $\hat{j}\leq n$. By induction on $k$, we have $P_{\hat{j},\topb{j}{}} = Q_{\hat{j},\topb{j}{}}$ for each $\hat{j},j\leq n$. Hence, $P = Q$, a contradiction.

    Therefore, $P$ is \sdopta{}.

    \emph{(2) \etoe{}.} That is obvious.

    \emph{(3) \decom{}.} Let $S = \{ Group(j, \succ') : j \leq n\}$ which is an segmentation of $N$. For $l \leq |S|$, $s_l \in S$ and $m \leq |s_l|$, sort agents in $s_l$ in ascending order and let $s_{l,m}$ be the $m$-th agent. Let $k$ be the least common multiple of $\{|s|: s \in S\}$. We forge a set of $k$ priority orders denoted $O$. For $u \leq k$ and $v \leq n$, let $O_{u,v}$ be the $v$-th agent in the $u$-th priority order in $O$. For each $u \leq k$, $l \leq |S|$, and $m \leq |s_l|$, let $w = ((m+u-2)\bmod |s_l|)+1$ and $O_{u,(s_{l,m})} = s_{l,w}$. Consider the special random priority mechanism that random in $O$ with $\frac{1}{k}$ possibility for each priority order. Since agents in the same group have the same partial preference, it is obvious the expected result of the mechanism is that for each $j'\in Group(j,\succ')$, $P_{j',\topb{j}{}}\gets \frac{1}{|Group(j,\succ')|}$. That is same as $\egd{}(R)$. Since the expected result of random priority mechanism is \decoma{}, $\egd{}(R)$ is \decoma{}.

    \emph{(4) \epopt{}.} Decomposability and \sdopt{} induce \epopt{}~\cite{Bogomolnaia01:New}.

\end{proof}

\paragraph{Counterexamples for Remark~\ref{rmk:thm:egd}.}
\rmkthmegd*
\begin{proof}
Consider the case where $N = \{1,2,3\}$, $M = D_1 = \{1_F, 2_F, 3_F\}$, and $R = \{\succ_1, \succ_2, \succ_3\}$ where $\succ_1 = \{1_F \succ_1 2_F, 1_F \succ_1 3_F, 2_F \succ_1 3_F\}$, $\succ_2 = \{1_F \succ_2 3_F, 1_F \succ_2 2_F, 3_F \succ_2 2_F\}$, and $\succ_3 = \{3_F \succ_3 1_F, 3_F \succ_3 2_F, 1_F \succ_3 2_F\}$. It is obvious that agent $1$ gets $1_F$, agent $2$ gets $3_F$ and agent $3$ gets $2_F$ in \egd{}. Let $P = \egd{}(R)$. Since $P_{1,1_F} = 1 > P_{2, 1_F}$ and $1_F$ is better than $3_F$ for agent $2$, \egd{} is not \orfra{} and \sdwefa{}. Suppose agent $3$ misreport her preference as same as agent $1$, which is an \upivtr{} of $\succ_3$ at $2_F$ under $P$. Agent $1$ and agent $3$ share their bundles. Agent $3$ gets 0.5 $1_F$ and 0.5 $2_F$ which is better than only $2_F$ for him. Hence, \egd{} is not \upiva{} and \sdwspa{}.

\end{proof}

\paragraph{Proof of Proposition~\ref{prop:noboth}.}
\propnoboth*
\begin{proof}
    We just consider the case of the allocation of single-type items and it is easy to example in similar manner when $p\geq$. Consider the case where $N = \{1,2,3\}$, $M = D_1 = \{1_F,2_F,3_F\}$, and $R=(\succ_j)_{j\leq 3}$ where $\succ_1 = \{1_F\succ_1 2_F, 1_F\succ_1 3_F, 2_F\succ_1 3_F\}$, $\succ_2 = \{3_F\succ_2 2_F, 3_F\succ_2 1_F, 2_F\succ_2 1_F\}$, and $\succ_3 = {\emptyset}$. It is easy to conclude that assignment (\ref{nobothm1}) is the only assignment that agent $3$ does not envy others under the concept of \sdef{}. However, it is dominated by assignment (\ref{nobothm2}).

	\begin{equation}\label{nobothm1}
	\begin{tabular}{|l|c|c|c|}\hline
	 &  $1_F$   & $2_F$  & $3_F$    \\
	\hline
	\rule{0pt}{11pt}\text{Agent 1} &  $\frac{1}{3}$  & $\frac{1}{3}$   & $\frac{1}{3}$  \\
	\hline
	\rule{0pt}{11pt}\text{Agent 2} &  $\frac{1}{3}$  & $\frac{1}{3}$   & $\frac{1}{3}$  \\
	\hline
	\rule{0pt}{11pt}\text{Agent 3} &  $\frac{1}{3}$  & $\frac{1}{3}$   & $\frac{1}{3}$  \\
	\hline
	\end{tabular}
	\end{equation}
	\begin{equation}\label{nobothm2}
	\begin{tabular}{|l|c|c|c|}\hline
	&  $1_F$   & $2_F$  & $3_F$    \\
	\hline
	\rule{0pt}{11pt}\text{Agent 1} &  $\frac{2}{3}$  & $\frac{1}{3}$   & $0$  \\
	\hline
	\rule{0pt}{11pt}\text{Agent 2} &  $0$  & $\frac{1}{3}$   & $\frac{2}{3}$  \\
	\hline
	\rule{0pt}{11pt}\text{Agent 3} &  $\frac{1}{3}$  & $\frac{1}{3}$   & $\frac{1}{3}$ \\
	\hline
	\end{tabular}
	\end{equation}
	
\end{proof}

\paragraph{Proof of Proposition~\ref{prop:cperpuiv}.}
\propcperpuiv*
\begin{proof}
    W.l.o.g we only prove the cases when agent $1$ lies. Given any CP-profile $R = (\succ_j)_{j\leq n}$, let $P$ be the expected output of $\erp{}$ under $R$. For any CP-profile $R' = (\succ'_1,\succ_{-1})$ where $\succ'_1$ is an upper invariant transformation of $\succ_1$ at $\vy \in \md$, let $P'$ be the expected output of $\erp{}$ under $R'$. Next, we prove that $P_{j,\vy} = P'_{j,\vy}$ for any $j \leq n$ by showing that given any priority order $\rhd$ over agents, any agent gets $\vy$ under $R'$ if and only if she gets $\vy$ under $R$.

    Since the preferences of other agents keep same, given $\rhd$, the sets of available bundles are same in agent $1$'s turn under $R$ and $R'$. Suppose $\md' \subseteq \md$ is the set of all available bundles in agent $1$'s turn in $\erp{}$ under $R$ and $R'$. Suppose agent $1$ gets $\vx \in \md'$ under $R$ and $\vx' \in \md'$ under $R'$. By Proposition~\ref{prop:acyclicCP}, $\vx \succ_1 \vz$ for any $\vz \neq \vx \in \md'$ and $\vx' \succ'_1 \vz'$ for any $\vz' \neq \vx' \in \md'$. Given $\rhd$, we divide all situations to two cases, $\vy \in \md'$ and $\vy \notin \md'$.

    Case 1, $\vy \in \md'$. We have $\vx \in \ucs{}(\succ_1,\vy)$ and $\vx' \in \ucs{}(\succ'_1,\vy)$. Since $P_{1,\vx}>0$, we have $\vx \in \ucs{}(\succ'_1,\vy)$ by the definition of $\upivtr{}$. If $\vx \neq \vx'$, we have $\vx \succ_1 \vx'$ and $\vx' \succ'_1 \vx$ which is a contradiction to the definition of \upivtr{} where $\succ'_1 \mid_{\ucs{}(\succ'_1,\vy)} = \succ_1 \mid_{\ucs{}(\succ'_1,\vy)}$. Hence, $\vx = \vx'$ and agent $1$ gets $\vx$ under $R$ and $R'$. Therefore, all agents get the same bundles under $R$ and $R'$ in this case.

    Case 2, $\vy \notin \md'$. Agents after agent $1$ in $\rhd$ and agent $1$ get no $\vy$ both under $R$ and $R'$. Besides, agents before agent $1$ in $\rhd$ get the same bundles under $R$ and $R'$. Hence, one agent gets $\vy$ under $R'$ if and only if she gets $\vy$ under $R$ in this case.

    To sum up the two cases, given any priority order $\rhd$ over agents, any agent gets $\vy$ under $R'$ if and only if she gets $\vy$ under $R$. Therefore, $P_{j,\vy} = P'_{j,\vy}$ for any $j \leq n$.

\end{proof}

\paragraph{Proof of Proposition~\ref{prop:cpepsuiv}.}
\propcpepsuiv*
\begin{proof}
    W.l.o.g we only prove the cases when agent $1$ lies. Given any CP-profile $R = (\succ_j)_{j\leq n}$ and any CP-profile $R' = (\succ'_1,\succ_{-1})$ where $\succ'_1$ is an upper invariant transformation of $\succ_1$ at $\vy \in \md$, let $P = \eps{}(R)$ and $P' = \eps{}(R')$.

    Let $\hat{\md} = \{\vx\in \md: P_{1,\vx} > 0\}$ and $\hat n = | \hat{\md} |$. By Proposition~\ref{prop:acyclicCP}, we have an order over $\hat{\md}$ such that $\vx_1 \succ_1 \vx_2 \succ_1 \cdots \succ_1 \vx_{\hat n}$, $\vx_i \in \hat{\md}$ for any $i \leq \hat n$. There is $\vx_{i_0}$ such that under $R$, when agent $1$ starts to consume $\vx_{i_0}$, $\vy$ is available and after $\vx_{i_0}$ is exhausted, $\vy$ is unavailable. Suppose $t_i$ is the time when $\vx_i$ is exhausted for $i\leq \hat n$ under $R$. Let $t_0 = 0$. Next, we prove that the consumption processes of all agents are same before $t_{i_0}$ under $R$ and $R'$ by induction.

    Suppose before $t_k$ ($k < i_0$), the consumption processes of all agents are same under $R$ and $R'$ (That is true for $k = 0$). Hence, the remaining resources are same under $R$ and $R'$ at $t_k$. If agent $1$ turns to consume $\vx_{k+1}$ under $R'$ at $t_k$, the consumption processes are same between $t_k$ and $t_{k+1}$ because the preferences of the other agents are same under $R$ and $R'$. Next, we prove that $\vx_{k+1}$ is the exact bundle agent $1$ turns to at $t_k$. Suppose $\md' \subseteq \md$ is the set of all available bundle at $t_k$ under $R$ and $R'$. By Proposition~\ref{prop:acyclicCP}, $\vx_{k+1} \succ_1 \hat \vx$ for any $\hat \vx \neq \vx_{k+1} \in \md'$. As the same reason, there is $\vx \in \md'$ such that $\vx \succ'_1 \hat \vx$ for any $\hat \vx \neq \vx \in \md'$. Since $k < i_0$, $\vy$ is available at $t_k$. Hence, $\vx_{k+1} \in \ucs{}(\succ_1, \vy)$ and $\vx \in \ucs{}(\succ'_1, \vy)$. Since $P_{1,\vx_{k+1}}>0$, we have $\vx_{k+1} \in \ucs{}(\succ'_1, \vy)$ by the definition of \upivtr{}. If $\vx \neq \vx_{k+1}$, we have $\vx \succ'_1 \vx_{k+1}$ and $\vx_{k+1} \succ_1 \vx$ which is a contradiction to the definition of \upivtr{} where $\succ'_1 \mid_{\ucs{}(\succ'_1,\vy)} = \succ_1 \mid_{\ucs{}(\succ'_1,\vy)}$. Hence, $\vx = \vx_{k+1}$, agent $1$ turns to $\vx_{k+1}$ at $t_k$ under $R'$ and the consumption processes are same between $t_k$ and $t_{k+1}$ under $R$ and $R'$.

    By induction, the consumption processes of all agents are same before $t_{i_0}$ under $R$ and $R'$. Hence, the consumption processes are same until $\vy$ is exhausted. Hence, $P_{j,\vy} = P'_{j,\vy}$ for any $j \leq n$.

\end{proof}

\begin{figure*}[ht]
    \centering
    \includegraphics[width=0.9\textwidth]{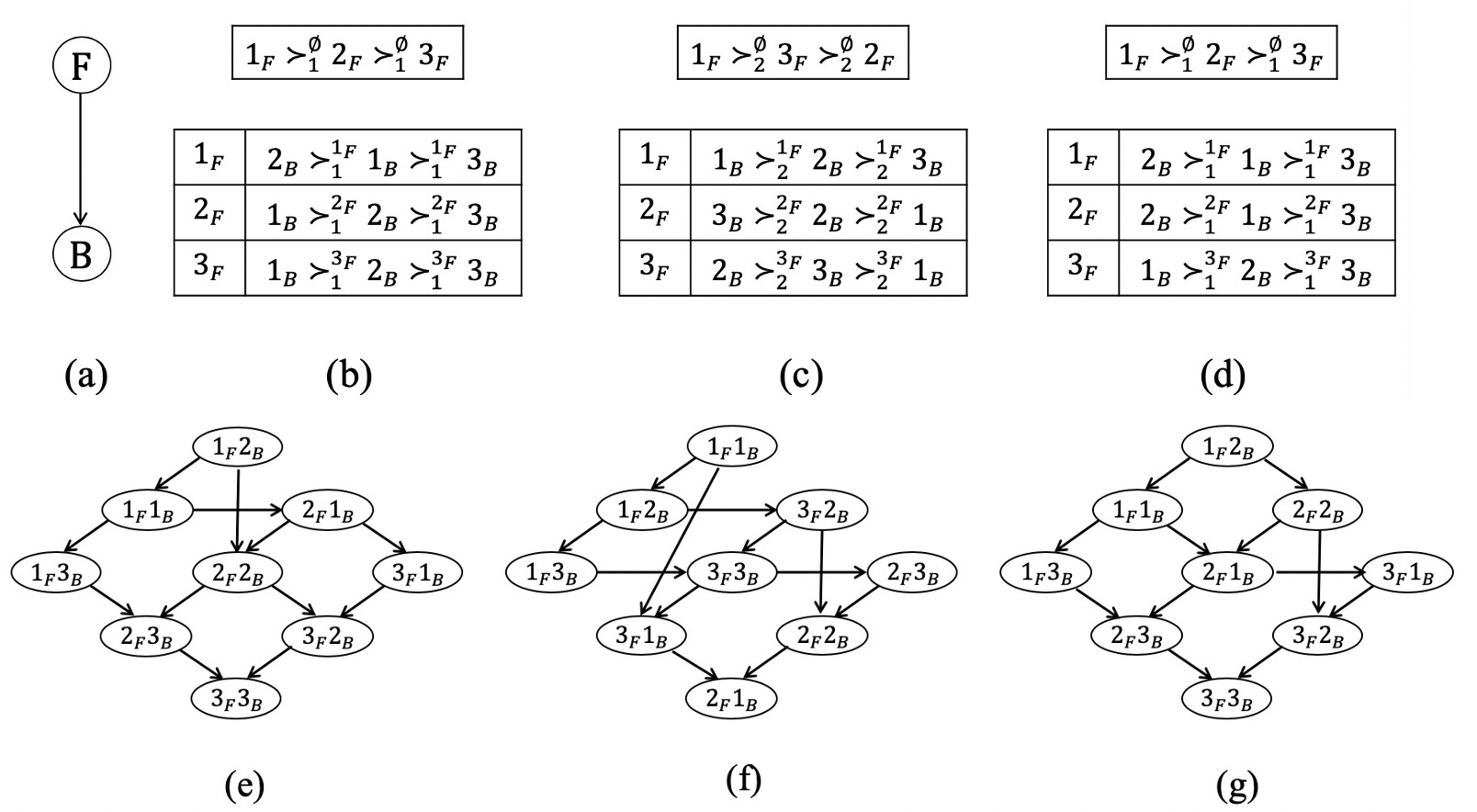}
    \caption{CP-net preferences with the shared dependency graph: (a) the shared dependency graph for agents; (b) CPTs for agent $1$; (c) CPTs for agent $2$ and $3$; (d) The misreported CPTs; (e) The corresponding preference graph of (b); (f) The corresponding preference graph of (c); (g) The corresponding preference graph of (d).}
    \label{fg:popg6}
\end{figure*}

\paragraph{Counterexample for Remark~\ref{rmk:prop:cpepswsp}.}
\rmkpropcpepswsp*
\begin{proof}
    Consider the \mtap{} where $N = \{1,2,3\}$, $M = \{1_F,2_F,3_F,1_B,2_B,3_B\}$, and let agents' preferences be the same as in Figure~\ref{fg:popg6}, where agents have a shared dependency graph shown in Figure~\ref{fg:popg6}(a). The CPTs of agent $1$ are shown as Figure~\ref{fg:popg6}(b) while agent $2$ and agent $3$ have the same CPTs as Figure~\ref{fg:popg6}(c) shows. Suppose that agent $1$ misreports her CPTs as Figure~\ref{fg:popg6}(d). Then, Figures~\ref{fg:popg6}(e),~\ref{fg:popg6}(f), and~\ref{fg:popg6}(g) are the preference graphs corresponding to the CPTs in Figures~\ref{fg:popg6}(b),~\ref{fg:popg6}(c), and~\ref{fg:popg6}(d) respectively with the shared dependency graph in Figure~\ref{fg:popg6}(a). When agent $1$ reports her CPTs as Figure~\ref{fg:popg6}(b), she gets $\frac{1}{3}1_F2_B$, $\frac{1}{3}2_F1_B$ and $\frac{1}{3}2_F3_B$ under \eps{}. But if agent $1$ misreports her CPTs as Figure~\ref{fg:popg6}(d), she gets $\frac{1}{3}1_F2_B$, $\frac{1}{3}2_F1_B$, $\frac{2}{9}2_F2_B$ and $\frac{1}{9}2_F3_B$ which is a better result for her under the notion of stochastic dominance. Hence, \eps{} is not \sdwspa{} even under this restriction.

\end{proof}

\paragraph{Proof of Proposition~\ref{prop:cpepswsp}.}
\propcpepswsp*
\begin{proof}
    Before the main body of the proof, we list three claims about independent CP-nets for convenience. The proofs of Claim~\ref{cl:sdincpnet} and~\ref{cl:orderinesp} are based on Claim~\ref{cl:strongtransitive}.

    \begin{restatable}{claim}{clstrongtransitive}\label{cl:strongtransitive}
        Given any independent CP-net $\succ$, for any $D_i \in T$, $o\in D_i$ and $\hat o \in D_i$, if there are some $\vy\in \Car{-\{D_i\}}, \hat\vy \in \Car{-\{D_i\}}$ such that $(o,\vy)\succ (\hat o,\hat\vy)$, then we have $o \in \ucs(\succ,\hat o)$.
    \end{restatable}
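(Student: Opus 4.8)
The plan is to unpack the definition of the partial order induced by an independent CP-net and to track the coordinate of type $D_i$ along a chain of generating ``flips''. Since $\succ$ is an independent CP-net, its dependency graph has no edges, so $\Pa{D_i}=\emptyset$ for every type and each table $CPT(D_i)$ is a single unconditional linear order over $D_i$, which I denote $\succ_i$. Under the natural reading of the notation in this special case, $\ucs(\succ,\hat o)=\{o'\in D_i: o'\succ_i\hat o\}\cup\{\hat o\}$, so the statement amounts to showing $o\succ_i\hat o$ or $o=\hat o$.

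First I would recall that $(o,\vy)\succ(\hat o,\hat\vy)$ means, by definition of the induced order, that there is a finite sequence $\vw_0=(o,\vy),\vw_1,\dots,\vw_k=(\hat o,\hat\vy)$ in which each consecutive pair $(\vw_t,\vw_{t+1})$ is a generating pair: the two bundles agree in all coordinates except the coordinate of some type $D_{j_t}$, and the $D_{j_t}$-coordinate of $\vw_t$ is strictly preferred to that of $\vw_{t+1}$ in $\succ_{j_t}$ (independence is what lets us use the unconditional table here, with no conditioning context to keep fixed). Next I would project this sequence onto the $D_i$-coordinate: let $a_t\in D_i$ be the $D_i$-coordinate of $\vw_t$, so $a_0=o$ and $a_k=\hat o$. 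For each $t$, if $j_t\neq i$ then $a_{t+1}=a_t$, and if $j_t=i$ then $a_t\succ_i a_{t+1}$. Hence along the whole sequence the value in type $D_i$ only ever stays the same or strictly decreases in $\succ_i$; collapsing runs of equal values and applying transitivity of the linear order $\succ_i$ yields $a_0\succ_i a_k$ whenever $a_0\neq a_k$. Therefore $o\succ_i\hat o$ or $o=\hat o$, i.e., $o\in\ucs(\succ,\hat o)$, which is the claim.

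I do not expect a real obstacle here. The two points that need care are (i) fixing, once and for all, the meaning of $\ucs(\succ,\cdot)$ evaluated at an item of type $D_i$ as above, and (ii) being explicit that the monotone-projection argument uses independence rather than mere acyclicity: in a genuinely conditional CP-net a worsening flip of the $D_i$-coordinate may be forced to coexist with changes in its parent coordinates, so the $D_i$-projection of a flip chain need no longer be monotone in a single fixed order, and the statement can fail. An alternative, essentially equivalent, route would be induction on the chain length $k$, but the one-shot projection argument is cleaner.
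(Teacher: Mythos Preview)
Your argument is correct and is essentially the same as the paper's: both take a chain of single-coordinate flips witnessing $(o,\vy)\succ(\hat o,\hat\vy)$, discard the steps that do not touch type $D_i$, and use independence to see that the remaining $D_i$-steps form a strictly decreasing sequence in the fixed unconditional table $\succ_i$. The only cosmetic difference is that the paper rephrases this as building, for an arbitrary fixed $\vz\in\Car{-\{D_i\}}$, a path from $(o,\vz)$ to $(\hat o,\vz)$ in the preference graph and concludes $(o,\vz)\succ(\hat o,\vz)$ for all $\vz$, whereas you project directly to the $D_i$-coordinate and conclude $o\succ_i\hat o$ or $o=\hat o$; under an independent CP-net these two conclusions are equivalent, and your reading of $\ucs(\succ,\hat o)$ matches how the paper uses it.
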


    Given any independent CP-net $\succ$, for any $D_i \in T$, $o \in D_i$, $\hat o\in D_i$, suppose there are some $\vy\in \Car{-\{D_i\}}, \hat\vy \in \Car{-\{D_i\}}$ such that $(o,\vy)\succ (\hat o,\hat\vy)$. Then there exists one path from $(o,\vy)$ to $(\hat o,\hat\vy)$ in the preference graph of $\succ$. Suppose the path is $(o_1,\vy_1) \succ (o_2,\vy_2) \succ \cdots \succ (o_t,\vy_t)$ such that $o_1 = o$, $\vy_1 = \vy$, $o_t = \hat o$, $\vy_t = \hat\vy$, and only one pair of items is different between $(o_k,\vy_k)$ and $(o_{k+1},\vy_{k+1})$ for any $k < t$. For any $\vz\in \Car{-\{D_i\}}$, we try to find a path from $(o,\vz)$ to $(\hat o,\vz)$ by referring the path from $(o,\vy)$ to $(\hat o,\hat\vy)$. Initialize $E$ as $\emptyset$. For any $k < t$, suppose the different items between $(o_k,\vy_k)$ and $(o_{k+1},\vy_{k+1})$ belong to $D_{i_k} \in T$. If $D_{i_k} = D_i$, we have $\vy_k=\vy_{k+1}$ which implies $(o_k,\vz) \succ (o_{k+1},\vz)$. Add the edge between $(o_k,\vz)$ and $(o_{k+1},\vz)$ to $E$. It is easy to see that the edges in $E$ together make up a path from $(o,\vz)$ to $(\hat o,\vz)$. Therefore $(o, \vz) \succ (\hat o,\vz)$ for any $\vz\in \Car{-\{D_i\}}$. Hence, $o \in \ucs(\succ,\hat o)$.

    \begin{restatable}{claim}{clsdincpnet}\label{cl:sdincpnet}
        Given any independent CP-profile $R$ and two assignments $Q$ and $P$, if $Q \succ^{sd}_j P$, for any $D_i \in T$, $\hat o \in D_i$, we have
        \[\sum_{o\in \ucs(\succ_j,\hat o)} Q_{j,o} \geq \sum_{o \in \ucs(\succ_j,\hat o)} P_{j,o}.\]
    \end{restatable}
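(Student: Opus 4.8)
The plan is to reduce the claimed per-type inequality to a single application of the hypothesis $Q\sd{j}P$ at one well-chosen witness bundle. The enabling observation is that, because $\succ_j$ is an \emph{independent} CP-net, the partial order it induces on $\md$ is the product of its per-type linear orders. Write $\succ_j^i$ for the (unconditional) linear order over $D_i$ in $CPT(D_i)$, and for a bundle $\vx$ write $x_i\in D_i$ for its type-$i$ component, so $\vx=(x_1,\dots,x_p)$. The first step is to record that
\[
\vx\succ_j\vy\iff\vx\neq\vy\text{ and for every }i\le p,\ \big(x_i=y_i\text{ or }x_i\succ_j^i y_i\big).
\]
This is essentially the content of Claim~\ref{cl:strongtransitive}: every generating pair of an independent CP-net is a single-coordinate improvement, a property preserved under transitive closure, so $\vx\succ_j\vy$ forces $x_i\succeq_j^i y_i$ for all $i$; and conversely any coordinate-wise weak domination between distinct bundles is witnessed by a $\succ_j$-chain obtained by flipping the differing coordinates one at a time. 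An immediate consequence is the factorization $\ucs(\succ_j,\vy)=\prod_{i\le p}\ucs(\succ_j^i,y_i)$.

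The second step is a short computation. Fix a type $D_i$ and an item $\hat o\in D_i$, and abbreviate $S:=\{o\in D_i:o=\hat o\text{ or }o\succ_j^i\hat o\}$ (the per-type upper-contour set written $\ucs(\succ_j,\hat o)$ in the claim); recall also that for an item $o$ and an assignment $X$, $X_{j,o}:=\sum_{\vx\ni o}X_{j,\vx}$ is the marginal share of $o$. Since each bundle contains exactly one item of type $i$,
\[
\sum_{o\in S}X_{j,o}=\sum_{o\in S}\ \sum_{\vx:\,x_i=o}X_{j,\vx}=\sum_{\vx\in B}X_{j,\vx},\qquad B:=\{\vx\in\md:x_i\in S\}=S\times\Car{-\{D_i\}}.
\]
Now let $\vy^\ast$ be the bundle whose type-$i$ component is $\hat o$ and whose type-$i'$ component, for each $i'\neq i$, is the $\succ_j^{i'}$-minimal item of $D_{i'}$. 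Then $\ucs(\succ_j^{i'},y^\ast_{i'})=D_{i'}$ for $i'\neq i$, while $\ucs(\succ_j^i,y^\ast_i)=\ucs(\succ_j^i,\hat o)=S$, so the factorization gives $\ucs(\succ_j,\vy^\ast)=S\times\Car{-\{D_i\}}=B$. Applying the hypothesis $Q\sd{j}P$ (Definition~\ref{dfn:sd}) at the bundle $\vy^\ast$ then yields $\sum_{\vx\in B}Q_{j,\vx}\ge\sum_{\vx\in B}P_{j,\vx}$, which is exactly $\sum_{o\in S}Q_{j,o}\ge\sum_{o\in S}P_{j,o}$, the claim.

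I do not expect a genuine obstacle here: once the product-order description is in hand the argument is only a few lines. The single point that truly relies on independence — and the one I would double-check — is the identity $\ucs(\succ_j,\vy^\ast)=B$, which rests on the upper-contour set of a bundle factoring as a product over types. This fails for general acyclic CP-nets, where the per-type marginals carry strictly less information than the bundle-level stochastic-dominance comparison, and indeed the conclusion of the proposition itself breaks down there (cf.\ Remark~\ref{rmk:prop:cpepswsp}). Accordingly I would first make sure Claim~\ref{cl:strongtransitive}, or an explicit product-order lemma, is established, and then invoke it exactly as above.
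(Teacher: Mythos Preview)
Your proposal is correct and follows essentially the same approach as the paper: both construct the identical witness bundle (type-$i$ component $\hat o$, every other component the $\succ_j$-minimal item of its type), establish that its upper contour set equals $S\times\Car{-\{D_i\}}$, and apply the stochastic-dominance hypothesis at that bundle. The only cosmetic difference is that you package the two containments as a single product-order factorization $\ucs(\succ_j,\vy)=\prod_i\ucs(\succ_j^i,y_i)$, whereas the paper verifies the $\le$ and $\ge$ directions separately using Claim~\ref{cl:strongtransitive}.
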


    Given any independent CP-profile $R$ and two assignments $Q$ and $P$, suppose $Q \succ^{sd}_j P$. For any $\vx \in \md$ and $D \in T$, we define function $f_D$ as $f_D(\vx) = o$ such that $o\in \vx$ and $o\in D$. For any $D_i \in T$ and $\hat o \in D_i$, there is $\vx_0 \in \md$ such that $f_{D_i}(\vx_0) = \hat o$ and for any $D \neq D_i \in T$, $o \in D$, we have $f_D(\vx_0) = o$ or $o \succ_j f_D(\vx_0)$. For any $o\in \ucs(\succ_j, \hat o)$, $\vy \in \Car{-\{D_i\}}$, we have $(o,\vy) \succ_j \vx_0$. Hence,
    \[
    \sum_{o \in \ucs(\succ_j,\hat o)} Q_{j,o} = \sum_{o\in \ucs(\succ_j,\hat o)}\sum_{\vy\in \Car{-\{D_i\}}}Q_{j,(o,\vy)} \leq \sum_{\vx\in \ucs(\succ_j,\vx_0)} Q_{j,\vx}.
    \]
    For any $(o, \vy) \in \md$ such that $(o, \vy) \succ_j \vx_0$, by Claim \ref{cl:strongtransitive}, we have $o\in \ucs(\succ_j,\hat o)$. Hence,
    \[
    \sum_{o \in \ucs(\succ_j,\hat o)} Q_{j,o} = \sum_{o\in \ucs(\succ_j,\hat o)}\sum_{\vy\in \Car{-\{D_i\}}}Q_{j,(o,\vy)} \geq \sum_{\vx\in \ucs(\succ_j,\vx_0)} Q_{j,\vx}.
    \]
    Therefore,
    $\sum_{o \in \ucs(\succ_j,\hat o)} Q_{j,o} = \sum_{\vx\in \ucs(\succ_j,\vx_0)} Q_{j,\vx}$.
    In like manner, we have
    $\sum_{o \in \ucs(\succ_j,\hat o)} P_{j,o} = \sum_{\vx\in \ucs(\succ_j,\vx_0)} P_{j,\vx}$.
    Since $Q \succ^{sd}_j P$, $\sum_{\vx\in \ucs(\succ_j,\vx_0)} Q_{j,\vx} \geq \sum_{\vx\in \ucs(\succ_j,\vx_0)} P_{j,\vx}$. Hence we have $\sum_{o\in \ucs(\succ_j,\hat o)} Q_{j,o} \geq \sum_{o\in \ucs(\succ_j,\hat o)} P_{j,o}$.

    \begin{restatable}{claim}{clorderinesp}\label{cl:orderinesp}
    Under \eps{}, given any independent CP-profile $R$, for any $D_i\in T$ and $j \leq n$, there exists $o_1\succ_j o_2 \succ_j \cdots \succ_j o_l$ and $0= t_0 < t_1 < \cdots < t_l = 1$ such that $o_k \in D_i$, $t_k$ is the time when $o_k$ is exhausted and agent $j$ consumes $o_k$ in the time interval $[t_{k-1}, t_k]$ for $k \leq l$.
    \end{restatable}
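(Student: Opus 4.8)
The plan is to track, for the fixed agent $j$ and type $D_i$, exactly which item of type $i$ agent $j$ is consuming at each instant of the run of \eps{}, and to show that this item changes only at the moment it becomes exhausted, each change moving to a strictly less preferred item of type $i$. Measure time by the running total $t$ of the per-round durations $\prog{}$ in Algorithm~\ref{alg:eps}. Since $R$ is an \emph{independent} CP-profile, its dependency graph has no edges, so by Proposition~\ref{prop:acyclicCP} the best available bundle is obtained by independently choosing, in each type, that type's best available item according to agent $j$'s conditional preference table; combined with the remark after Proposition~\ref{prop:acyclicCP} that $\fto(\succ'_j,M')=\bav(\succ_j,M')$, the type-$i$ component $b(t)$ of the bundle agent $j$ consumes at time $t$ is always her $\succ_j$-most preferred currently available item of type $i$ (here ``$o\succ_j o'$'' abbreviates $o\succ^{\emptyset} o'$ in her CP-table, which by Claim~\ref{cl:strongtransitive} is exactly the relevant comparison on $\md$). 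We also use the standard fact (as for PS) that \eps{} is well defined and terminates at time $1$, with no type fully exhausted before then, so agent $j$ is consuming a full bundle at every instant of $[0,1)$ and $\sum_{\vx}P_{j,\vx}=1$.

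Next I would invoke two monotonicity facts. First, the set of available items of type $i$ is non-increasing in $t$, because \eps{} only ever removes exhausted items; hence $b(t)$ can never jump to a $\succ_j$-better item. Second, while $b(t)=o$ for some item $o$ of type $i$, agent $j$ consumes $o$ at the uniform rate $1$, so the remaining supply of $o$ strictly decreases, and $b$ changes away from $o$ only once $o$ stops being available, i.e.\ precisely when $o$ is exhausted. Putting these together: set $o_1:=b(0)$, agent $j$'s overall $\succ_j$-favorite item of type $i$ (available at time $0$); agent $j$ consumes $o_1$ throughout $[0,t_1]$, where $t_1$ is the exhaustion time of $o_1$; at $t_1$, $b$ jumps to $o_2:=b(t_1)$ with $o_1\succ_j o_2$ by the first fact; iterating yields a chain $o_1\succ_j o_2\succ_j\cdots\succ_j o_l$ of items of type $i$ such that agent $j$ consumes $o_k$ throughout $[t_{k-1},t_k]$ and $t_k$ is the exhaustion time of $o_k$, with $t_0=0$. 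The chain is finite, $l\le n=|D_i|$, since each step removes an item of type $i$.

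It remains to verify the strict inequalities and $t_l=1$. Strictness: when agent $j$ starts eating $o_k$ at $t_{k-1}$, $o_k$ is available, hence has positive remaining supply, so a positive amount of time elapses before it is exhausted, giving $t_{k-1}<t_k$. And $t_l=1$: by the usual accounting for \eps{} (as used, e.g., in the proof of Theorem~\ref{thm:eps}) the run lasts exactly $1$ unit of time; since agent $j$ is eating a type-$i$ item at every instant of $[0,1)$, her last such item $o_l$ is available arbitrarily close to time $1$, which forces its exhaustion time $t_l$ to equal $1$. The only step requiring any real care is the second monotonicity fact — that $b$ switches exactly at exhaustion times and not sooner — but this is immediate once one observes that ``available'' is synonymous with ``positive remaining supply'' and that the eating rate is constant, so I do not anticipate a genuine obstacle.
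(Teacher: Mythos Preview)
Your proposal is correct and follows essentially the same approach as the paper's proof: both argue that, for an independent CP-net, the type-$i$ component of agent $j$'s current bundle is always her $\succ_j$-best available item of that type, and hence changes only at the exhaustion time of that item, to a strictly worse one. The paper phrases the key step as a contradiction via Claim~\ref{cl:strongtransitive} (if agent $j$ switched to a bundle $\hat\vy$ not containing $o_1$ while $o_1$ is still available, then $\hat\vy|o_1\succ_j\hat\vy$), whereas you extract the same fact more directly from the componentwise construction in the proof of Proposition~\ref{prop:acyclicCP}; this is a cosmetic difference rather than a different route.
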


    Given $D_i \in T$ and an independent CP-net $\succ_j$ which is the preference of agent $j \in N$. By Proposition~\ref{prop:acyclicCP}, at the beginning of \eps{}, agent $j$ starts to consume her unique favorite bundle $\vy_1 \in \md$. Let $o_1 \in D_i$ such that $o_1 \in \vy_1$. Suppose agent $j$ turns to consume $\hat \vy$ such that $o_1 \notin \hat \vy$ before $o_1$ is exhausted. Let $\hat \vy | o_1$ be the bundle which changes the item of $D_i$ in $\hat \vy$ to $o_1$. Then $\hat \vy | o_1$ is available when agent $j$ turns to $\hat \vy$. By $\vy_1 \succ_j \hat \vy$ and Claim~\ref{cl:strongtransitive}, we have $\hat \vy | o_1 \succ_j \hat \vy$, a contradiction. Hence, agent $j$ consumes $o_1$ in $[0,t_1)$ where $o_1$ is exhausted at $t_1$. Suppose agent $j$ turns to consume $\vy_2$ at $t_1$. Let $o_2 \in D_i$ such that $o_2 \in \vy_2$. In the same manner, we have agent $j$ consumes $o_2$ in $[t_1,t_2)$ where $o_2$ is exhausted at $t_2$. Continue the process. Finally, we find orders, $o_1\succ_j o_2 \succ_j \cdots \succ_j o_l$ and $0= t_0 < t_1 < \cdots < t_l = 1$, such that $o_k \in D_i$, $t_k$ is the time when $o_k$ is exhausted and agent $j$ consumes $o_k$ in the time interval $[t_{k-1}, t_k]$ for $k \leq l$.

    After the proofs of the three claims, we begin the main body of the proof. W.l.o.g we prove that it is not beneficial for agent $1$ to misreport. Let $R$ be an arbitrary independent CP-profile. Suppose for the sake of contradiction, that $R'=(\succ'_1,\succ_{-1})$ is a profile where agent $1$ misreports her preference as any partial preference relation $\succ'_1$, such that $\text{\eps{}}(R')\succ_1^{sd} \text{\eps{}}(R)$. Let $P = \text{\eps{}}(R)$ and $P' = \text{\eps{}}(R')$. We prove that $P' \succ_1^{sd} P$ implies that $P(1) = P'(1)$.

    For the sake of convenience, we define the following quantities to track the execution of $\text{\eps{}}(R)$. At any time $t$ during the execution of $\text{\eps{}}(R)$, let $S(j,t,D_i)$ be the item $o\in D_i$ that agent $j$ is consuming at $t$, and for any item $o\in M$, let $N(o,t)$ be the set of agents who are consuming an item $o$ at $t$ and $n(o,t) = |N(o,t)|$. For any $o \in M$, let $\tau(o)$ be the time when $o$ is exhausted during the execution of $\text{\eps{}}(R)$. Then, we have $\tau(o) = sup\{t:n(o,t) \geq 1\}$. For the execution of $\text{\eps{}}(R')$, we define $S'(j,t,D_i)$, $N'(a,t)$, $n'(a,t)$ and $\tau'(o)$ in a similar way.

    Our proof involves showing that $S(j,t,D_i) = S'(j,t,D_i)$ for any $j\leq n$, $t\in [1,0)$ and $D_i \in T$, which is a stronger condition than $P_1 = P_1'$.

    For any $D_i \in T$, let $O_{i}=\{o_1,\cdots,o_l\}$ be the set of items that agent $1$ gets a positive fractional share of in $\text{\eps{}}(R)$, i.e. for each item $o\in O_i$, $P_{1,o}>0$, and for each $\hat o \in D_i\setminus O_i$, $P_{1.\hat o}=0$. W.l.o.g. let $o_1\succ_1\cdots\succ_1 o_l$. Then, by Claim~\ref{cl:orderinesp}, there are time steps $0 = t_0 < t_1 < \cdots < t_l = 1$, such that for each $k\le l$, agent $1$ only consumes $o_k$ from $D_i$ during the whole interval $[t_{k-1}, t_k)$.

    Consider any type $i\le p$. We prove by induction that $S(j,t,D_i) = S'(j,t,D_i)$ for any $j\leq n$, $t\in [1,0)$. Since the proof for the base case where $S(j,t,D_i) = S'(j,t,D_i)$ for the interval $[t_0,t_1)$ is similar with the inductive steps, we only prove the induction hypothesis that for any $k\le l$, $S(j,t,D_i) = S'(j,t,D_i) = o_k$ for the interval $[t_{k-1}, t_k)$, for every $j\le n$.


    Assume that the induction hypothesis is true for each $m<k$. We claim that there exists no $o \in D_i$ such that $o \succ_1 o_k$, $o \notin O_i$, and $P'_{1,o} > 0$. Otherwise, since $S(j,t,D_i) = S'(j,t,D_i)$ for any $j \leq n$, $t \in [0, t_{k-1})$, $o$ is available at $t_{k-1}$, and $o \succ_1 o_k$. However, agent $1$ starts to consume $o_k$ at $t_{k-1}$, a contradiction to our assumption that $P=\text{\eps{}}(R)$. Therefore, by $P' \sd{} P$ and Claim~\ref{cl:sdincpnet}, we have $\sum_{m = 1}^k P_{1,o_m}' = \sum_{o \in \ucs(\succ_1, o_k)} P_{1,o}' \geq \sum_{o \in \ucs(\succ_1, o_k)} P_{1,o} = \sum_{m = 1}^k P_{1,o_m}$. By the induction hypothesis, we have $P_{1, o_m} = P_{1, o_m}'$ for any $m<k$. Hence, $P_{1,o_k}' \geq P_{1,o_k}$. Agent $1$ is consuming $o_k$ during the whole interval $[t_{k-1}, \tau(o_k))$ under $R$ but during the sub interval of $[t_{k-1}, \tau'(o_k))$ under $R'$. Therefore, $\tau(o_k) \leq \tau'(o_k)$.

    We claim that for all $t \in [0,\tau(o_k))$ and all agents $j$, $j\neq 1$, we have:
    \begin{equation}\label{equ:claim1}
    j\in N(o_k,t) \Rightarrow j \in N'(o_k,t).
    \end{equation}
    Suppose for the sake of contradiction that there is an agent $j$, $j\neq 1$, and a time $t$, $0 \leq t < \tau(o_k)$ such that $j\in N(o_k,t)$ and $j\in N'(\hat o_1,t)$, for some item $\hat o_1 \neq o_k \in D_i$. As $t<\tau(o_k)\leq \tau'(o_k)$, $o_k$ is available at time $t$ under $R'$. Then $\hat o_1 \succ_j o_k$ because the preferences of agent $j$ are same under $R$ and $R'$. So $\hat o_1$ is unavailable at $t$ under $R$. This says $\tau(\hat o_1) \leq t < \tau'(\hat o_1)$. Let $A = \{o \in D_i: o \neq o_k \wedge \tau(o) < \tau'(o)\}$, we have $\hat o_1\in A$ and $A \neq \emptyset$. We pick $\hat o_2$ in $A$ such that $\tau(\hat o_2) = \min \{\tau(o): o\in A\}$. Note that $\tau(\hat o_2) < \tau(o_k)$ because $\tau(\hat o_1) < \tau(o_k)$ and $\hat o_1 \in A$. We claim that at some $t<\tau(\hat o_2)$, there is an agent $\hat j$ such that $\hat j \in N(\hat o_2,t)$ and $\hat j \notin N'(\hat o_2,t)$. Otherwise we have $N(\hat o_2,t) \subseteq N'(\hat o_2,t)$ for $t \in [0, \tau(\hat o_2))$. And there is some sub interval of $[\tau(\hat o_2), \tau'(\hat o_2))$ such that $n'(\hat o_2,t) \geq 1$ for any $t$ in the sub interval since $\tau'(\hat o_2) = sup\{t:n'(\hat o_2,t) \geq 1\}$. Hence
    \begin{equation}\label{equ:intless}
    \int_0^{\tau(\hat o_2)} n(\hat o_2,t) dt < \int_0^{\tau'(\hat o_2)} n'(\hat o_2,t) dt,
    \end{equation}
    a contradiction with
    \begin{equation}\label{equ:intequ}
    \int_0^{\tau(\hat o_2)} n(\hat o_2,t) dt = 1 = \int_0^{\tau'(\hat o_2)} n'(\hat o_2,t) dt.
    \end{equation}
    Note that $\hat j \neq 1$ because agent $1$ is consuming $o_k$ during $[t_{k-1}, \tau(o_k))$ under $R$. Suppose $\hat j \in N(\hat o_2,t)$ and $\hat j \in N'(\hat o_3,t)$, for some $t < \tau(\hat o_2)$ and $\hat o_3\in D_i$. As $\hat o_2$ is available at $t$ under $R'$ (because $t<\tau(\hat o_2)<\tau'(\hat o_2)$), we have $\hat o_3 \succ_{\hat j} \hat o_2$. Hence $\hat o_3$ is unavailable at $t$ under $R$ ($\hat j \neq 1$ and her preferences are same under $R$ and $R'$). Therefore, $\tau(\hat o_3) < t < \tau'(\hat o_3)$ and $\tau(\hat o_3) < t < \tau(\hat o_2)$, a contradiction with $\tau(\hat o_2) = \min \{\tau(o): o\in A\}$.

    Thus we have proved claim (\ref{equ:claim1}) is true. Suppose that agent $1$ consumes $o$ such that $o \neq o_k$ in the sub interval of $[t_{k-1},t_k]$ under $R'$. Then $o_k$ is available after $t_k$. If $t_k = 1$, we have a contradiction. If $t_k < 1$, we have $P_{1,o_k} < 1$ and $o_k$ is allocated to at least two agents. Since agents continue consuming an item until the item is exhausted once they start to consume the item in \eps{} under independent CP-nets by Claim~\ref{cl:orderinesp}. There is at least one agent except agent $1$ consumes $o_k$ just before $t_k$ under $R$ and $R'$ and the agent continues consuming $o_k$ after $t_k$ under $R'$. In that case, we have $P'_{1,o_k} = 1 - \sum_{j=2}^n P'_{j,o_k} < 1 - \sum_{j=2}^n P_{j,o_k} = P_{1,o_k}$, a contradiction with $P'_{1,o_k} \geq P_{1,o_k}$. Therefore, we have shown $N(o_k,t) \subseteq N'(o_k,t)$ for $t \in [0, \tau(o_k))$. If $\tau(o_k) < \tau'(o_k)$, we would have a contradiction like (\ref{equ:intless}) and (\ref{equ:intequ}). Hence $\tau(o_k) = \tau'(o_k)= t_k$. Since $P_{1,o_k}' \geq P_{1,o_k}$ and $P_{1,o_k} = \tau(o_k) - t_{k-1}$, agent $1$ is consuming $o_k$ during the whole interval $[t_{k-1}, \tau'(o_k))$ under $R'$ and $P_{1,o_k}' = P_{1,o_k}$. Since  the preferences of all agents except agent $1$ keep same under $R$ and $R'$,  $S(j,t,D_i) = S'(j,t,D_i)$ for any $j\in N$ and $t\in [t_{k-1},t_k)$.

    By induction, we have $S(j,t,D_i) = S'(j,t,D_i)$ for any $j \leq N$, $t\in [0,1)$ and $D_i \in T$. Hence, $P(1) = P'(1)$.

\end{proof}



\end{document}